\documentclass[10pt]{article}
\listfiles

 \usepackage{hyperref}
 \let\savedaddcontentsline\addcontentsline
\usepackage[preprint]{rlj}
 \let\addcontentsline\savedaddcontentsline



\usepackage{amssymb}
\usepackage{mathtools}
\usepackage{mathrsfs}
\usepackage{graphicx}
\usepackage{url}
\usepackage{booktabs}
\usepackage{amsmath}
\usepackage{amsfonts}
\usepackage{amsthm}
\usepackage[capitalize,noabbrev]{cleveref}
\usepackage{tikz}
\usepackage{xurl}
\usepackage{multirow}
\usepackage{titletoc}
\usepackage{thm-restate}
\usepackage{longtable}


\crefname{equation}{}{} 

\newtheorem{definition}{Definition}
\newtheorem{theorem}{Theorem}
\newtheorem{apxdefinition}[definition]{Definition}
\newtheorem{apxtheorem}[theorem]{Theorem}
\newtheorem{apxproposition}{Proposition}
\newtheorem{apxcorollary}{Corollary}

\renewcommand\paragraph[1]{\textbf{#1}\ }


\newcommand\heatmapwidth{0.175\linewidth}
\newcommand\halfheatmapwidth{0.08\linewidth}

\setlength\fboxsep{0pt}

\urlstyle{rm}

\newcommand\Reals{\mathbb{R}}
\newcommand\Expect[2][]{\mathbb{E}_{#1}\!\left[\,#2\,\right]}
\newcommand{\argmax}{\operatornamewithlimits{\textnormal{arg\,max}}}
\newcommand{\argmin}{\operatornamewithlimits{\textnormal{arg\,min}}}
\newcommand{\xargmax}[1][\eps]{\operatornamewithlimits{\textnormal{arg-\ensuremath{#1}-max}}}
\newcommand{\xargmin}[1][\eps]{\operatornamewithlimits{\textnormal{arg-\ensuremath{#1}-min}}}
\newcommand{\supp}{\operatorname{\textnormal{supp}}}
\renewcommand{\max}{\operatornamewithlimits{\textnormal{max}}}
\renewcommand{\min}{\operatornamewithlimits{\textnormal{min}}}

\newcommand\Distr[1]{\Delta(#1)}
\newcommand\eps{\varepsilon}
\newcommand{\blank}{{-}}
\newcommand\States{\mathcal{S}}
\newcommand\Actions{\mathcal{A}}
\newcommand\InitialStateDistribution{\mathcal{I}}
\newcommand\TransitionMap{\mathcal{T}}
\newcommand\level\theta
\newcommand\LevelSpace{\Theta}
\newcommand\TupleUMDP{\langle\LevelSpace,\allowbreak\Actions,\allowbreak\States,\allowbreak\InitialStateDistribution,\allowbreak\TransitionMap\rangle}
\newcommand\LevelDistrs{\Distr{\LevelSpace}}
\newcommand\distr{\Lambda}
\newcommand\distrTrain[1][]{\distr^{\textnormal{Train}}_{#1}}
\newcommand\distrTest[1][]{\distr^{\textnormal{Deploy}}_{#1}}
\newcommand\TupleShift{\langle\alpha,\allowbreak\beta,\allowbreak{}C,\allowbreak\distrTrain,\allowbreak\distrTest\rangle}

\newcommand\distrTrainDistg{\distrTrain[\textnormal{Distg.}]}
\newcommand\distrTrainNonDistg{\distrTrain[\neg\textnormal{Distg.}]}

\newcommand\distrTestDistg{\distrTest[\textnormal{Distg.}]}
\newcommand\distrTestNonDistg{\distrTest[\neg\textnormal{Distg.}]}
\newcommand\distrNonDistg{\distr_{\neg\textnormal{Distg.}}}
\newcommand\distrDistg{\distr_{\textnormal{Distg.}}}
\newcommand\RewardFunction{R}
\newcommand\ProxyRewardFunction{\tilde{\RewardFunction}}
\newcommand\Policy{\pi}

\newcommand\AllPolicies{\Pi}
\newcommand\SomePolicies{\Phi}
\newcommand\DiscountRate{\gamma}
\newcommand\ReturnSymbol{{V}}
\newcommand\Return[3]{\ReturnSymbol^{#1}(#3; #2)}
\newcommand\RegretSymbol{{G}}
\newcommand\Regret[4][]{\RegretSymbol_{#1}^{#2}(#4; #3)}
\newcommand\MMSymbol{{M}}

\newcommand\MEVPolicy{\Policy^{\text{\sc MEV}}}
\newcommand\MMERPolicy{\Policy^{\text{\sc MMER}}}
\newcommand\MEVOptimalPolicies[3][\eps]{\AllPolicies^{\text{\sc MEV}}_{#1}(#2, #3)}
\newcommand\MMEROptimalPolicies[2][\eps]{\AllPolicies^{\text{\sc MMER}}_{#1}(#2)}
\newcommand\MMEROnePolicies[1][\eps]{\AllPolicies^{\textnormal{MMER(1)}}_{#1}}
\newcommand\MMERTwoPolicies[1][\eps,\delta]{\AllPolicies^{\textnormal{MMER(2)}}_{#1}}
\newcommand\MMERThreePolicies[1][\eps,\lambda]{\AllPolicies^{\textnormal{MMER(3)}}_{#1}}
\newcommand\PartialMMERPolicies[2][\SomePolicies,\eps]{\AllPolicies^{\text{\sc MMER}}_{#1}(#2)}

\newcommand\OptimalPolicy{\Policy^{\star}}
\newcommand\OptimalPolicies{\AllPolicies^{\star}}
\newcommand\ProxyOptimalPolicies{\tilde{\AllPolicies}^{\star}}
\newcommand\MMEVOptimalPolicies[2][\eps,\delta]{\AllPolicies^{\text{\sc MMEV}}_{#1}(#2)}
\newcommand\MMEVPolicy{\Policy^{\text{\sc MMEV}}}
\newcommand\MMEVDistr{\distr^{\text{\sc MMEV}}}

\newcommand\MaxMCActor[3]{\hat\RegretSymbol^{#1}_{\textnormal{max-latest}}(#3; #2)}
\newcommand\OracleActor[3]{\hat\RegretSymbol^{#1}_{\textnormal{oracle-latest}}(#3; #2)}
\newcommand\MMActor[3]{\hat\MMSymbol^{#1}_{\textnormal{latest}}(#3; #2)}

\newcommand\RobustPLR{PLR$^\bot$}

\newcommand\ACCELc{ACCEL$^{\textnormal{constant}}$}
\newcommand\ACCELid{ACCEL$^{\textnormal{identity}}$}
\newcommand\ACCELbin{ACCEL$^{\textnormal{binomial}}$}
\newcommand\ACCELunr{ACCEL$^{\textnormal{unrestricted}}$}
\newcommand\env[1]{{\textsc{#1}}}


\title{%
    \centering
    Mitigating Goal Misgeneralization \\
    via Minimax Regret
}

\author{%
    Karim Abdel Sadek\textsuperscript{$=$,1,2},
    Matthew Farrugia-Roberts\textsuperscript{$=$,3},
    \\
    Usman Anwar\textsuperscript{4},
    Hannah Erlebach\textsuperscript{5},
    Christian Schroeder de Witt\textsuperscript{3},
    \\
    David Krueger\textsuperscript{6},
    Michael Dennis\textsuperscript{7}
}

\emails{\textnormal{Correspondence to\ }karimabdel@berkeley.edu\textnormal{,} matthew@far.in.net}

\affiliations{
    $^{1}$\textbf{University of California, Berkeley}\\
    $^{2}$\textbf{University of Amsterdam}\\
    $^{3}$\textbf{University of Oxford}\\
    $^{4}$\textbf{University of Cambridge}\\
    $^{5}$\textbf{University College London}\\
    $^{6}$\textbf{Mila, University of Montreal}\\
    $^{7}$\textbf{Google DeepMind}\\
    \par%
    $^{=}$Equal contribution
}

\begin{document}

\maketitle

\fancypagestyle{firststyle}{
   \fancyhf{}
   \fancyhead[L]{Published as a conference paper at RLC 2025}
   \fancyfoot[C]{\thepage}
}
\thispagestyle{firststyle}

\begin{abstract}
    Safe generalization in reinforcement learning requires not only that a learned policy acts capably in new situations, but also that it uses its capabilities towards the pursuit of the designer's \emph{intended goal.}
    The latter requirement may fail when a \emph{proxy goal} incentivizes similar behavior to the intended goal within the training environment, but not in novel deployment environments.
    This creates the risk that policies will behave as if in pursuit of the proxy goal, rather than the intended goal, in deployment---a phenomenon known as \emph{goal misgeneralization.}
    In this paper, we formalize this problem setting in order to theoretically study the possibility of goal misgeneralization under different training objectives.
    We show that goal misgeneralization is possible under approximate optimization of the maximum expected value (MEV) objective, but not the minimax expected regret (MMER) objective.
    We then empirically show that the standard MEV-based training method of domain randomization exhibits goal misgeneralization in procedurally-generated grid-world environments, whereas current regret-based unsupervised environment design (UED) methods are more robust to goal misgeneralization (though they don't find MMER policies in all cases).
    Our findings suggest that minimax expected regret is a promising approach to mitigating goal misgeneralization.
\end{abstract}

\section{Introduction}
\label{sec:introduction}
As reinforcement learning (RL) is increasingly applied in complex, open-ended, real-world environments, it is becoming infeasible for training to comprehensively cover all situations an agent will face in deployment. We therefore need training methods to produce policies that \emph{generalize}, behaving as intended when faced with a novel scenario \citep{Kirk+2023}.

A particular challenge arises when incomplete coverage of the environment space during training creates a \emph{proxy goal.}
A proxy goal is a reward function that, compared to the true goal, induces similar optimal behavior in most situations encountered during training, but induces radically different behavior in some novel situations.
Proxy goals create the risk of \emph{goal misgeneralization}---learning a policy that retains its capabilities in novel situations, but behaves as if to pursue the proxy goal instead of the true goal when the two diverge \citep{Langosco+2022,Shah+2022}.

Goal misgeneralization can arise when such ``proxy-distinguishing'' situations---where the proxy goal diverges from the true goal---are rare within training, making policies that pursue the wrong goal approximately optimal in terms of the standard RL objective of maximum expected value (MEV).
This motivates the need for training methods that can somehow identify proxy-distinguishing situations within a complex environment, and ensure they are adequately covered in the training distribution.

We observe that favoring the proxy goal in proxy-distinguishing situations leads to high \emph{expected regret,} defined as the shortfall of expected return compared to that obtained by an optimal policy.
An environment selected to maximize a policy's expected regret will naturally include proxy-distinguishing situations as long as the policy ignores the true goal.
Therefore, we propose mitigating goal misgeneralization via the \emph{minimax expected regret (MMER)} objective \citep{Savage1951}.

\begin{figure}
    \centering
    \includegraphics{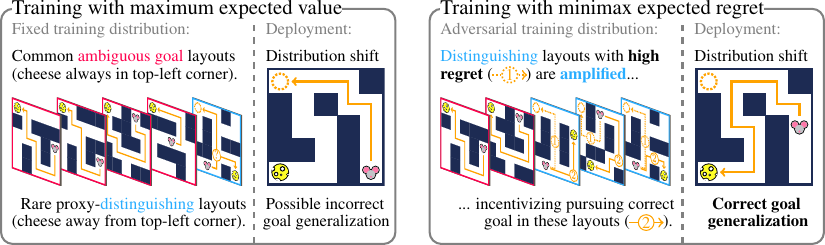}
    \caption{\label{fig:opening}%
        \textbf{Maximum expected value and minimax expected regret vs.\ goal misgeneralization.}
        A mouse searches a maze for cheese that is usually located in the top-left corner. There is a proxy goal (``go to the corner'') that mostly incentivizes the same optimal behavior as the true goal (``go to the cheese'').
        \textit{(Left):}
        Standard RL methods that \emph{approximately} maximize expected value/return could find a policy that behaves as if pursuing the proxy goal rather than the true goal, since layouts where this policy fails are rare in training. This would lead to incorrect generalization.
        \textit{(Right):}
        If a policy ignores the cheese, a regret-maximizing adversary can move the cheese away from the corner until the agent internalizes the correct goal, leading to correct generalization.
    }
\end{figure}

In this paper, we conduct a theoretical and empirical investigation of the possibility of goal misgeneralization under the MEV and MMER objectives.
An outline of our contributions is as follows.
\begin{enumerate}
    \item 
        In \cref{sec:setting}, we introduce a problem setting called a \emph{proxy-distinguishing distribution shift}, formalizing a class of situations in which goal misgeneralization can arise.

    \item
        In \cref{sec:theory}, we show formally that
            (1)~approximately optimizing MEV is susceptible to goal misgeneralization if proxy-distinguishing situations are sufficiently rare (\cref{thm:mev-susceptibility}),
        and (2)~approximately optimizing MMER is provably robust to goal misgeneralization (\cref{thm:mmer-robustness}).
    
    \item 
        In \cref{sec:methods,sec:results}, we empirically study the robustness to goal misgeneralization of a standard MEV-based training method
            \citetext{domain randomization; \citealp{Tobin+2017}},
        and recent MMER-based training methods
            \citetext{regret-based unsupervised environment design; \citealp{Dennis+2020,Jiang+2022,ParkerHolder+2023}}
        under a proxy-distinguishing distribution shift.
\end{enumerate}
Our theoretical results show that, in the limit of idealized training methods, MMER-based training is guaranteed to be robust against goal misgeneralization, whereas MEV-based training is not.
Our empirical results show that current MMER-based training methods are indeed more robust to goal misgeneralization than MEV-based training is, and, while they sometimes still exhibit goal misgeneralization, this happens less for more advanced methods.
Together, these results establish MMER-based training as a promising approach to preventing goal misgeneralization.

\section{Related work}

\textbf{Goal misgeneralization.}
Ensuring learned systems generalize as intended in novel situations is a perennial challenge for deep learning and deep RL \citep{Kirk+2023}.
\citet{Christiano2018} distinguishes \emph{benign} generalization failures, where an agent fails to behave capably in a novel situation, from \emph{malign} generalization failures, where the agent demonstrates capable behavior towards the pursuit of an unintended objective.
\citet{Langosco+2022} and \citet{Shah+2022} demonstrate behavioral examples of malign generalization failures in deep RL, introducing the term \emph{goal misgeneralization.}
Goal misgeneralization is similar to \emph{shortcut learning} in supervised learning \citep{Geirhos+2020}, but emphasizes shortcut \emph{reward functions,} rather than shortcut policies \citep[cf.,][]{suau2024badhabits}.

Recent work proposes complementary approaches to mitigating the risk of goal misgeneralization.
\citet{starace2023addressing} investigates influencing the agent's inductive bias in favor of correct goal generalization using goal-conditioned RL with natural language task descriptions.
\citet{trinh2024getting} studies methods for detecting when the agent is in an unfamiliar situation and choosing to ask an expert (at a cost) to clarify the optimal action.

\textbf{Training in complex environments.}
The standard technique for RL in complex environments is to train on situations sampled from a fixed distribution, a technique known as domain randomization \citep[e.g.,][]{Tobin+2017,peng2018sim}. Maximizing expected return over such situations corresponds to pursuing the MEV objective with respect to the fixed training distribution.

\citet{Dennis+2020} proposed \emph{regret-based unsupervised environment design (UED)},
an RL training technique featuring an adversarial environment designer that continually adapts the training distribution aiming to maximize the agent's expected regret. Maximizing expected return on this adversarial distribution corresponds to the MMER objective \citep{Dennis+2020}.
UED has been promoted as a technique for
    (1)~improving sample efficiency by creating an emergent curriculum;
and (2)~improving capability generalization via adversarial robustness
    \citep{Dennis+2020,Jiang+2022,ParkerHolder+2023}.
We show that UED also helps to mitigate goal misgeneralization.

Alternative adversarial approaches, such as maximin expected value \citep{Dennis+2020,Wang+2023}, maximizing diversity \citep{OpenAI+2019}, or maximizing learnability \citep{Rutherford+2024}, have not been studied in the context of goal misgeneralization.
These approaches may also mitigate goal misgeneralization to the extent that they promote training in proxy-distinguishing situations, incentivizing the agent to internalize the true goal. We show that directly optimizing the training environment for regret is sufficient.
\cref{apx:maximin} shows that minimax expected value can exhibit goal misgeneralization when some situations have low maximum expected return.

\section{Preliminaries}
\label{sec:preliminaries}

A (reward-free) underspecified Markov decision process (UMDP) is a tuple
    $M = \TupleUMDP$
where
    $\LevelSpace$ is a space of free parameters (also called \textbf{levels}),
    $\Actions$ is the agent's action space,
    $\States$ is a state space,
    $\InitialStateDistribution : \LevelSpace \to \Distr{\States}$ is an initial state distribution, and
    $\TransitionMap : \LevelSpace \times \States \times \Actions \to \Distr{\States}$ is a conditional transition distribution.
For simplicity, we assume $\LevelSpace$, $\States$, $\Actions$ are finite.
Given a level $\level \in \LevelSpace$ we have a fully-specified (reward-free) MDP
\(\langle
    \Actions,
    \States,
    \InitialStateDistribution(\level),
    \TransitionMap(\level, \blank, \blank)
\rangle\).
We aggregate these MDPs into a single complex environment using a \textbf{level distribution} $\distr \in \LevelDistrs$.
A \textbf{reward function} (or \textbf{goal}) is a function
    $\RewardFunction: \States \times \Actions \times \States \to \Reals$.
Taken together, $M$ and $\RewardFunction$ define a proper (non-reward-free) UMDP. We define reward functions and reward-free UMDPs separately to facilitate considering multiple goals for an otherwise fixed environment.
We usually denote by $\RewardFunction$ and $\ProxyRewardFunction$ the \textbf{true goal} and the \textbf{proxy goal,} respectively. 

An agent's \textbf{policy} is a conditional action distribution
    $\Policy : \LevelSpace \times \States \to \Distr{\Actions}$.
Note that we assume the policy observes the level (we consider the partially observable case in \cref{apx:partial-observability}).
The set of all policies is denoted by $\AllPolicies$.
We define the \textbf{expected return} (or \textbf{expected value})
    of policy $\Policy$ in level $\level$ under goal $\RewardFunction$
as the discounted cumulative reward
\(
    \Return{\RewardFunction}{\level}{\Policy} = 
    \Expect{
        \sum_{t=0}^\infty \DiscountRate^t \RewardFunction(s_t, a_t, s_{t+1})
    }
\)
where
    $\gamma \in (0,1)$ is a discount factor
and the expectation is over
    $s_0 \sim \InitialStateDistribution(\level)$,
    $a_t \sim \Policy(\level, s_t)$, and
    $s_{t+1} \sim \TransitionMap(\level, s_t, a_t)$.
We lift this definition to level distributions as
    $\Return{\RewardFunction}{\distr}{\Policy} = \Expect[\theta \sim \distr]{\Return{\RewardFunction}{\level}{\Policy}}$. 
A \textbf{normalized} goal is one such that the return has support in $[0, 1]$.

We define the \textbf{expected regret} of a policy $\Policy$ in the level $\level$ under a goal $\RewardFunction$ as the shortfall of expected value achieved by the policy compared to an optimal policy for that level,
\begin{equation}\label{eq:level-regret}
    \Regret{\RewardFunction}{\level}{\Policy}
    =
    \max_{\Policy' \in \AllPolicies}
        \Return{\RewardFunction}{\level}{\Policy'}
    -
    \Return{\RewardFunction}{\level}{\Policy}
.
\end{equation}
Once again, we lift this definition to level distributions as
    $\Regret{\RewardFunction}{\distr}{\Policy} = \Expect[\level \sim \distr]{\Regret{\RewardFunction}{\level}{\Policy}}$.
Since the policy is conditioned on $\level$, we also have the following identity (see \cref{apx:regret-identity}):
\begin{equation}\label{eq:distr-regret}
    \Regret{\RewardFunction}{\distr}{\Policy}
    =
    \max_{\Policy' \in \AllPolicies}
        \Return{\RewardFunction}{\distr}{\Policy'}
    -
    \Return{\RewardFunction}{\distr}{\Policy}
.
\end{equation}

\section{Problem setting}
\label{sec:setting}

\citet{Langosco+2022} and \citet{Shah+2022} provide case studies of several situations in which goal misgeneralization arises.
In order to theoretically study goal misgeneralization, we formalize an abstract class of situations in which goal misgeneralization can arise as a problem setting called a \emph{proxy-distinguishing distribution shift}.

\subsection{Level classification}

The problem setting is defined in terms of the following classification of levels, based on whether a given proxy goal incentivizes optimal or suboptimal behavior under a true goal.

\begin{definition}[Proxy non-distinguishing level]
\label{def:non-distinguishing}
Consider
    an UMDP $\TupleUMDP$,
    a true goal $\RewardFunction$,
and 
    a proxy goal $\ProxyRewardFunction$.
A level $\level \in \LevelSpace$ is
    \emph{proxy non-distinguishing}
with respect to $\RewardFunction$ and $\ProxyRewardFunction$
if all optimal policies with respect to $\ProxyRewardFunction$ are also optimal with respect to $\RewardFunction$, that is,
\[\textstyle
    \argmax_{\Policy \in \AllPolicies} \Return{\ProxyRewardFunction}{\level}{\Policy}
    \subseteq
    \argmax_{\Policy \in \AllPolicies} \Return{\RewardFunction}{\level}{\Policy}
.\]
\end{definition}

\begin{definition}[Possibly proxy $C$-distinguishing level]
\label{def:distinguishing}
Consider
    an UMDP $\TupleUMDP$,
    a true goal $\RewardFunction$,
    a proxy goal $\ProxyRewardFunction$,
and
    a constant $C \geq 0$.
A level $\level \in \LevelSpace$ is
    \emph{possibly proxy $C$-distinguishing}
with respect to $\RewardFunction$ and $\ProxyRewardFunction$
if some policy that is optimal with respect to $\ProxyRewardFunction$ achieves sufficiently suboptimal expected return with respect to $\RewardFunction$, that is,
\[\textstyle
    \argmax_{\Policy \in \AllPolicies} \Return{\ProxyRewardFunction}{\level}{\Policy}
    \not\subseteq
    \xargmax[C]_{\Policy \in \AllPolicies}  \Return{\RewardFunction}{\level}{\Policy}
\]
where \(
    \xargmax[C]_{\Policy \in \AllPolicies}
        \Return{\RewardFunction}{\level}{\Policy}
    =
    \lbrace
        \Policy \in \AllPolicies
    \mid
        \Return{\RewardFunction}{\level}{\Policy}
        \geq
        \max_{\Policy' \in \AllPolicies}
            \Return{\RewardFunction}{\level}{\Policy'}
        - C
    \rbrace
\).
\end{definition}

For brevity, we usually drop the prefixes ``proxy'' and ``possibly proxy'' and refer simply to ``non-distinguishing'' and ``distinguishing'' levels.
The prefix ``proxy'' signifies that the two goals play asymmetric roles in the definitions, because our interest is in whether training in a level disincentivizes policies that pursue the proxy goal at the expense of the true goal, and not the other way around.
Proxy non-distinguishing levels never offer a training signal against optimally pursuing the proxy goal, because all such policies are necessarily also optimal under the true goal.
In contrast, in proxy distinguishing levels, there exist policies that pursue the proxy goal at the expense of the true goal. These policies are disincentivized when training in these levels.
Note that while this is the case for some policies that are optimal under the proxy goal, it may not be the case for a given policy, hence ``possibly.''

If $C=0$, the classification is exhaustive. \Cref{fig:level-types} gives several examples.
If $C > 0$, the classification is not necessarily exhaustive: there may exist levels for which it is possible to optimally pursue the proxy goal while remaining approximately optimal under the true goal. However, it is often true that optimizing a misspecified goal leads to arbitrarily low return \citep[cf.,][]{zhuang2020consequences}. Therefore, possibly proxy $C$-distinguishing levels represent an important class of levels.

\begin{figure}[t]
    \centering
    \begingroup
    \setlength{\tabcolsep}{1.8pt}
    \begin{tabular}{cccccc}
    \toprule
        \multicolumn{2}{c}{\bf Proxy non-distinguishing levels}
    &   \null\quad\null
    &   \multicolumn{3}{c}{\bf Possibly proxy distinguishing levels ($C=0$)}
    \\
    \cmidrule{1-2}\cmidrule{4-6}
        $\OptimalPolicies = \ProxyOptimalPolicies$
    &   $\ProxyOptimalPolicies \subsetneq \OptimalPolicies$
    &
    &   $\OptimalPolicies \subsetneq \ProxyOptimalPolicies$
    &   $\OptimalPolicies \cap \ProxyOptimalPolicies = \emptyset$
    &   otherwise
    \\
    \midrule
      \includegraphics{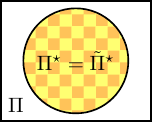}
    & \includegraphics{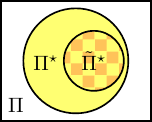}
    &
    & \includegraphics{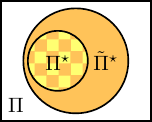}
    & \includegraphics{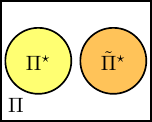}
    & \includegraphics{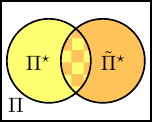}
    \\
      \includegraphics[width=1in]{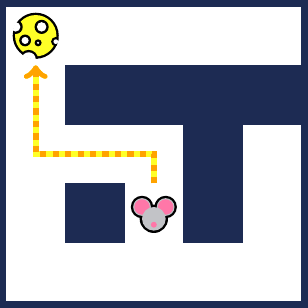}
    & \includegraphics[width=1in]{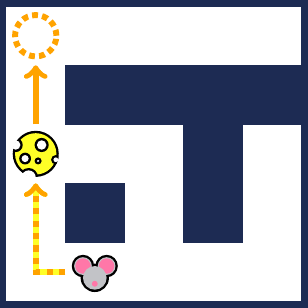}
    &
    & \includegraphics[width=1in]{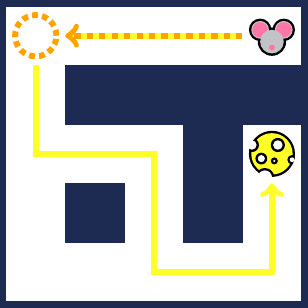}
    & \includegraphics[width=1in]{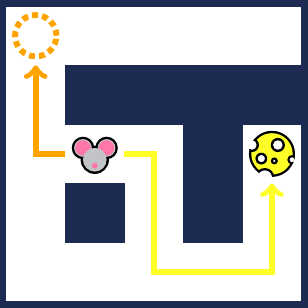}
    & \includegraphics[width=1in]{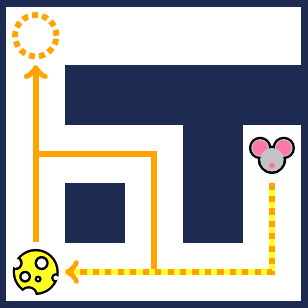}
    \\
    \bottomrule
    \end{tabular}
    \endgroup
    \caption{\label{fig:level-types}%
        \textbf{Illustration and examples of non-distinguishing and distinguishing levels.}
        A level $\level \in \LevelSpace$ can be classified as \emph{non-distinguishing} or \emph{$0$-distinguishing} (\cref{def:non-distinguishing,def:distinguishing}).
        \textit{(1st~row):}
            Possible relationships between sets
            $\OptimalPolicies = \argmax_{\Policy \in \AllPolicies} \Return{\RewardFunction}{\level}{\Policy}$
        and 
            $\ProxyOptimalPolicies = \argmax_{\Policy \in \AllPolicies} \Return{\ProxyRewardFunction}{\level}{\Policy}$.
        \textit{(2nd~row):}
            Example levels for a navigation environment. Yellow and orange arrows show optimal behaviors for the true goal (``go to the cheese'') and a proxy goal (``go to the corner''), respectively.
    }
\end{figure}

\subsection{Proxy-distinguishing distribution shift}

We model a shift from training to deployment as a change in the distribution of available levels (from a \emph{training distribution} to a \emph{deployment distribution}).
The distribution shift is \emph{proxy-distinguishing} when the training distribution concentrates mostly on non-distinguishing levels, but the deployment distribution concentrates mostly on distinguishing levels.

\begin{definition}[Proxy-distinguishing distribution shift]
\label{def:proxy-distinguishing-distribution-shift}
Consider
    an UMDP $\TupleUMDP$,
    a true goal $\RewardFunction$,
and 
    a proxy goal $\ProxyRewardFunction$. 
A \emph{proxy-distinguishing distribution shift} is a tuple  $\TupleShift$ where 
    $\alpha, \beta$ are ratios such that $0 \leq \alpha < \beta \leq 1$,
    $C \geq 0$ is a constant,
and
    $\distrTrain, \distrTest \in \LevelDistrs$ are distributions over levels with the following classifications (with respect to $\RewardFunction$ and $\ProxyRewardFunction$):
\begin{enumerate}
    \item 
        $\distrTrain$ has probability $\alpha$ on $C$-distinguishing levels and the rest on non-distinguishing levels.
    \item 
        $\distrTest$ has probability $\beta$ on $C$-distinguishing levels and the rest on non-distinguishing levels.
\end{enumerate}
\end{definition}

We are mainly interested in the case where $\alpha$ is very close to zero (where goal misgeneralization is a particular risk) and $\beta$ is very close to one (where goal misgeneralization is a particular concern).

\subsection{Assumptions}

We don't assume prior knowledge of the proxy goal or distinguishing levels.
However, we \emph{do} assume the ability to train in distinguishing levels, once identified.
In practice, one can train in a very wide space of situations, whether via a simulator \citep{Tobin+2017,peng2018sim,kumar2021rma,makoviychuk2021isaac,muratore2022robot,ma2024dreureka}, a generative environment model \citep{bruce2024genie},
or a world model \citep{ha2018world, hafner2019dream, schrittwieser2020mastering, hafner2023mastering, valevski2024diffusion}.
If \emph{all} levels accessible before deployment are non-distinguishing, we may require alternative assumptions \citep[see, e.g.,][]{trinh2024getting}.

Moreover, we assume access to a reliable reward signal in favor of the true goal in distinguishing levels.
This mirrors assumptions made in work on spurious correlations in supervised learning \citep[e.g.,][]{liu2021just,zhang2022correct}. 
However, in practice, reward functions may be subject to misspecification in such corner cases \citep[cf.,][]{HadfieldMenell+2017}.
Future work could develop methods that treat the true goal as underspecified in rare, distinguishing levels and find ways to incentivize safe generalization behavior despite this uncertainty.

\section{Theoretical results}
\label{sec:theory}

In this section, we prove that under a proxy-distinguishing distribution shift, the maximum expected value (MEV) objective permits an approximately optimal policy that exhibits goal misgeneralization.
On the other hand, we show that any policy that is approximately optimal with respect to minimax expected regret (MMER) must avoid goal misgeneralization.
All proofs are in \cref{sec:main-proofs}.

We consider \emph{approximately} optimal policies because, in practice, training uses finite optimization power and will not always find policies that are exactly optimal.
We model approximate optimization as instead finding an arbitrary policy within a small threshold of optimal for the given objective.
We use the notation
\(
    \xargmax_{x \in \mathcal{X}} f(x) = \lbrace
        x \in \mathcal{X}
    \mid
        f(x) \geq \max_{\xi \in \mathcal{X}} f(\xi) - \eps
    \rbrace
\)
(likewise, $\xargmin$) for approximate optimization of a function
    $f : \mathcal{X} \to \Reals$
with approximation threshold $\eps \geq 0$.

\subsection{Approximate maximum expected value is susceptible to goal misgeneralization}

The standard objective used in RL is the maximum expected value (MEV) objective with respect to the fixed training distribution. We formalize approximate solutions under this objective as follows.

\begin{definition}[Approximate MEV]\label{def:approx-mev}
    Consider
        an UMDP $\TupleUMDP$,
        a goal $\RewardFunction$,
        an approximation threshold $\eps \geq 0$,
    and a fixed level distribution
        $\distr \in \LevelDistrs$.
    The \emph{approximate MEV policy set} with respect to $\distr$ is then
    \begin{equation*}
        \MEVOptimalPolicies{\RewardFunction}{\distr}
        =
        \xargmax_{\Policy \in \AllPolicies}
            \Return{\RewardFunction}{\distr}{\Policy}
    .
    \end{equation*}
\end{definition}

The MEV objective permits goal misgeneralization under a proxy-distinguishing distribution shift if the proportion of distinguishing levels in training is too small.
Intuitively, a policy pursuing the proxy goal in all levels achieves enough return on non-distinguishing levels to be approximately optimal.
Note: rather than modeling inductive bias, we characterize the \emph{possibility} of goal misgeneralization.

\begin{restatable}[MEV is susceptible to goal misgeneralization]{theorem}{MEVTheorem}
\label{thm:mev-susceptibility}
Consider
    an UMDP $\TupleUMDP$,
    a pair of normalized goals $\RewardFunction, \ProxyRewardFunction$,
    a proxy-distinguishing distribution shift $\TupleShift$,
and
    an approximation threshold $\eps \geq 0$.
If $\alpha \leq \eps$, then there exists
    $\MEVPolicy \in \MEVOptimalPolicies[\eps]{\RewardFunction}{\distrTrain}$
such that
\begin{equation*}
    \MEVPolicy
    \in
    \argmax_{\Policy \in \AllPolicies} \Return{\ProxyRewardFunction}{\distrTest}{\Policy}
    \setminus
    \xargmax[\beta C]_{\Policy \in \AllPolicies} \Return{\RewardFunction}{\distrTest}{\Policy}.
\tag*{[\hyperlink{proof:thm:mev-susceptibility}{proof}]}
\end{equation*}
\end{restatable}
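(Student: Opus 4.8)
The plan is to exhibit a single level-conditioned policy $\MEVPolicy$ that behaves optimally for the proxy goal $\ProxyRewardFunction$ in every level, but that, in each distinguishing level, realizes a proxy-optimal behavior which is badly suboptimal for the true goal $\RewardFunction$. Because a policy observes the level and $\Return{\RewardFunction}{\level}{\Policy}$ (resp.\ $\Return{\ProxyRewardFunction}{\level}{\Policy}$) depends only on the restriction $\Policy(\level,\blank)$, I can define $\MEVPolicy$ independently on each level and glue the pieces into one element of $\AllPolicies$. Concretely, on each non-distinguishing level I let $\MEVPolicy$ follow any $\ProxyRewardFunction$-optimal behavior, which by \cref{def:non-distinguishing} is automatically $\RewardFunction$-optimal there; on each $C$-distinguishing level I invoke \cref{def:distinguishing} to pick a $\ProxyRewardFunction$-optimal behavior lying outside $\xargmax[C]_{\Policy \in \AllPolicies} \Return{\RewardFunction}{\level}{\Policy}$, so that its per-level true regret strictly exceeds $C$.

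I then verify the three membership claims by averaging per-level regret over the relevant distribution, using the lifted definitions and the identity \cref{eq:distr-regret}. First, since $\MEVPolicy$ has zero proxy regret in every level, its expected proxy regret on $\distrTest$ is zero, so it attains the maximal proxy return and lies in $\argmax_{\Policy \in \AllPolicies} \Return{\ProxyRewardFunction}{\distrTest}{\Policy}$. Second, its true per-level regret is $0$ on non-distinguishing levels and, because $\RewardFunction$ is normalized, at most $1$ on $C$-distinguishing levels; as $\distrTrain$ places mass only $\alpha$ on distinguishing levels (\cref{def:proxy-distinguishing-distribution-shift}), the expected true regret satisfies $\Regret{\RewardFunction}{\distrTrain}{\MEVPolicy} \le \alpha \le \eps$, so $\MEVPolicy \in \MEVOptimalPolicies[\eps]{\RewardFunction}{\distrTrain}$ by \cref{def:approx-mev}. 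Third, the true per-level regret is $0$ on non-distinguishing levels and strictly above $C$ on every $C$-distinguishing level, so $\Regret{\RewardFunction}{\distrTest}{\MEVPolicy} > \beta C$ (a probability average of per-level regrets each exceeding $C$ itself exceeds $C$, and $\beta > \alpha \ge 0$ ensures a genuine gap); via \cref{eq:distr-regret} this is exactly the statement that $\MEVPolicy \notin \xargmax[\beta C]_{\Policy \in \AllPolicies} \Return{\RewardFunction}{\distrTest}{\Policy}$.

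The crux is the construction rather than the averaging arithmetic. \cref{def:distinguishing} only guarantees, for each distinguishing level \emph{separately}, the existence of a proxy-optimal behavior that is $C$-suboptimal for $\RewardFunction$; the delicate point is that gluing these independent per-level choices yields a legitimate policy in $\AllPolicies$ whose per-level returns and regrets coincide with the selected per-level quantities. This soundness rests on $\Return{\RewardFunction}{\level}{\Policy}$ and $\Return{\ProxyRewardFunction}{\level}{\Policy}$ depending only on $\Policy(\level,\blank)$, which also makes expected regret decompose additively over the distinguishing/non-distinguishing partition guaranteed by \cref{def:proxy-distinguishing-distribution-shift}. The other ingredient I rely on is normalization of both goals, which supplies the uniform per-level regret bound of $1$ used in the training bound; the strictness in the deployment bound uses only $\beta > \alpha \ge 0$.
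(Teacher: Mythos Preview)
Your proposal is correct and follows essentially the same approach as the paper: the level-by-level construction of $\MEVPolicy$ (proxy-optimal everywhere, and on each $C$-distinguishing level specifically chosen via \cref{def:distinguishing} to be $>C$-suboptimal for $\RewardFunction$) is identical, and the three membership checks decompose $\distrTrain$ and $\distrTest$ into their distinguishing/non-distinguishing parts exactly as the paper does. The only cosmetic difference is that you phrase the bounds in terms of per-level regret and invoke \cref{eq:distr-regret}, whereas the paper works directly with returns and an explicit optimal policy $\OptimalPolicy$; the arithmetic is the same either way.
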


\subsection{Approximate minimax expected regret is robust to goal misgeneralization}
\label{sec:mmer-robustness}

The MMER objective is to \emph{minimize} the expected regret assuming an \emph{adversarially-chosen} (\emph{maximum} expected regret) level distribution for each policy.
We consider approximate minimization against an exactly optimal adversary, but a similar robustness property holds when using an approximate adversary (the bound worsens linearly in the suboptimality of the adversary, see \cref{apx:approximinimax}).

\begin{definition}[Approximate MMER]\label{def:approx-mmer}
    Consider
        an UMDP $\TupleUMDP$,
        a goal $\RewardFunction$,
    and
        an approximation threshold $\eps \geq 0$.
    The \emph{approximate MMER policy set} is then
    \begin{equation*}
        \MMEROptimalPolicies[\eps]{\RewardFunction}
        =
        \xargmin[\eps]_{\pi\in\AllPolicies}
            \max_{\distr \in \LevelDistrs}
                \Regret{\RewardFunction}{\distr}{\Policy}.
    \end{equation*}
\end{definition}

The MMER objective does not permit goal misgeneralization under any distribution shift within the specified level space.
Intuitively, the adversarial level distribution for a misgeneralizing policy would concentrate on distinguishing levels, generating high expected regret.

\begin{restatable}[MMER is robust to goal misgeneralization]{theorem}{MMERTheorem}
\label{thm:mmer-robustness}
Consider
    an UMDP $\TupleUMDP$,
    a pair of goals $\RewardFunction, \ProxyRewardFunction$,
    a proxy-distinguishing distribution shift $\TupleShift$,
and
    an approximation threshold $\eps \geq 0$.
Then
\[
    \forall \MMERPolicy \in \MMEROptimalPolicies[\eps]{\RewardFunction},
    \text{ we have }
    \MMERPolicy \in \xargmax[\eps]_{\Policy \in \AllPolicies} \Return{\RewardFunction}{\distrTest}{\Policy}
.
\tag*{[\hyperlink{proof:thm:mmer-robustness}{proof}]}\]
\end{restatable}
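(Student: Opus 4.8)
The plan is to show that any $\eps$-approximate MMER policy has expected regret at most $\eps$ on \emph{every} level distribution, and then specialize to $\distrTest$ and convert the regret bound into a return bound via the identity in \cref{eq:distr-regret}. The key observation is that the proxy-distinguishing structure of the shift plays no role at all: MMER optimality already guarantees near-optimality against an arbitrary adversarial level distribution, which is strictly stronger than the near-optimality on $\distrTest$ that the theorem asks for.

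First I would establish that the minimax regret value $\min_{\Policy \in \AllPolicies} \max_{\distr \in \LevelDistrs} \Regret{\RewardFunction}{\distr}{\Policy}$ equals zero. Because the policy is conditioned on the level, there is a policy $\OptimalPolicy$ whose action distribution in each level $\level$ is optimal for that level's MDP under $\RewardFunction$; stitching these per-level optimal choices together into a single policy is possible precisely because they are indexed by $\level$ and therefore never conflict. This $\OptimalPolicy$ has zero regret on every individual level, so $\Regret{\RewardFunction}{\distr}{\OptimalPolicy} = \Expect[\level \sim \distr]{\Regret{\RewardFunction}{\level}{\OptimalPolicy}} = 0$ for every $\distr \in \LevelDistrs$, giving it zero worst-case regret. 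Since regret is nonnegative, the minimax value cannot be smaller than zero, so it is exactly zero.

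Next, by \cref{def:approx-mmer}, any $\MMERPolicy \in \MMEROptimalPolicies[\eps]{\RewardFunction}$ satisfies $\max_{\distr \in \LevelDistrs} \Regret{\RewardFunction}{\distr}{\MMERPolicy} \leq 0 + \eps$. Specializing the maximum to the particular choice $\distr = \distrTest \in \LevelDistrs$ yields $\Regret{\RewardFunction}{\distrTest}{\MMERPolicy} \leq \eps$. Finally I would invoke the identity \cref{eq:distr-regret}, which rewrites this as $\max_{\Policy' \in \AllPolicies} \Return{\RewardFunction}{\distrTest}{\Policy'} - \Return{\RewardFunction}{\distrTest}{\MMERPolicy} \leq \eps$, equivalently $\Return{\RewardFunction}{\distrTest}{\MMERPolicy} \geq \max_{\Policy' \in \AllPolicies} \Return{\RewardFunction}{\distrTest}{\Policy'} - \eps$, which is exactly the membership $\MMERPolicy \in \xargmax[\eps]_{\Policy \in \AllPolicies} \Return{\RewardFunction}{\distrTest}{\Policy}$ asserted by the theorem.

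The only real subtlety---and the step I would double-check most carefully---is the claim that the minimax regret value is zero, which hinges entirely on the level-observability assumption used to construct $\OptimalPolicy$. Without observing $\level$, no single policy could be simultaneously optimal across all levels, the minimax value could be strictly positive, and the clean $\eps$ bound would degrade into something depending on that residual minimax value (this is presumably why the partially-observable case is deferred to \cref{apx:partial-observability}). Everything after that observation is routine bookkeeping with the regret identity, so I expect no further obstacles.
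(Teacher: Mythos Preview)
Your proposal is correct and follows essentially the same route as the paper's proof: bound the worst-case regret of an approximate MMER policy by the minimax value plus $\eps$, show the minimax value is zero, specialize to $\distrTest$, and convert to a return bound via \cref{eq:distr-regret}. The only cosmetic difference is that you establish the minimax value is zero by explicitly stitching together per-level optimal policies into a single $\OptimalPolicy$, whereas the paper derives the same fact algebraically from the identity \cref{eq:distr-regret} (writing $\min_\Policy \Regret{\RewardFunction}{\distr}{\Policy} = \max_{\Policy'} \Return{\RewardFunction}{\distr}{\Policy'} - \max_\Policy \Return{\RewardFunction}{\distr}{\Policy} = 0$); both ultimately rest on the level-observability assumption you correctly flagged as the crux.
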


\paragraph{Remarks.}
As a corollary, any policy that is robust against all possible distribution shifts must be an MMER policy (see \cref{apx:necessitymmer}). \pagebreak
A slightly modified bound holds for partially observable environments after accounting for the minimum expected regret \emph{realizable by a fixed policy} (see \cref{apx:partial-observability}).

\section{Experimental methods}
\label{sec:methods}

In this section, we outline our methods for investigating the robustness to goal misgeneralization of MEV-based and MMER-based training methods.
We construct three custom procedurally-generated grid-world environments approximating proxy-distinguishing distribution shifts (\cref{sec:environments}).
We compare a standard MEV-based training method and two recently proposed MMER-based methods with adversaries of varying flexibility (\cref{sec:algorithms}), paired with regret estimators that leverage varying amounts of domain knowledge (either using the ground truth maximum return, or estimating it from samples; \cref{sec:estimators}).
\Cref{sec:results} presents the results of our experiments.

\subsection{Procedurally-generated grid-world environments}
\label{sec:environments}

\begin{figure}[b!]
    \centering
    \begingroup
    \setlength{\tabcolsep}{0.25em}
    \renewcommand\arraystretch{0.97}
    \begin{tabular}{ccccccccc}
    \toprule
    \multicolumn{2}{c}{\bf \env{Cheese in the corner}}
    &&
    \multicolumn{2}{c}{\bf \env{Cheese on a dish}}
    && 
    \multicolumn{2}{c}{\bf \env{Keys and chests}}
    \\
      \makebox[0pt]{\footnotesize Non-distinguishing}
    & \makebox[0pt]{\footnotesize Distinguishing}
    &
    & \makebox[0pt]{\footnotesize Non-distinguishing}
    & \makebox[0pt]{\footnotesize Distinguishing}
    &
    & \makebox[0pt]{\footnotesize Non-distinguishing}
    & \makebox[0pt]{\footnotesize Distinguishing}
    \\
    \midrule
    \includegraphics[width=0.15\linewidth]{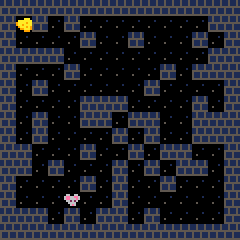}
    &
    \includegraphics[width=0.15\linewidth]{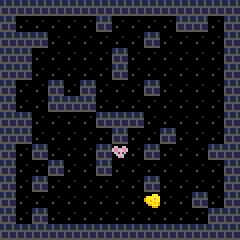}
    &&
    \includegraphics[width=0.15\linewidth]{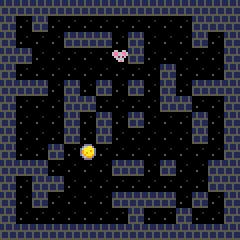}
    &
    \includegraphics[width=0.15\linewidth]{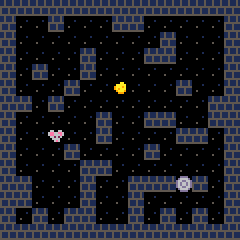}
    &&
    \includegraphics[width=0.15\linewidth]{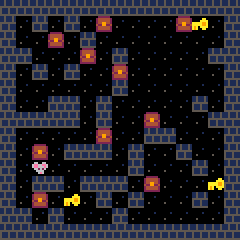}
    &
    \includegraphics[width=0.15\linewidth]{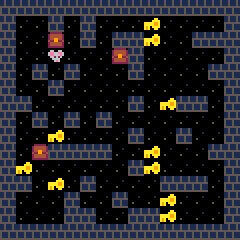}
    \\
    \bottomrule
    \end{tabular}
    \endgroup
    \caption{\label{fig:example-levels}%
        Example procedurally-generated non-distinguishing/distinguishing levels. The agent's observation is a \(15 \times 15 \times c\) Boolean grid (where $c$ is an environment-dependent number of channels).
    }
\end{figure}

\citet{Langosco+2022} exhibited goal misgeneralization in several environments from OpenAI Procgen \citep{Cobbe+2020}, suitably modified to implement a proxy-distinguishing distribution shift with $\alpha = 0$.
We implement three similar procedurally-generated grid-world environments in JAX \citep{JAX}, allowing us to more easily implement custom level generation and analysis.

For each environment, we construct two procedural level generators $\distrNonDistg, \distrDistg \in \LevelDistrs$,  approximately concentrated on non-distinguishing and distinguishing levels, respectively.
From these, we define training distributions 
\(
    \distrTrain[\alpha]
    = (1-\alpha) \distrNonDistg + \alpha \distrDistg
\)
where $\alpha$ is the proportion of distinguishing levels.
In our experiments, we vary $\alpha$ from $10^{-5}$ to $10^{-1}$, with $\alpha\in\{0,1\}$ as baselines.
We evaluate on $\distrTest = \distrDistg$, approximating a proxy-distinguishing distribution shift.

The three environments are as follows.
\Cref{fig:example-levels} illustrates example levels (note we use Boolean observations).
\Cref{apx:environments} comprehensively documents each environment, including the details of classifying levels as non-distinguishing or distinguishing and procedural level generation.
\begin{enumerate}
    \item \textbf{\env{Cheese in the corner}.}
        A mouse navigates a maze.
        The true goal assigns $+1$ reward for reaching a piece of cheese, while a proxy goal assigns $+1$ reward for reaching the top left corner for the first time.
        Levels with the cheese in the top left corner are non-distinguishing and levels with the cheese away from the corner are (in most cases) distinguishing.
    \item \textbf{\env{Cheese on a dish}.}
        This time the mouse navigates a maze containing cheese and also a dish.
        The true goal assigns $+1$ reward for reaching the cheese, while a proxy goal assigns $+1$ reward for reaching the dish.
        Levels with the cheese and dish co-located are non-distinguishing, and levels with the cheese and dish separated are (in most cases) distinguishing.
    \item \textbf{\env{Keys and chests}.}
        A more complex, multi-stage task, in which the mouse navigates a maze, collects keys, and spends keys to open chests.
        Levels with 3 keys and 10 chests are approximately non-distinguishing. Levels with 10 keys and 3 chests are mostly distinguishing---a misgeneralizing policy would overprioritize key collection beyond what is necessary for opening chests.
\end{enumerate}

\subsection{Training methods}
\label{sec:algorithms}

For both MEV-based and MMER-based training, we follow \citet{Langosco+2022} and use an agent network architecture based on that of IMPALA \citep{Espeholt+2018} with a dense feed-forward layer replacing the LSTM block.
We perform policy updates with PPO \citep{schulman2017proximal} and GAE \citep{schulman2015high}.
We document hyperparameters and compute usage in \cref{apx:training-details}.

For MEV, we use a standard method for training in UMDPs given a fixed level distribution.
\begin{enumerate}
    \item \textbf{Domain randomization \textnormal{\citep[DR;][]{Tobin+2017}}.}
        For each iteration of PPO, we sample (procedurally generate) a new batch of levels from the fixed training level distribution $\distrTrain[\alpha]$, collect experience in this batch of levels, and then train on the collected experience.
\end{enumerate}

For MMER, we use two methods of regret-based unsupervised environment design \citep[UED;][]{Dennis+2020}.
UED methods implement the two-level optimization from \cref{def:approx-mmer} by training the policy on levels selected from a distribution chosen by a regret-maximizing \textbf{adversary}.
The first UED method is a regret-based form of prioritized level replay \citep[PLR;][]{Jiang+2021}.
\begin{enumerate}[resume]
    \item \textbf{Robust prioritized level replay \textnormal{\citep[\RobustPLR;][]{Jiang+2022}}.}
        The adversary parametrizes its level distribution using a fixed-size \textbf{level buffer.}
        Throughout training, the adversary refines the buffer by either
            (1)~sampling a new batch of levels from the underlying training distribution $\distrTrain[\alpha]$ and estimating the expected regret of the current policy on these levels;
        or
            (2)~sampling from the current buffer, conducting a PPO training step with the chosen levels, and updating their expected regret estimates;
        keeping the highest-regret levels in the buffer.
\end{enumerate}

\RobustPLR{} has the advantage of being domain-agnostic, but has the disadvantage of only being able to \emph{replay} levels once they have been sampled from the underlying distribution.
We also consider a more advanced adversary with an independent means of exploring the space of level distributions.
\begin{enumerate}[resume]
    \item \textbf{Adversarially compounding complexity by editing levels \textnormal{\citep[ACCEL;][]{ParkerHolder+2023}}.}
        The adversary continually refines a level buffer with steps~(1) and~(2) from \RobustPLR{}, and additionally by (3)~applying stochastic \textbf{edits} to the levels used for PPO training to generate similar levels, and estimating the expected regret of the current policy on these new levels.
\end{enumerate}
ACCEL additionally requires an \textbf{edit distribution.} We edit levels by sampling a sequence of random elementary level modifications, none of which change whether the level is non-distinguishing or distinguishing.
\Cref{apx:mutators} details this edit distribution and compares it to edit distributions with more or less ability to introduce distinguishing levels.

\subsection{Expected regret estimation methods}
\label{sec:estimators}

Both UED methods require an \textbf{(expected) regret estimator} for deciding which levels to keep in the buffer.
To represent the current capabilities of UED methods, we use the following domain-agnostic estimator, similar to the MaxMC estimator proposed by \citet{Jiang+2022}.
\begin{enumerate}
    \item \textbf{Max-latest estimator.}
    We estimate the expected regret of policy $\Policy$ in level $\level$ under goal $\RewardFunction$ as
    \begin{equation}\label{eq:max-latest-estimator}
        \MaxMCActor{\RewardFunction}{\level}{\Policy}
        =
        \hat{\ReturnSymbol}^{\RewardFunction}_{\textnormal{max}}(\level)
        - \hat{\ReturnSymbol}^{\RewardFunction}_{\textnormal{latest}}(\Policy; \level)
    \end{equation}
    where
        $\hat{\ReturnSymbol}^{\RewardFunction}_{\textnormal{max}}(\level)$
        is the highest empirical return ever achieved for this level throughout training;
    and
        $\hat{\ReturnSymbol}^{\RewardFunction}_{\textnormal{latest}}(\Policy;\level)$
        is the empirical average return achieved by the current policy.
\end{enumerate}
To simulate a more advanced regret estimator than is currently available in practice, we also consider a domain-specific estimator that solves each procedurally-generated level using a graph algorithm to compute the exact maximum expected return (details in \cref{apx:oracles}).
\begin{enumerate}[resume]
    \item \textbf{Oracle-latest estimator.}
        We estimate the expected regret of policy $\Policy$ in level $\level$ under goal $\RewardFunction$ as
        \begin{equation}\label{eq:oracle-latest-estimator}
            \OracleActor{\RewardFunction}{\level}{\Policy}
            =
            \max_{\Policy'}  \Return{\RewardFunction}{\theta}{\pi'}
            -
            \hat{\ReturnSymbol}^{\RewardFunction}_{\textnormal{latest}}(\Policy; \level)
        \end{equation}
        where
            $\max_{\Policy'} \Return{\RewardFunction}{\theta}{\Policy'}$ is the maximum expected return for the level;
        and
            $\hat{\ReturnSymbol}^{\RewardFunction}_{\textnormal{latest}}(\Policy;\level)$
        is as above.
\end{enumerate}

\section{Experimental results}
\label{sec:results}

In this section, we report the results of our main experiments.
Consistent with \cref{thm:mev-susceptibility}, MEV-based training is susceptible to goal misgeneralization unless the proportion of distinguishing levels in the training distribution is sufficiently high (\cref{sec:dr-susceptibility}).
Consistent with \cref{thm:mmer-robustness}, MMER-based training methods are typically capable of identifying and increasing the proportion of rare, high-regret, distinguishing levels, thereby preventing goal misgeneralization in many situations where MEV-based training misgeneralizes (\cref{sec:amplification}).
In some cases, UED methods fail to find MMER policies, and exhibit goal misgeneralization. We see generally that the more advanced UED methods are more robust to goal misgeneralization (\cref{sec:ued-advances}).
In \env{keys and chests}, DR outperforms ACCEL with max-latest regret estimation, underscoring reliable regret estimation as a particular challenge for future work on MMER-based training (\cref{sec:outlier}).

\begin{figure}[t]
    \centering
    \begingroup
    \setlength{\tabcolsep}{1pt}
    \begin{tabular}{cccc}
    \toprule
    & \bf \env{Cheese in the corner}
    & \bf \env{Cheese on a dish}
    & \bf \env{Keys and chests}
    \\
    & \small Seeds $N{=}8$, steps $T{=}200$M
    & \small Seeds $N{=}3$, steps $T{=}200$M
    & \small Seeds $N{=}5$, steps $T{=}400$M
    \\
    \midrule
    \multirow{2}{*}[7em]{\rotatebox[origin=l]{90}{Avg. return (distg.)}}
    & \includegraphics{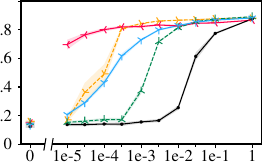}
    & \includegraphics{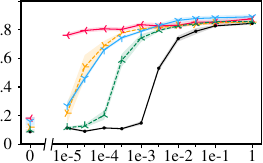}
    & \includegraphics{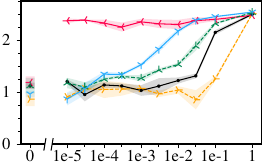}
    \\
    & \multicolumn{3}{c}{Proportion of distinguishing levels in underlying training distribution, $\alpha$ (augmented log scale)}
    \\
    \midrule
    & \multicolumn{3}{c}{\includegraphics{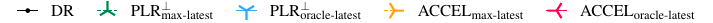}}
    \\[-0.5ex]
    \bottomrule
    \end{tabular}
    \endgroup
    \caption{\label{fig:main-distinguishing-return}%
        \textbf{Distribution shift performance for various training distributions.}
        Average return over 512 steps for an evaluation batch of 256 distinguishing levels sampled from $\distrTest = \distrDistg$.
        High performance indicates policies generalizing as intended; low performance indicates goal misgeneralization.
        Each policy is trained on $T$ environment steps using the indicated training method with underlying training distribution $\distrTrain[\alpha] = (1-\alpha) \distrNonDistg + \alpha \distrDistg$.
        Mean over $N$ seeds, shaded to one standard error.
        Note the split in the horizontal axis used to show zero on the log scale.
    }
\end{figure}

\subsection{Domain randomization exhibits goal misgeneralization with rare distinguishing levels}
\label{sec:dr-susceptibility}

\Cref{thm:mev-susceptibility} says that if the proportion of distinguishing levels in the fixed training distribution is small enough, then approximately optimizing MEV \emph{possibly} leads to goal misgeneralization.
Our experiments show that DR, an MEV-based training method, indeed exhibits goal misgeneralization when the proportion of distinguishing levels in the training distribution is small enough.
\Cref{fig:main-distinguishing-return} shows end-of-training performance on distinguishing levels. There is a threshold below which DR's performance on distinguishing levels falls.
DR achieves high return on non-distinguishing levels and high proxy return on distinguishing levels (\cref{apx:non-distinguishing}), indicating a case of goal misgeneralization.

In \env{cheese in the corner} and \env{keys and chests}, DR exhibits goal misgeneralization until there is around $\alpha=1\text{e-}1$ ($10\%$) mass on distinguishing levels. For \env{cheese on a dish}, DR is robust to goal misgeneralization from as low as $\alpha=1\text{e-}2$ ($1\%$) (see also \cref{apx:robustness-dish}).
\Cref{apx:moresteps} shows that training for substantially longer slightly increases DR's robustness in \env{cheese on a dish}.

We note that \citet{Langosco+2022} previously demonstrated goal misgeneralization while training with DR without distinguishing levels in similar environments.
Moreover, \citet{Langosco+2022} demonstrated that for a modified version of OpenAI ProcGen's \env{CoinRun} environment \citep{Cobbe+2019,Cobbe+2020}, training with $\alpha=2\text{e-2}$ $(2\%)$ prevents goal misgeneralization. They did not experiment with smaller proportions of distinguishing levels.
We show that with small but nonzero proportions of distinguishing levels, DR can still exhibit goal misgeneralization.

\subsection{Regret-based prioritization amplifies distinguishing levels, mitigating misgeneralization}
\label{sec:amplification}

\begin{figure}[t]
    \centering
    \begingroup
    \setlength{\tabcolsep}{1pt}
    \begin{tabular}{cccc}
    \toprule
    & \bf \env{Cheese in the corner}
    & \bf \env{Cheese on a dish}
    & \bf \env{Keys and chests}
    \\
    & \small Seeds $N{=}8$, steps $T{=}200$M
    & \small Seeds $N{=}3$, steps $T{=}200$M
    & \small Seeds $N{=}5$, steps $T{=}400$M
    \\
    \midrule
    \multirow{2}{*}[7em]{\rotatebox[origin=l]{90}{Adv.\ prop.\ distg.\ (log)}}
    & \includegraphics{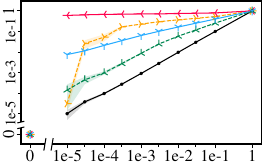}
    & \includegraphics{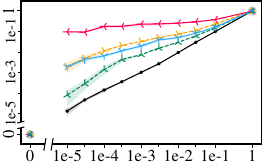}
    & \includegraphics{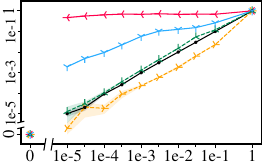}
    \\
    & \multicolumn{3}{c}{Proportion of distinguishing levels in underlying training distribution, $\alpha$ (augmented log scale)}
    \\
    \midrule
    & \multicolumn{3}{c}{\includegraphics{figures/plots/legends/main-both.pdf}}
    \\[-0.5ex]
    \bottomrule
    \end{tabular}
    \endgroup
    \caption{\label{fig:main-effective-alpha}%
        \textbf{Rate at which adversary plays distinguishing levels.}
        We plot the proportion of adversarially sampled levels classified as distinguishing across training for $T$ environment steps.
        The diagonal represents the proportion from the underlying training distribution $\distrTrain[\alpha] = (1-\alpha) \distrNonDistg + \alpha \distrDistg$ (as used in DR).
        Points above the diagonal indicate the adversary increasing the proportion of distinguishing levels relative to the underlying training distribution.
        Mean over $N$ seeds, shaded to one standard error.
        Note the splits in \emph{both} axes used to show zero on the log scales.
    }
\end{figure}

\Cref{thm:mmer-robustness} says that, if a policy pursues the proxy goal at the expense of the true goal in distinguishing levels, then the adversary should select a distribution of distinguishing levels that generates high regret.
\Cref{fig:main-effective-alpha} shows the average proportion of distinguishing levels selected from the adversary throughout training, showing that, with the exception of max-latest estimation in the \env{keys and chests} environment, the adversary selects distinguishing levels disproportionately often compared to sampling from the underlying distribution, thereby incentivizing policies that pursue the true goal.

\cref{fig:main-distinguishing-return} shows that this increase in the proportion of training levels is, in most cases, enough to lead to correct generalization.
In each environment, MMER-based training methods are robust to goal misgeneralization at $\alpha$ values for which DR exhibits goal misgeneralization. For example, in \env{cheese in the corner}, all UED methods are robust to goal misgeneralization at $\alpha=1\text{e-}2$ ($1\%$), and some remain robust for even lower $\alpha$.
Note that some evaluation levels are unsolvable---the highest return to be expected is given by the agents trained with $\alpha=1$.

\begin{figure}
    \begingroup
    \centering
    \setlength{\tabcolsep}{0.25em}
    \renewcommand\arraystretch{1.12}
    \begin{tabular}{cccccc}
    \toprule
    \null & \bf \multirow{2}{*}{DR}
    & \multicolumn{2}{c}{\bf \RobustPLR{}}
    & \multicolumn{2}{c}{\bf ACCEL}
    \\
    $\alpha$
    & 
    & \small max-latest
    & \small oracle-latest
    & \small max-latest
    & \small oracle-latest
    \\
    \midrule
    \raisebox{\halfheatmapwidth}{$1\text{e-}4$}
    & \includegraphics[width=\heatmapwidth]{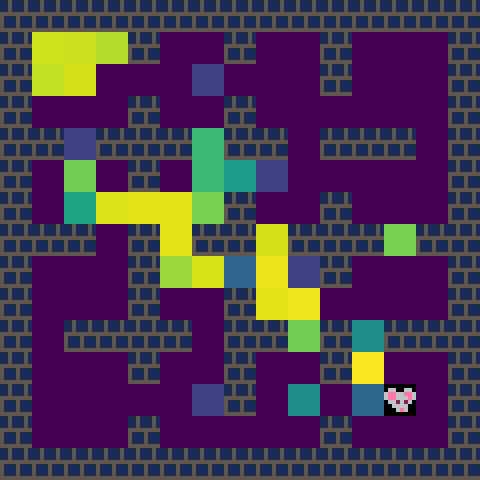}
    & \includegraphics[width=\heatmapwidth]{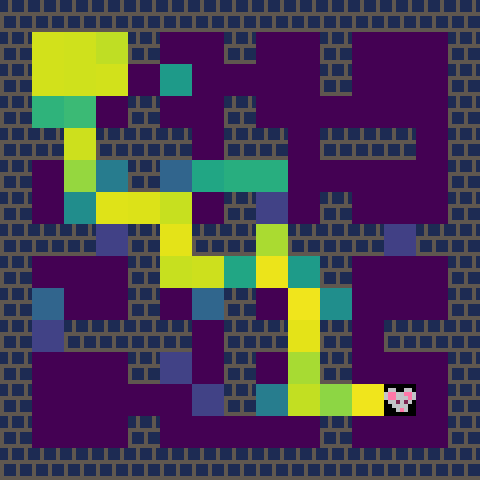}
    & \includegraphics[width=\heatmapwidth]{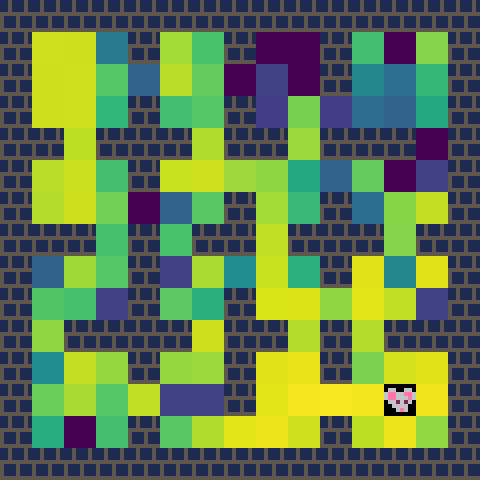}
    & \includegraphics[width=\heatmapwidth]{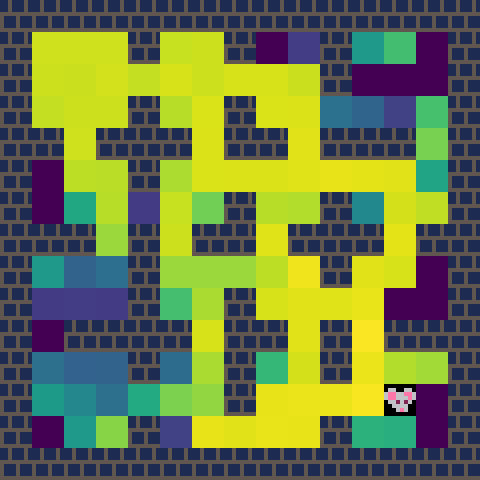}
    & \includegraphics[width=\heatmapwidth]{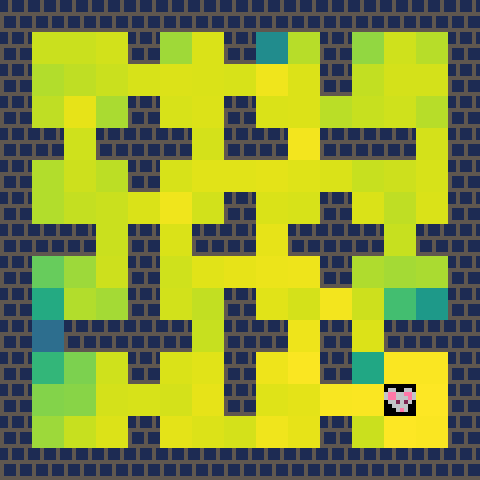}
    \\
    \raisebox{\halfheatmapwidth}{$1\text{e-}2$}
    & \includegraphics[width=\heatmapwidth]{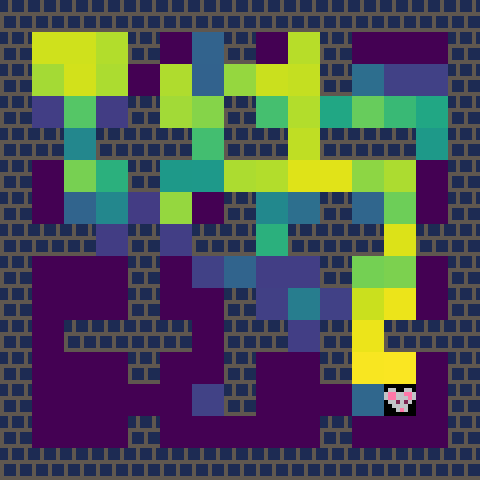}
    & \includegraphics[width=\heatmapwidth]{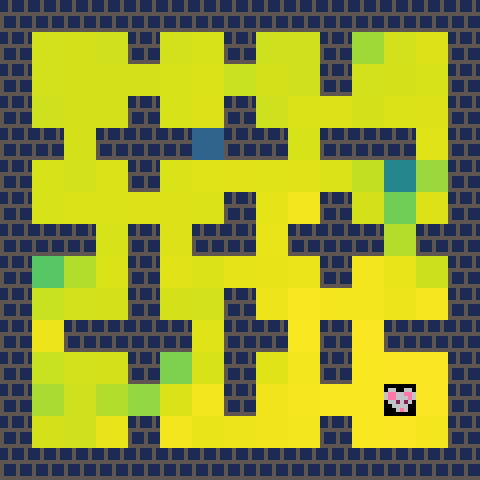}
    & \includegraphics[width=\heatmapwidth]{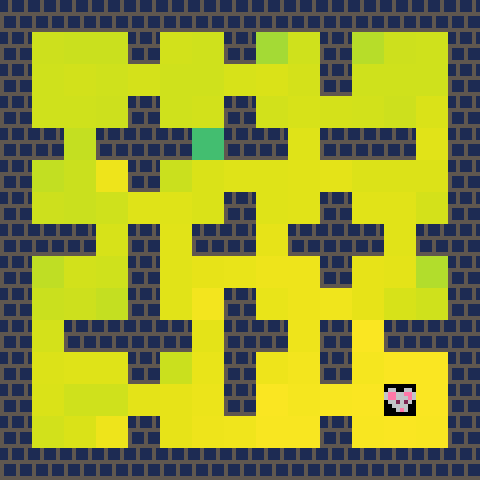}
    & \includegraphics[width=\heatmapwidth]{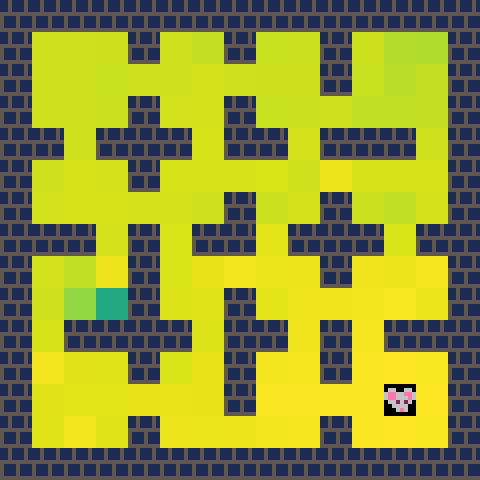}
    & \includegraphics[width=\heatmapwidth]{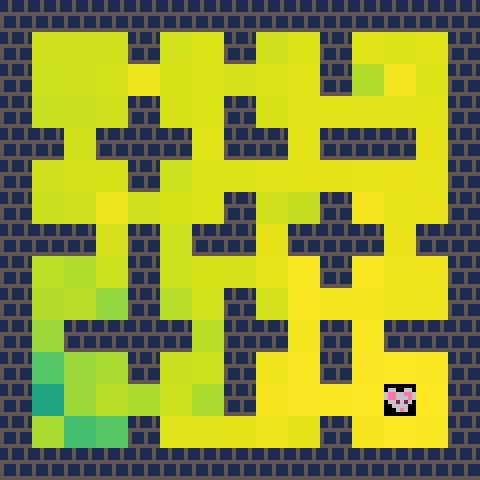}
    \\
    \midrule
    & \multicolumn{5}{c}{\includegraphics[trim={0pt 5pt 0pt 0pt}]{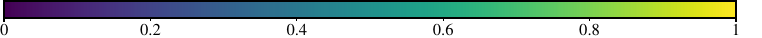}}
    \\
    & \multicolumn{5}{c}{Average return}
    \\
    \bottomrule
    \end{tabular}
    \endgroup
    \caption{\label{fig:heatmaps}%
        \textbf{Performance on \env{cheese in the corner} levels with varying cheese position.}
        For each training configuration, we evaluate the trained policy (first of 8 seeds) on a batch of 122 levels with shared wall layout and mouse spawn position but different cheese positions. We indicate average return on levels with different cheese positions by the color of the corresponding grid square.
        We see a progression whereby for more advanced algorithms or higher $\alpha$, the agent is robust to a greater proportion of cheese positions.
        See \cref{apx:heatmaps} for more details and full range of $\alpha$ values.
    }
\end{figure}

\subsection{Increasingly advanced UED methods are more robust to goal misgeneralization}
\label{sec:ued-advances}

\Cref{thm:mmer-robustness} says that MMER-based training should be robust to goal misgeneralization regardless of the distribution shift.
In contrast, in our experiments, the proportion of distinguishing levels played by the adversary decreases as we decrease $\alpha$ (\cref{fig:main-effective-alpha}), and each UED method exhibits a threshold below which it fails to converge to an MMER policy, and exhibits goal misgeneralization (\cref{fig:main-distinguishing-return}).

This performance trend reflects how the adversaries  construct level distributions.
When distinguishing levels are very rare (or never arise), the adversary is hindered (prevented) from increasing the number of distinguishing levels in the buffer.
Compared to \RobustPLR{}, ACCEL can replicate similar levels throughout its buffer through edits, but we used edits that don't create new distinguishing levels.
\Cref{apx:mutators} investigates ACCEL variants with different edit distributions, showing that ACCEL can prevent goal misgeneralization even when $\alpha=0$ if edits can introduce distinguishing levels.

Overall, robustness correlates with how advanced the adversaries are. 
Our most flexible adversary (ACCEL) paired with our most powerful regret estimator (oracle-latest) is remarkably robust to goal misgeneralization in all environments for all positive $\alpha$ tested.
The less flexible \RobustPLR{} using the less powerful max-latest expected regret estimator is the least robust.
This motivates work pursuing regret-based UED methods with better convergence properties \citep[cf.][]{monette2025an}.

\subsection{Biased regret estimation can undermine UED in more complex environments}
\label{sec:outlier}

The poor performance of ACCEL with max-latest the estimator in the \env{keys and chests} environment underscores the challenge of expected regret estimation.
Estimating maximum return from samples is particularly challenging in this environment, where high return is unlikely to be achieved in distinguishing levels by chance, since chests are substantially rarer than in non-distinguishing levels.
It appears that the increased flexibility of ACCEL in this case works as a disadvantage, leading to the adversary being led astray by biased regret estimates even more so than \RobustPLR{}.

The challenges of regret estimation are known, and are an active area of research \citep[cf.][]{Rutherford+2024}.
Our results highlight the importance of future work on reliable regret estimation methods, towards achieving the improved performance shown by our domain-specific oracle-latest estimator.
Such work could investigate using a separate policy network to estimate the maximum return \citep[cf.][]{Dennis+2020}, or incorporating the predictions of a value network \citep[cf.][]{Jiang+2022}.

\section{Conclusion}

In this paper, we introduce the setting of a proxy-distinguishing distribution shift, and offer a theoretical and empirical investigation of the robustness of MEV-based and MMER-based training to goal misgeneralization.
We show theoretically and empirically that MEV-based training on a fixed training distribution can lead to goal misgeneralization.
In contrast, we show that MMER-based training is provably robust against goal misgeneralization in the limit of idealized training methods, and regret-based unsupervised environment design (UED) methods are empirically more robust than MEV-based training.
Current UED methods do not find MMER policies and prevent goal misgeneralization in all the cases we studied, indicating there is still room for improvement between current methods and the theoretical ideal.
These findings highlight MMER-based training as a promising approach to preventing goal misgeneralization.

\clearpage
\appendix

\section*{Broader impact statement}
\addcontentsline{toc}{section}{Broader impact statement}
\label{sec:impact}

\citet{Ngo+2023} cast goal misgeneralization as a key risk mechanism for advanced deep learning systems, noting that techniques that improve capability robustness without preventing goal misgeneralization could \emph{worsen} outcomes, since the system's greater capabilities would then be devoted to the pursuit of an incorrect goal.
Preventing this dangerous mode of generalization failure is a key challenge in assuring the safety of advanced RL agents.

In this section, we briefly note that minimax expected regret appears to be well-suited in principle to mitigating goal misgeneralization as deep learning systems become increasingly capable.
This is because more generally capable deep learning systems should also be more capable regret-maximizing adversaries in particular.
A more capable adversary will, in turn, be better at detecting or synthesizing rare, high-regret training situations, and then amplifying the training signal from these situations so as to induce correct generalization in an advanced deep RL agent (cf., \cref{apx:mutators}).

Our work highlights training with the minimax expected regret (MMER) objective as a promising avenue for preventing goal misgeneralization.
This objective has desirable theoretical properties, and we have found promising initial empirical results, though current MMER-based training techniques are not mature enough to prevent goal misgeneralization in all cases.
As MMER-based training methods improve and as goal misgeneralization leads to more severe consequences, the ability of MMER-based training to mitigate goal misgeneralization should also improve.

Ultimately, we are hopeful that our work will instigate further research on the problem of goal misgeneralization, which remains a critical, open problem in the alignment and safe generalization of future advanced reinforcement learning agents.

\section*{Acknowledgments}
\addcontentsline{toc}{section}{Acknowledgements}
\label{sec:acknowledgements}

We thank Jason Brown, Robert Kirk, and Lauro Langosco for helpful discussions.
We thank Stephen Chung and Samuel Coward for advice with training algorithms.
We thank
    Micah Carroll,
    Dmitrii Krasheninnikov,
    Niklas Lauffer,
    Michelle Li,
    Clare Lyle,
    Benjamin Plaut,
    Rohin Shah,
and
    our anonymous reviewers (especially Reviewer DDwf)
for helpful feedback on the research and the manuscript.
KAS was supported in part by the Cambridge ERA:AI Fellowship.

\bibliography{main}
\bibliographystyle{rlj}
\addcontentsline{toc}{section}{References}

\clearpage
\counterwithin{figure}{section}
\counterwithin{table}{section}

\section*{Appendix contents}
\addcontentsline{toc}{section}{Supplementary materials}
\startcontents[appendix]
\renewcommand{\baselinestretch}{0.85}\normalsize
\printcontents[appendix]{}{1}{}
\renewcommand{\baselinestretch}{1.0}\normalsize

\clearpage

\section{Proofs for theoretical results from \texorpdfstring{\cref{sec:theory}}{Section~\ref*{sec:theory}}}
\label{sec:main-proofs}

In this section, we restate and prove \cref{thm:mev-susceptibility,thm:mmer-robustness}.

\MEVTheorem*

\begin{proof}[Proof of \cref{thm:mev-susceptibility}]
\hypertarget{proof:thm:mev-susceptibility}
Assume $\alpha \leq \eps$. Construct a policy
    $\MEVPolicy$
such that, for all levels $\level \in \LevelSpace$,
\begin{align*}
    \MEVPolicy
    &\in
    \argmax_{\Policy \in \AllPolicies} \Return{\ProxyRewardFunction}{\level}{\Policy}
    \setminus
    \xargmax[C]_{\Policy \in \AllPolicies}  \Return{\RewardFunction}{\level}{\Policy}
    && \text{if $\theta$ is $C$-distinguishing, or}
\\
    \MEVPolicy
    &\in
    \argmax_{\Policy \in \AllPolicies} \Return{\ProxyRewardFunction}{\level}{\Policy}
    && \text{otherwise.}
\end{align*}
The former is non-empty by \cref{def:distinguishing}.
By construction,
    $\MEVPolicy \in \argmax_{\Policy \in \AllPolicies}\Return{\ProxyRewardFunction}{\distrTest}{\Policy}$.
It remains to prove
    (i)~$\MEVPolicy \in \MEVOptimalPolicies{\RewardFunction}{\distrTrain}$
and (ii)~$\MEVPolicy \notin \xargmax[\beta C]_{\Policy \in \AllPolicies} \Return{\RewardFunction}{\distrTest}{\Policy}$.

For notational convenience, construct a policy
    $\OptimalPolicy \in \AllPolicies$
that is optimal under $\RewardFunction$ in all levels.
Moreover, let $\distrTrainDistg, \distrTrainNonDistg, \distrTestDistg, \distrTestNonDistg \in \LevelDistrs$ be $\distrTrain$ and $\distrTest$ conditioned on the level being $C$-distinguishing or non-distinguishing, respectively.

Then for condition~(i), we have
\begin{align*}
    \Return{\RewardFunction}{\distrTrain}{\MEVPolicy}
  & = \alpha \Return{\RewardFunction}{\distrTrainDistg}{\MEVPolicy}
      + (1-\alpha) \Return{\RewardFunction}{\distrTrainNonDistg}{\MEVPolicy}
\tag{by \cref{def:proxy-distinguishing-distribution-shift}}
\\& = \alpha \Return{\RewardFunction}{\distrTrainDistg}{\MEVPolicy}
      + (1-\alpha) \Return{\RewardFunction}{\distrTrainNonDistg}{\OptimalPolicy}
\tag{by \cref{def:non-distinguishing}}
\\& \geq \alpha \cdot 0
      + (1-\alpha) \Return{\RewardFunction}{\distrTrainNonDistg}{\OptimalPolicy}
\tag{since $\ReturnSymbol^\RewardFunction \geq 0$}
\\& = \Return{\RewardFunction}{\distrTrain}{\OptimalPolicy}
      - \alpha \Return{\RewardFunction}{\distrTrainDistg}{\OptimalPolicy}
\tag{by \cref{def:proxy-distinguishing-distribution-shift}}
\\& \geq \Return{\RewardFunction}{\distrTrain}{\OptimalPolicy}
      - \eps \cdot 1.
\tag{since $\alpha \leq \eps$; $\ReturnSymbol^\RewardFunction \leq 1$}
\end{align*}

For condition~(ii), we have
\begin{align*}
    \Return{\RewardFunction}{\distrTest}{\MEVPolicy}
  & = \beta \Return{\RewardFunction}{\distrTestDistg}{\MEVPolicy}
      + (1-\beta) \Return{\RewardFunction}{\distrTestNonDistg}{\MEVPolicy}
\tag{by \cref{def:proxy-distinguishing-distribution-shift}}
\\& = \beta \Return{\RewardFunction}{\distrTestDistg}{\MEVPolicy}
      + (1-\beta) \Return{\RewardFunction}{\distrTestNonDistg}{\OptimalPolicy}
\tag{by \cref{def:non-distinguishing}}
\\& < \beta \big(
        \Return{\RewardFunction}{\distrTestDistg}{\OptimalPolicy} - C
      \big)
      + (1-\beta) \Return{\RewardFunction}{\distrTestNonDistg}{\OptimalPolicy}
\tag{$\displaystyle\MEVPolicy \notin \xargmax[C]_{\Policy \in \AllPolicies} \Return{\RewardFunction}{\level}{\Policy}$}
\\& = \Return{\RewardFunction}{\distrTest}{\OptimalPolicy}
      - \beta C.
\tag*{(by \cref{def:proxy-distinguishing-distribution-shift}) \qedhere}
\end{align*}
\end{proof}

\clearpage

\MMERTheorem*

\begin{proof}[Proof of \cref{thm:mmer-robustness}]
\hypertarget{proof:thm:mmer-robustness}
Suppose \(\MMERPolicy \in \MMEROptimalPolicies[\eps]{\RewardFunction}\).
Then we have the following bound on expected regret:
\begin{align*}
    \Regret{\RewardFunction}{\distrTest}{\MMERPolicy}
    & \leq \max_{\distr \in \LevelDistrs} \Regret{\RewardFunction}{\distr}{\MMERPolicy}
\tag{$\distrTest \in \LevelDistrs$}
\\  
    & \leq \min_{\Policy\in\AllPolicies}
        \max_{\distr \in \LevelDistrs}
            \Regret{\RewardFunction}{\distr}{\Policy}
        + \eps.
\tag{by \cref{def:approx-mmer}}
\end{align*}
We can convert this upper bound on expected regret to a lower bound on expected return:
\begin{align*}
    \Return{\RewardFunction}{\distrTest}{\MMERPolicy}
    &=
    \max_{\Policy\in\AllPolicies}
        \Return{\RewardFunction}{\distrTest}{\Policy}
    -
    \Regret{\RewardFunction}{\distrTest}{\MMERPolicy}
\tag{by equation~\ref{eq:distr-regret}}
\\  &\geq
    \max_{\Policy\in\AllPolicies}
        \Return{\RewardFunction}{\distrTest}{\Policy}
    - \min_{\Policy\in\AllPolicies}
        \max_{\distr \in \LevelDistrs}
            \Regret{\RewardFunction}{\distr}{\Policy}
    - \eps.
\tag{by above bound}
\end{align*}
The theorem follows, since, for all $\distr \in \LevelDistrs$,
    $\min_{\Policy\in\AllPolicies} \Regret{\RewardFunction}{\distr}{\Policy}$
vanishes by equation~\cref{eq:distr-regret}:
\begin{align*}
    \min_{\Policy\in\AllPolicies}
        \Regret{\RewardFunction}{\distr}{\Policy}
    &=
    \min_{\Policy\in\AllPolicies}\left(
        \max_{\Policy'\in\AllPolicies}
            \Return{\RewardFunction}{\distr}{\Policy'}
        - \Return{\RewardFunction}{\distr}{\Policy}
    \right)
\tag{by equation~\ref{eq:distr-regret}}
\\  &=
    \max_{\Policy'\in\AllPolicies}
        \Return{\RewardFunction}{\distr}{\Policy'}
    -
    \max_{\Policy\in\AllPolicies}
        \Return{\RewardFunction}{\distr}{\Policy}
\tag{$\max$ term is constant wrt.\ $\Policy$}
\\ &= 0.
\tag*{\qedhere}
\end{align*}
\end{proof}

\clearpage

\section{Expected regret identity for UMDPs}
\label{apx:regret-identity}

In this section, we prove equation~\cref{eq:distr-regret} for UMDPs.
Recall the following definitions from \cref{sec:preliminaries}.
\begin{align}
    \Return{\RewardFunction}{\level}{\Policy}
    &= 
    \Expect[
        s_0 \sim \InitialStateDistribution(\level),
        a_t \sim \Policy(\level, s_t),
        s_{t+1} \sim \TransitionMap(\level, s_t, a_t)
    ]{
        \sum_{t=0}^\infty \DiscountRate^t \RewardFunction(s_t, a_t, s_{t+1})
    }
\label{xdef:level-return}
\\  
    \Return{\RewardFunction}{\distr}{\Policy}
    &=
    \Expect[\theta \sim \distr]{\Return{\RewardFunction}{\level}{\Policy}}
\label{xdef:distr-return}
\\
    \Regret{\RewardFunction}{\level}{\Policy}
    &=
    \max_{\Policy' \in \AllPolicies} \Return{\RewardFunction}{\level}{\Policy'} - \Return{\RewardFunction}{\level}{\Policy}
\label{xdef:level-regret}
\\
    \Regret{\RewardFunction}{\distr}{\Policy}
    &=
    \Expect[\theta \sim \distr]{\Regret{\RewardFunction}{\level}{\Policy}}
\label{xdef:distr-regret}
\end{align}
In \cref{sec:preliminaries}, we observe that, for UMDPs, we have the additional basic identity
\[
    \Regret{\RewardFunction}{\distr}{\Policy}
    =
    \max_{\Policy' \in \AllPolicies}
        \Return{\RewardFunction}{\distr}{\Policy'}
    -
    \Return{\RewardFunction}{\distr}{\Policy}.
    \tag{this is equation~\ref{eq:distr-regret}}
\]
This is a nontrivial identity that does not hold for partially observable underspecified environments in which the level is not observable to the policy (see \cref{apx:partial-regret-identity}).
However, for policies that are conditioned on the level, the identity holds, as we now prove.

\begin{apxproposition}[Expected regret identity for UMDPs]\label{prop:identity}
Consider an UMDP $\TupleUMDP$, a goal $\RewardFunction$, and a level distribution $\distr \in \LevelDistrs$.
Let $\AllPolicies$ be the set of all policies of the form
    $\Policy : \LevelSpace \times \States \to \Distr{\Actions}$.
Then we have
\[
    \Regret{\RewardFunction}{\distr}{\Policy}
    =
    \max_{\Policy' \in \AllPolicies}
        \Return{\RewardFunction}{\distr}{\Policy'}
    - \Return{\RewardFunction}{\distr}{\Policy}
.\]
\end{apxproposition}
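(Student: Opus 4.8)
The plan is to reduce the identity to the statement that the maximum over policies commutes with the expectation over levels. Since $\Return{\RewardFunction}{\distr}{\Policy} = \Expect[\level \sim \distr]{\Return{\RewardFunction}{\level}{\Policy}}$ by definition, and since, by the definition of level regret \eqref{xdef:level-regret} together with linearity of expectation,
\[
    \Regret{\RewardFunction}{\distr}{\Policy}
    =
    \Expect[\level \sim \distr]{\max_{\Policy' \in \AllPolicies} \Return{\RewardFunction}{\level}{\Policy'}}
    - \Return{\RewardFunction}{\distr}{\Policy},
\]
the proposition is equivalent to the single equation
\[
    \Expect[\level \sim \distr]{\max_{\Policy' \in \AllPolicies} \Return{\RewardFunction}{\level}{\Policy'}}
    =
    \max_{\Policy' \in \AllPolicies} \Return{\RewardFunction}{\distr}{\Policy'}.
\]
I would establish this equation by proving the two inequalities separately.

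For the direction $\max_{\Policy'} \Return{\RewardFunction}{\distr}{\Policy'} \leq \Expect[\level \sim \distr]{\max_{\Policy'} \Return{\RewardFunction}{\level}{\Policy'}}$, I would argue pointwise: for every fixed $\Policy'$ and every level $\level$ we trivially have $\Return{\RewardFunction}{\level}{\Policy'} \leq \max_{\Policy'' \in \AllPolicies} \Return{\RewardFunction}{\level}{\Policy''}$, and taking the (monotone) expectation over $\level \sim \distr$ gives $\Return{\RewardFunction}{\distr}{\Policy'} \leq \Expect[\level \sim \distr]{\max_{\Policy''} \Return{\RewardFunction}{\level}{\Policy''}}$. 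Since the right-hand side does not depend on $\Policy'$, taking the maximum over $\Policy'$ preserves the inequality. This direction holds for any policy class and does not use level-observability.

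The reverse inequality is the heart of the argument, and is where level-observability is essential. I would construct a single policy $\OptimalPolicy \in \AllPolicies$ that is simultaneously optimal in every level. Because $\States$ and $\Actions$ are finite and $\DiscountRate \in (0,1)$, standard MDP theory guarantees that each fully-specified MDP obtained by fixing $\level$ admits a (stationary, deterministic) optimal policy; for each $\level$ I let $\OptimalPolicy(\level, \blank)$ be such a level-optimal action rule. The resulting map $\OptimalPolicy : \LevelSpace \times \States \to \Distr{\Actions}$ is a legal element of $\AllPolicies$ \emph{precisely because} policies in this setting observe the level, so the per-level choices assemble into one global policy. By construction $\Return{\RewardFunction}{\level}{\OptimalPolicy} = \max_{\Policy'} \Return{\RewardFunction}{\level}{\Policy'}$ for every $\level$, whence $\Return{\RewardFunction}{\distr}{\OptimalPolicy} = \Expect[\level \sim \distr]{\max_{\Policy'} \Return{\RewardFunction}{\level}{\Policy'}}$, and therefore $\max_{\Policy'} \Return{\RewardFunction}{\distr}{\Policy'} \geq \Return{\RewardFunction}{\distr}{\OptimalPolicy} = \Expect[\level \sim \distr]{\max_{\Policy'} \Return{\RewardFunction}{\level}{\Policy'}}$. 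Combining the two inequalities yields the displayed equation, and hence the proposition.

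The main obstacle is the gluing step: one must justify that the level-wise optima are realizable by a single policy in $\AllPolicies$, which hinges entirely on the policy being conditioned on $\level$. In the partially observable case a single policy generally cannot match the optimal return in every level at once, which is exactly why the identity fails there (cf.\ \cref{apx:partial-regret-identity}). A secondary technical point is the existence of level-wise optimal policies, but finiteness of $\States$ and $\Actions$ together with $\DiscountRate \in (0,1)$ renders this routine.
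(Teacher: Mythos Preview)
Your proposal is correct and follows essentially the same approach as the paper: reduce to the commutativity of $\max_{\Policy'}$ and $\Expect[\level\sim\distr]{\cdot}$, prove the easy inequality pointwise, and prove the nontrivial inequality by gluing level-wise optimal policies into a single level-conditioned policy. The paper merely factors the commutativity step into a separate proposition (\cref{prop:exchange}), but the argument is the same.
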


\begin{proof}
\leavevmode\vspace{-\baselineskip-\abovedisplayskip}
\begin{align*}
    \Regret{\RewardFunction}{\distr}{\Policy}
    &=
    \Expect[\theta \sim \distr]{
        \Regret{\RewardFunction}{\level}{\Policy}
    }
\tag{equation~\ref{xdef:distr-regret}}
\\  &=
    \Expect[\theta \sim \distr]{
        \max_{\Policy' \in \AllPolicies}
            \Return{\RewardFunction}{\level}{\Policy'}
        - \Return{\RewardFunction}{\level}{\Policy}
    }
\tag{by equation~\ref{xdef:level-regret}}
\\  &=
    \max_{\Policy' \in \AllPolicies}
        \Expect[\theta \sim \distr]{
            \Return{\RewardFunction}{\level}{\Policy'}
        }
    -
    \Expect[\theta \sim \distr]{
        \Return{\RewardFunction}{\level}{\Policy}
    }
\tag{by \cref{prop:exchange}, below}
\\  &=
    \max_{\Policy' \in \AllPolicies}
        \Return{\RewardFunction}{\distr}{\Policy'}
    - \Return{\RewardFunction}{\distr}{\Policy}.
\tag*{(by equation~\ref{xdef:distr-return}) \qedhere}
\end{align*}
\end{proof}

The above proof relies on \cref{prop:exchange}, which says that we can exchange expectation and maximization for the expected return since the policy is conditioned on the level.

\begin{apxproposition}[Expectation and maximization of expected return commute for UMDPs]
\label{prop:exchange}
Consider an UMDP $\TupleUMDP$, a goal $\RewardFunction$, and a level distribution $\distr \in \LevelDistrs$.
Let $\AllPolicies$ be the set of all policies of the form
    $\Policy : \LevelSpace \times \States \to \Distr{\Actions}$.
Then we have
\[
    \Expect[\level \sim \distr]{
        \max_{\Policy \in \AllPolicies}
            \Return{\RewardFunction}{\level}{\Policy}
    }
    =
    \max_{\Policy \in \AllPolicies}
        \Expect[\level \sim \distr]{
            \Return{\RewardFunction}{\level}{\Policy}
        }
.\]
\end{apxproposition}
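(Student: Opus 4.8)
The plan is to prove the two inequalities separately. The direction
$\max_{\Policy \in \AllPolicies} \Expect[\level \sim \distr]{\Return{\RewardFunction}{\level}{\Policy}} \leq \Expect[\level \sim \distr]{\max_{\Policy \in \AllPolicies} \Return{\RewardFunction}{\level}{\Policy}}$
is immediate and holds for any family of functions of $\level$: for each fixed policy $\Policy$ and each level $\level$ we have the pointwise bound $\Return{\RewardFunction}{\level}{\Policy} \leq \max_{\Policy' \in \AllPolicies} \Return{\RewardFunction}{\level}{\Policy'}$, so taking the expectation over $\level \sim \distr$ and then the maximum over $\Policy$ on the left-hand side preserves the inequality.

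The reverse inequality is where the level-conditioning hypothesis is essential, and is the main obstacle. The key observation I would isolate first is that the return $\Return{\RewardFunction}{\level}{\Policy}$ in a fixed level $\level$ depends only on the restriction of $\Policy$ to that level, i.e.\ on the conditional distribution $\Policy(\level, \blank)$. This is clear from the definition in equation~\ref{xdef:level-return}, where the initial state distribution, the transitions, and the sampled reward trajectory all condition on the fixed $\level$. Using this, I would construct a single \emph{level-wise optimal} policy $\OptimalPolicy$ by stitching together per-level optima: since $\LevelSpace$, $\States$, $\Actions$ are finite, for each $\level$ the set $\argmax_{\Policy \in \AllPolicies} \Return{\RewardFunction}{\level}{\Policy}$ is non-empty, so I pick some maximizer $\Policy_\level$ and set $\OptimalPolicy(\level, \blank) = \Policy_\level(\level, \blank)$ for every $\level \in \LevelSpace$. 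Because a policy $\Policy : \LevelSpace \times \States \to \Distr{\Actions}$ observes the level, this stitched-together map is itself a valid element of $\AllPolicies$.

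With $\OptimalPolicy$ in hand, the locality observation gives $\Return{\RewardFunction}{\level}{\OptimalPolicy} = \Return{\RewardFunction}{\level}{\Policy_\level} = \max_{\Policy \in \AllPolicies} \Return{\RewardFunction}{\level}{\Policy}$ for every $\level$, since $\OptimalPolicy$ and $\Policy_\level$ agree on level $\level$ and the return is insensitive to behavior in other levels. Taking the expectation over $\level \sim \distr$ then yields $\Expect[\level \sim \distr]{\max_{\Policy \in \AllPolicies} \Return{\RewardFunction}{\level}{\Policy}} = \Expect[\level \sim \distr]{\Return{\RewardFunction}{\level}{\OptimalPolicy}} \leq \max_{\Policy \in \AllPolicies} \Expect[\level \sim \distr]{\Return{\RewardFunction}{\level}{\Policy}}$, where the final step merely uses that $\OptimalPolicy$ is one particular policy competing in the outer maximum. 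Combining the two inequalities gives the claimed equality.

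I expect the only real content to be the stitching construction together with the verification that returns are level-local; the probabilistic manipulations are otherwise routine. The hypothesis that policies condition on $\level$ is exactly what makes $\OptimalPolicy$ admissible, and I would emphasize this point, since without it (the partially observable case) a single policy generally cannot be simultaneously optimal in every level, which is precisely why the identity can fail in that setting.
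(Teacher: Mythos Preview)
Your proposal is correct and follows essentially the same approach as the paper: both directions are proved separately, with the easy inequality following from the pointwise bound and the hard inequality established by the same stitching construction of a level-wise optimal policy, justified by the observation that $\Return{\RewardFunction}{\level}{\Policy}$ depends on $\Policy$ only through $\Policy(\level,\blank)$. The paper's proof is virtually identical in structure and emphasis.
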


\begin{proof}
\textit{($\geq$):} Note that this direction holds regardless of whether we condition policies on the level. Let
$
    \OptimalPolicy
    \in
    \argmax_{\Policy\in\AllPolicies}
        \Expect[\level \sim \distr]{
            \Return{\RewardFunction}{\level}{\Policy}
        }
$.
Then we have 
\begin{equation*}
    \max_{\Policy\in\AllPolicies}
        \Expect[\level \sim \distr]{
            \Return{\RewardFunction}{\level}{\Policy}
        }
    =
    \Expect[\level \sim \distr]{
        \Return{\RewardFunction}{\level}{\OptimalPolicy}
    }
    \leq
    \Expect[\level \sim \distr]{
        \max_{\Policy\in\AllPolicies}
            \Return{\RewardFunction}{\level}{\Policy}
    }.
\end{equation*}

\textit{($\leq$):} Observe that, per equation~\cref{xdef:level-return},
    $\Return{\RewardFunction}{\level}{\Policy}$
depends only on $\Policy$ through
    $\Policy(\level, \blank) : \States \to \Distr{\Actions}$,
that is, through the policy conditioned on the fixed level $\level$.
Therefore, we can construct a single policy that achieves the maximum expected return under all levels. For $\level \in \LevelSpace$, let
$
    \OptimalPolicy_\level
    \in
    \argmax_{\Policy\in\AllPolicies}
    \Return{\RewardFunction}{\level}{\Policy}
$.
Then define $\OptimalPolicy : \LevelSpace \times \States \to \Distr{\Actions}$ such that for $\level \in \LevelSpace$, $s \in \States$, and $a \in \Actions$,
\(
    \OptimalPolicy(a \mid \level, s) = \OptimalPolicy_\level(a \mid \level, s)
\).
By construction, we have
$
    \OptimalPolicy_\level
    \in
    \argmax_{\Policy\in\AllPolicies}
    \Return{\RewardFunction}{\level}{\Policy}
$
for all $\level \in \LevelSpace$.
It follows that
\begin{equation*}
    \Expect[\level \sim \distr]{
        \max_{\Policy \in \AllPolicies}
            \Return{\RewardFunction}{\level}{\Policy}
    }
    =
    \Expect[\level \sim \distr]{
        \Return{\RewardFunction}{\level}{\OptimalPolicy}
    }
    \leq
    \max_{\Policy \in \AllPolicies}
        \Expect[\level \sim \distr]{
            \Return{\RewardFunction}{\level}{\Policy}
        }.
\qedhere
\end{equation*}
\end{proof}

\section{Approximate relaxations of the minimax expected regret decision rule}
\label{apx:approximinimax}

The minimax expected regret decision rule says to choose a policy that \emph{minimizes} the expected regret with respect to the \emph{maximum} expected regret level distribution for the given policy.
In \cref{sec:mmer-robustness}, we consider one possible approximate relaxation of this decision rule, where we replace only the minimization step with approximate minimization (but retain the exact maximization step).

In this appendix, we formulate two alternative approximate relaxations of the minimax expected regret decision rule that also relax the maximization step (\cref{apx:approximinimax-definitions}). We also show that these three formulations are asymptotically equivalent (\cref{apx:approximinimax-equivalence}), and we derive robustness guarantees akin to \cref{thm:mmer-robustness} corresponding to the two new definitions (\cref{apx:approximinimax-corollaries}).

\subsection{Alternative definitions of approximate minimax expected regret}
\label{apx:approximinimax-definitions}

First, we restate the approximate MMER definition from \cref{sec:mmer-robustness}. The only difference is that we add a qualifier ``(1)'' in preparation for distinguishing this definition from the two alternatives to follow, and suppress the dependence on $\RewardFunction$ in the notation for brevity.
\begin{definition}[Approximate MMER(1), restating \cref{def:approx-mmer}]\label{def:approx-mmer1}
    Consider
        an UMDP $\TupleUMDP$,
        a goal $\RewardFunction$,
    and
        an approximation threshold $\eps \geq 0$.
    The \emph{approximate MMER(1) policy set} is then
    \begin{equation*}
        \MMEROnePolicies[\eps]
        =
        \xargmin[\eps]_{\pi\in\AllPolicies}
            \max_{\distr \in \LevelDistrs}
                \Regret{\RewardFunction}{\distr}{\Policy}.
    \end{equation*}
\end{definition}
This decision rule is approximate in that we don't assume we can find a policy that achieves the true \emph{minimum} of the maximum expected regret. However, we still assume we can find the true \emph{maximum} expected regret for each policy.
We consider next two approaches for relaxing this assumption.

The first approach casts the MMER objective as finding a Nash equilibrium of a two-player, simultaneous-play zero-sum game in which the first player is the agent selecting a policy and the second player is an adversary selecting a level distribution.
We can therefore relax both the minimization and the maximization simultaneously by using the concept of an \emph{approximate Nash equilibrium,} in which each player plays an \emph{approximate} best response.
\begin{definition}[Approximate MMER(2)]\label{def:approx-mmer2}
    Consider
        an UMDP $\TupleUMDP$,
        a goal $\RewardFunction$,
    and
        approximation thresholds $\eps, \delta \geq 0$.
    Consider the two-player zero-sum game
    $\big\langle
        \langle
            \AllPolicies
            ,
            \LevelDistrs
        \rangle,
        \langle
            -\RegretSymbol^\RewardFunction
            ,
            \RegretSymbol^\RewardFunction
        \rangle
    \big\rangle$,
    where
        an \emph{agent} plays a policy $\Policy \in \AllPolicies$
    and
        an \emph{adversary} plays a level distribution $\distr \in \LevelDistrs$,
    aiming to minimize or maximize
        $\Regret{\RewardFunction}{\distr}{\Policy}$
    respectively.
    A pair $(\Policy, \distr)$ is an \emph{$(\eps,\delta)$-equilibrium} if
    \[
        \Policy
        \in
        \xargmin_{\Policy' \in \AllPolicies}
            \Regret{\RewardFunction}{\distr}{\Policy'}
    \qquad\text{and}\qquad
        \distr
        \in
        \xargmax[\delta]_{\distr' \in \LevelDistrs}
            \Regret{\RewardFunction}{\distr'}{\Policy}
    .\]
    The \emph{approximate MMER(2) policy set} is then
    \begin{equation*}
        \MMERTwoPolicies
        =
        \bigl\{
            \Policy \in \AllPolicies
        \bigm\vert%
            \text{%
                $\exists\distr \in \LevelDistrs$
                such that
                $(\Policy,\distr)$
                is an $(\eps,\delta)$-equilibrium%
            }
        \bigr\}.
    \end{equation*}
\end{definition}

Our second approach conditions on a concrete mapping capturing an approximately optimal response from the adversary to each possible policy, with respect to which we require the chosen MMER policy to be approximately optimal. This definition could alternatively be formulated in terms of a \emph{sequential} zero-sum game where the agent chooses the policy and reveals it to the adversary prior to the adversary choosing a level distribution aiming to maximize expected regret.
\begin{definition}[Approximate MMER(3)]\label{def:approx-mmer3}
    Consider
        an UMDP $\TupleUMDP$,
        a goal $\RewardFunction$,
    and
        approximation thresholds $\eps, \eta \geq 0$.
    Let $\lambda : \AllPolicies \to \LevelDistrs$ be a function such that for all $\Policy \in \AllPolicies$,
    \begin{equation*}
        \lambda(\Policy) \in \xargmax[\eta]_{\distr \in \LevelDistrs}
            \Regret{\RewardFunction}{\distr}{\Policy}.
    \end{equation*}
    Call a function with this property an \emph{$\eta$-approximate adversarial map}.
    The \emph{approximate MMER(3) policy set} with respect to $\lambda$ is then
    \begin{equation*}
        \MMERThreePolicies
        = \xargmin[\eps]_{\Policy \in \AllPolicies}
            \Regret{\RewardFunction}{\lambda(\Policy)}{\Policy}
        .
    \end{equation*}
\end{definition}

\subsection{Asymptotic equivalence of the definitions}
\label{apx:approximinimax-equivalence}

Definitions~\ref{def:approx-mmer1}, \ref{def:approx-mmer2}, and \ref{def:approx-mmer3} do not necessarily define equal sets of policies. 
However, the three sets of policies are closely related.
\Cref{prop:asymptotic-equivalence}, below, shows that each set is contained in the others for appropriately-chosen values of the approximation thresholds $\eps, \delta, \eta \geq 0$.

\begin{apxproposition}[Asymptotic equivalence of approximate MMER definitions]\label{prop:asymptotic-equivalence}
    Consider
        an UMDP $\TupleUMDP$,
        a goal $\RewardFunction$,
        approximation thresholds $\eps, \delta, \eta \geq 0$,
    and 
        an $\eta$-approximate adversarial map $\lambda$.
    We have the following relations:
    \begin{align*}
            \MMEROnePolicies[\eps]            & \subseteq \MMERTwoPolicies[\eps,\eps]
        &   \MMERTwoPolicies[\eps,\delta]     & \subseteq \MMEROnePolicies[\eps+\delta]
        \\  \MMEROnePolicies[\eps]            & \subseteq \MMERThreePolicies[\eps+\eta,\lambda]
        &   \MMERThreePolicies[\eps,\lambda]  & \subseteq \MMEROnePolicies[\eps+\eta]
        \\  \MMERTwoPolicies[\eps,\delta]     & \subseteq \MMERThreePolicies[\eps+\delta+\eta,\lambda]
        &   \MMERThreePolicies[\eps,\lambda]  & \subseteq \MMERTwoPolicies[\eps+\eta,\eps+\eta]
    .
    \end{align*}
\end{apxproposition}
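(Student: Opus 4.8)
The plan is to treat \cref{def:approx-mmer1} (MMER(1)) as a hub and reduce the other two definitions to it, so that the two ``diagonal'' inclusions relating MMER(2) and MMER(3) follow by composition. Writing $f(\Policy) = \max_{\distr \in \LevelDistrs}\Regret{\RewardFunction}{\distr}{\Policy}$ for the worst-case expected regret and $g_\lambda(\Policy) = \Regret{\RewardFunction}{\lambda(\Policy)}{\Policy}$ for the regret realized against the approximate adversarial map, the definition of an $\eta$-approximate adversarial map immediately gives the uniform sandwich $f(\Policy) - \eta \le g_\lambda(\Policy) \le f(\Policy)$ for every $\Policy$. This single two-sided bound is what drives the MMER(1)$\leftrightarrow$MMER(3) inclusions.

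The enabling observation is that the minimax value vanishes: $\min_{\Policy \in \AllPolicies} f(\Policy) = 0$, and moreover $\min_{\Policy \in \AllPolicies}\Regret{\RewardFunction}{\distr}{\Policy} = 0$ for every fixed $\distr$. The pointwise identity is exactly the final display in the proof of \cref{thm:mmer-robustness} and may be cited directly. The minimax value then follows because the level is observable, so (as constructed in the proof of \cref{prop:exchange}) there is a single policy $\OptimalPolicy$ that is optimal in every level; this policy has zero regret against every $\distr$, giving $f(\OptimalPolicy)=0$, while $f \ge 0$ forces the minimum to equal $0$. Consequently $\min_\Policy g_\lambda(\Policy) = 0$ as well, since it is squeezed between $0$ and $\min_\Policy f = 0$ by the sandwich and nonnegativity of regret. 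With these facts in hand, membership in each set reduces to a scalar inequality: $\Policy \in \MMEROnePolicies[\eps]$ iff $f(\Policy)\le\eps$, and $\Policy \in \MMERThreePolicies[\eps,\lambda]$ iff $g_\lambda(\Policy)\le\eps$.

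Given these characterizations, the four MMER(1)-centred inclusions are short. For $\MMEROnePolicies[\eps]\subseteq\MMERThreePolicies[\eps+\eta,\lambda]$ I would use $g_\lambda(\Policy)\le f(\Policy)\le\eps$; for the reverse $\MMERThreePolicies[\eps,\lambda]\subseteq\MMEROnePolicies[\eps+\eta]$ use $f(\Policy)\le g_\lambda(\Policy)+\eta\le\eps+\eta$. For $\MMEROnePolicies[\eps]\subseteq\MMERTwoPolicies[\eps,\eps]$ I would exhibit the witness $\distr^\star\in\argmax_{\distr}\Regret{\RewardFunction}{\distr}{\Policy}$ (attained since $\LevelSpace$ is finite) and verify the two equilibrium conditions: the adversary-optimality condition holds because $\distr^\star$ is an exact maximizer, and the agent-optimality condition holds because $\Regret{\RewardFunction}{\distr^\star}{\Policy}=f(\Policy)\le\eps=\min_{\Policy'}\Regret{\RewardFunction}{\distr^\star}{\Policy'}+\eps$, using $\min_{\Policy'}\Regret{\RewardFunction}{\distr^\star}{\Policy'}=0$. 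For $\MMERTwoPolicies[\eps,\delta]\subseteq\MMEROnePolicies[\eps+\delta]$, an $(\eps,\delta)$-equilibrium witness $\distr$ satisfies $\Regret{\RewardFunction}{\distr}{\Policy}\le\eps$ from agent-optimality and $f(\Policy)\le\Regret{\RewardFunction}{\distr}{\Policy}+\delta$ from adversary-optimality, so $f(\Policy)\le\eps+\delta$.

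Finally, the two remaining inclusions follow by transitivity through MMER(1): $\MMERTwoPolicies[\eps,\delta]\subseteq\MMEROnePolicies[\eps+\delta]\subseteq\MMERThreePolicies[\eps+\delta+\eta,\lambda]$ and $\MMERThreePolicies[\eps,\lambda]\subseteq\MMEROnePolicies[\eps+\eta]\subseteq\MMERTwoPolicies[\eps+\eta,\eps+\eta]$, applying the already-proven inclusions at the shifted thresholds. The only genuinely delicate step is the equilibrium construction for $\MMEROnePolicies[\eps]\subseteq\MMERTwoPolicies[\eps,\eps]$: there it is essential that the adversary's exact best response $\distr^\star$ simultaneously certifies the adversary's near-optimality and, via the vanishing inner minimum $\min_{\Policy'}\Regret{\RewardFunction}{\distr^\star}{\Policy'}=0$, the agent's near-optimality against it. Everything else is bookkeeping with the sandwich bound and the vanishing minimax value.
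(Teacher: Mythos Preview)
Your proposal is correct and follows the same hub-and-spoke architecture as the paper: prove the four inclusions centred on MMER(1), then obtain the two MMER(2)$\leftrightarrow$MMER(3) inclusions by composition. The difference is in the engine that drives the MMER(1)$\leftrightarrow$MMER(2) inclusions. The paper invokes the minimax equality $\min_{\Policy}\max_{\distr}\Regret{\RewardFunction}{\distr}{\Policy}=\max_{\distr}\min_{\Policy}\Regret{\RewardFunction}{\distr}{\Policy}$ as a black box, and for ($1\subseteq2$) takes as witness a \emph{maximin} distribution $\distr^{(\ast)}\in\argmax_{\distr}\min_{\Policy}\Regret{\RewardFunction}{\distr}{\Policy}$, independent of the particular policy. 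You instead exploit the stronger, more concrete fact that the minimax value is actually $0$ (because a single level-conditioned policy is optimal in every level), which lets you take as witness the policy-specific best response $\distr^\star\in\argmax_{\distr}\Regret{\RewardFunction}{\distr}{\Policy}$: adversary-optimality is then exact, and agent-optimality follows from $\min_{\Policy'}\Regret{\RewardFunction}{\distr^\star}{\Policy'}=0$. Your route is slightly more elementary in that it never needs the minimax theorem as such, only the existence of a universally optimal policy; the paper's route, by contrast, degrades gracefully to the partially observable case (where the minimax value need not vanish) by tracking the minimax gap $\Delta$, as noted in its post-proof remarks.
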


\begin{proof}
We prove the top four subset relationships. The remaining two relationships follow. Before proceeding, we note that since we assume $\LevelDistrs$, $\States$, and $\Actions$ are finite and our environments are fully observable, we have
\begin{equation}\label{eq:minimax-gap}
    \min_{\Policy\in\AllPolicies}
        \max_{\distr\in\LevelDistrs}
            \Regret{\RewardFunction}{\distr}{\Policy}
    = 
    \max_{\distr\in\LevelDistrs}
        \min_{\Policy\in\AllPolicies}
            \Regret{\RewardFunction}{\distr}{\Policy}
    .
\end{equation}
In the general case, modified bounds can be derived by accounting for the value of the difference $\displaystyle
    \min_{\Policy\in\AllPolicies}
        \max_{\distr\in\LevelDistrs}
            \Regret{\RewardFunction}{\distr}{\Policy}
    -
    \max_{\distr\in\LevelDistrs}
        \min_{\Policy\in\AllPolicies}
            \Regret{\RewardFunction}{\distr}{\Policy}
$ (see the remark after this proof).

($1 \subseteq 2$):
    Suppose $\displaystyle \Policy^{(1)}
    \in \MMEROnePolicies[\eps]
    = \xargmin[\eps]_{\Policy \in \AllPolicies}
            \max_{\mathclap{\distr\in\LevelDistrs}}
                \Regret{\RewardFunction}{\distr}{\Policy}
    $.
    Put
    $\displaystyle
        \distr^{(*)}
        \in
        \argmax_{\distr\in\LevelDistrs}
            \min_{\Policy\in\AllPolicies}
                \Regret{\RewardFunction}{\distr}{\Policy}
    $.
    Then, we have the following cycle of relations.
    \begin{align*}
        \Regret{\RewardFunction}{\distr^{(*)}}{\Policy^{(1)}}
        & \leq \max_{\distr\in\LevelDistrs} \Regret{\RewardFunction}{\distr}{\Policy^{(1)}}
        \tag{by definition of $\max$}
    \\
        & \leq \min_{\Policy\in\AllPolicies} \max_{\distr\in\LevelDistrs} \Regret{\RewardFunction}{\distr}{\Policy} + \eps
        \tag{$\displaystyle \Policy^{(1)} \in \xargmin[\eps]_{\Policy \in \AllPolicies} \max_{\distr\in\LevelDistrs} \Regret{\RewardFunction}{\distr}{\Policy}$}
    \\
        & = \max_{\distr\in\LevelDistrs} \min_{\Policy\in\AllPolicies} \Regret{\RewardFunction}{\distr}{\Policy} + \eps
        \tag{by equation~\ref{eq:minimax-gap}}
    \\
        & = \min_{\Policy\in\AllPolicies} \Regret{\RewardFunction}{\distr^{(*)}}{\Policy} + \eps
        \tag{$\displaystyle \distr^{(*)} \in \argmax_{\distr\in\LevelDistrs} \min_{\Policy\in\AllPolicies} \Regret{\RewardFunction}{\distr}{\Policy}$}
    \\
        & \leq \Regret{\RewardFunction}{\distr^{(*)}}{\Policy^{(1)}} + \eps.
        \tag{by definition of $\min$}
    \end{align*}
    Comparing the first and second-last terms, we have that
        $\Policy^{(1)} \in \xargmin[\eps]_{\Policy\in\AllPolicies} \Regret{\RewardFunction}{\distr^{(*)}}{\Policy}$.
    Comparing the second and the last terms, we have 
        $\distr^{(*)} \in \xargmax[\eps]_{\distr\in\LevelDistrs} \Regret{\RewardFunction}{\distr}{\Policy^{(1)}}$.
    It follows that
        $(\Policy^{(1)}, \distr^{(*)})$ is an $(\eps, \eps)$-equilibrium,
    which means $\Policy^{(1)} \in \MMERTwoPolicies[\eps,\eps]$.

($2 \subseteq 1$):
    Suppose $\Policy^{(2)} \in \MMERTwoPolicies[\eps,\delta]$.
    Then by definition there exists $\distr^{(2)} \in \LevelDistrs$ such that both
        $\displaystyle\Policy^{(2)} \in \xargmin_{\Policy \in \AllPolicies} \Regret{\RewardFunction}{\distr^{(2)}}{\Policy}$
    and
        $\displaystyle\distr^{(2)} \in \xargmax[\delta]_{\distr \in \LevelDistrs} \Regret{\RewardFunction}{\distr}{\Policy^{(2)}}$.
    Then we have
    \begin{align*}
        \max_{\distr\in\LevelDistrs} \Regret{\RewardFunction}{\distr}{\Policy^{(2)}}
        & \leq \Regret{\RewardFunction}{\distr^{(2)}}{\Policy^{(2)}} + \delta
        \tag{$\displaystyle\distr^{(2)} \in \xargmax[\delta]_{\distr \in \LevelDistrs} \Regret{\RewardFunction}{\distr}{\Policy^{(2)}}$}
    \\
        & \leq \min_{\Policy\in\AllPolicies} \Regret{\RewardFunction}{\distr^{(2)}}{\Policy} + \eps + \delta
        \tag{$\displaystyle\Policy^{(2)} \in \xargmin_{\Policy \in \AllPolicies} \Regret{\RewardFunction}{\distr^{(2)}}{\Policy}$}
    \\
        & \leq \max_{\distr\in\LevelDistrs} \min_{\Policy\in\AllPolicies} \Regret{\RewardFunction}{\distr}{\Policy} + \eps + \delta
        \tag{by definition of $\max$}
    \\ 
        & = \min_{\Policy\in\AllPolicies} \max_{\distr\in\LevelDistrs} \Regret{\RewardFunction}{\distr}{\Policy} + \eps + \delta.
        \tag{by equation~\ref{eq:minimax-gap}}
    \end{align*}
    
    Therefore,
        $\Policy^{(2)}
        \in
        \xargmin[(\eps{+}\delta)]_{\Policy \in \AllPolicies}
            \max_{\distr\in\LevelDistrs}
                \Regret{\RewardFunction}{\distr}{\Policy}
        = \MMEROnePolicies[\eps+\delta]$.

($1 \subseteq 3$):
    Suppose $\displaystyle \Policy^{(1)}
    \in \MMEROnePolicies[\eps]
    = \xargmin[\eps]_{\Policy \in \AllPolicies}
            \max_{\distr\in\LevelDistrs}
                \Regret{\RewardFunction}{\distr}{\Policy}
    $.
    Note also that, by definition, for all $\Policy \in \AllPolicies$,
        $\displaystyle \lambda(\Policy) \in \xargmax[\eta]_{\distr \in \LevelDistrs} \Regret{\RewardFunction}{\distr}{\Policy}$.
    Then we have
    \begin{align*}
    \Regret{\RewardFunction}{\lambda(\Policy^{(1)})}{\Policy^{(1)}}
        & \leq \max_{\distr\in\LevelDistrs} \Regret{\RewardFunction}{\distr}{\Policy^{(1)}}
        \tag{by definition of $\max$}
    \\
        & \leq \min_{\Policy\in\AllPolicies} \max_{\distr\in\LevelDistrs} \Regret{\RewardFunction}{\distr}{\Policy} + \eps
        \tag{$\displaystyle \Policy^{(1)} \in \xargmin[\eps]_{\Policy \in \AllPolicies} \max_{\distr\in\LevelDistrs} \Regret{\RewardFunction}{\distr}{\Policy}$}
    \\
        & \leq \min_{\Policy\in\AllPolicies} \left(
            \Regret{\RewardFunction}{\lambda(\Policy)}{\Policy} + \eta
        \right) + \eps
        \tag{by definition of $\lambda$; monotonicity of $\min$}
    \\
        & \leq \min_{\Policy\in\AllPolicies} \Regret{\RewardFunction}{\lambda(\Policy)}{\Policy} + \eta + \eps
    .
        \tag{$\eta$ constant wrt.\ $\Policy$}
    \end{align*}
    Therefore, $\Policy^{(1)} \in \xargmin[(\eps{+}\eta)]_{\Policy\in\AllPolicies} \Regret{\RewardFunction}{\lambda(\Policy)}{\Policy} = \MMERThreePolicies[\eps+\eta,\lambda]$.

($3 \subseteq 1$):
    Suppose $\displaystyle \Policy^{(3)} \in \MMERThreePolicies[\eps,\lambda] = \xargmin[\eps]_{\Policy\in\AllPolicies} \Regret{\RewardFunction}{\lambda(\Policy)}{\Policy}$.
    Then we have
    \begin{align*}
        \max_{\distr\in\LevelDistrs} \Regret{\RewardFunction}{\distr}{\Policy^{(3)}}
        & \leq \Regret{\RewardFunction}{\lambda(\Policy^{(3)})}{\Policy^{(3)}} + \eta
        \tag{by definition of $\lambda$}
    \\
        & \leq \min_{\Policy\in\AllPolicies}\Regret{\RewardFunction}{\lambda(\Policy)}{\Policy} + \eps + \eta
        \tag{$\displaystyle \Policy^{(3)} \in \xargmin[\eps]_{\Policy\in\AllPolicies} \Regret{\RewardFunction}{\lambda(\Policy)}{\Policy}$}
    \\
        & \leq \min_{\Policy\in\AllPolicies} \max_{\distr\in\LevelDistrs} \Regret{\RewardFunction}{\distr}{\Policy} + \eps + \eta
    .
        \tag{by definition of $\max$; monotonicity of $\min$}
    \end{align*}
    Therefore, $\Policy^{(3)} \in \xargmin[(\eps{+}\eta)]_{\Policy\in\AllPolicies} \max_{\distr\in\LevelDistrs} \Regret{\RewardFunction}{\distr}{\Policy} = \MMEROnePolicies[\eps+\eta]$.

($2 \subseteq 3$):
    follows from ($2 \subseteq 1$) and ($1 \subseteq 3$).

($3 \subseteq 2$):
    follows from ($3 \subseteq 1$) and ($1 \subseteq 2$).
\end{proof}

\paragraph{Remarks \textnormal{(Generalization to non-finite environments)}.}
Unlike our other results, \cref{prop:asymptotic-equivalence} relies on the assumption that the UMDP is finite. This assumption guarantees that there exists a Nash equilibrium for the game in \cref{def:approx-mmer2}.
The definitions and results can be generalized to infinite environments, so long as the necessary minima and maxima are defined. However, if there do not exist (approximate) Nash equilibria at some approximation thresholds, then it becomes necessary to account for the possibility that the set of MMER(2) policies is empty.
We can generalize the relations in terms of the \textbf{minimax gap},
\[\displaystyle
    \Delta
    =
    \min_{\Policy\in\AllPolicies}
        \max_{\distr\in\LevelDistrs}
            \Regret{\RewardFunction}{\distr}{\Policy}
    -
    \max_{\distr\in\LevelDistrs}
        \min_{\Policy\in\AllPolicies}
            \Regret{\RewardFunction}{\distr}{\Policy}
    \geq 0.
\]
The minimax gap is always non-negative when it is well-defined, due to the max--min inequality. If the game permits Nash equilibria, the max--min inequality is an equality and $\Delta = 0$.
In general, if the game permits an $(\eps, \delta)$-equilibrium for some $\eps,\delta \geq 0$, then $\Delta \leq \eps+\delta$, and there exists $\eps,\delta\geq0$ and an $(\eps,\delta)$-equilibrium such that $\Delta = \eps+\delta$.

We can now extend the proof of \cref{prop:asymptotic-equivalence} to derive the following relations:
\begin{align*}
        \MMEROnePolicies[\eps]            & \subseteq \MMERTwoPolicies[\eps+\Delta,\eps+\Delta]
    &   \MMERTwoPolicies[\eps,\delta]     & \subseteq   \MMEROnePolicies[\eps+\delta-\Delta]
    \\  \MMERTwoPolicies[\eps,\delta]     & \subseteq \MMERThreePolicies[\eps+\delta-\Delta+\eta,\lambda]
    &   \MMERThreePolicies[\eps,\lambda]  & \subseteq \MMERTwoPolicies[\eps+\eta+\Delta,\eps+\eta+\Delta].
\end{align*}
The relations between MMER(1) and MMER(3), not involving MMER(2) or the existence of equilibria, are unchanged.

\clearpage

\subsection{Generalizing the robustness result}
\label{apx:approximinimax-corollaries}

In this section, we combine \cref{thm:mmer-robustness,prop:asymptotic-equivalence} to show robustness results for the two alternative definitions of approximate MMER policies (\cref{def:approx-mmer2,def:approx-mmer3}).

\begin{apxcorollary}[MMER(2) is robust to goal misgeneralization]
Consider
    an UMDP $\TupleUMDP$,
    a pair of goals $\RewardFunction, \ProxyRewardFunction$,
    a proxy-distinguishing distribution shift $\TupleShift$,
and
    approximation thresholds $\eps, \delta \geq 0$.
Then
\[
    \forall \MMERPolicy \in \MMERTwoPolicies[\eps,\delta],
    \text{ we have }
    \MMERPolicy
    \in
    \xargmax[(\eps{+}\delta)]{}_{\Policy \in \AllPolicies} 
        \Return{\RewardFunction}{\distrTest}{\Policy}
.\]
\end{apxcorollary}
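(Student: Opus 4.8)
The plan is to derive this corollary directly from the two results already proven in the excerpt, rather than re-running the regret manipulation from scratch. Specifically, I would combine \cref{prop:asymptotic-equivalence} (which relates the three approximate MMER definitions) with \cref{thm:mmer-robustness} (which gives robustness for the MMER(1) set). The observation is that $\Return{\RewardFunction}{\distrTest}{\Policy} \in \xargmax[\eps]_{\Policy \in \AllPolicies} \Return{\RewardFunction}{\distrTest}{\Policy}$ is exactly the conclusion we want, just with approximation threshold $\eps+\delta$ in place of $\eps$, so it suffices to express the MMER(2) set as a subset of an MMER(1) set at the right threshold.

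First I would invoke the relation $\MMERTwoPolicies[\eps,\delta] \subseteq \MMEROnePolicies[\eps+\delta]$ from \cref{prop:asymptotic-equivalence}. Then, given any $\MMERPolicy \in \MMERTwoPolicies[\eps,\delta]$, this containment immediately gives $\MMERPolicy \in \MMEROnePolicies[\eps+\delta] = \MMEROptimalPolicies[\eps+\delta]{\RewardFunction}$, recalling that MMER(1) is just the original approximate MMER set from \cref{def:approx-mmer}. Next I would apply \cref{thm:mmer-robustness} with approximation threshold $\eps+\delta$ in place of $\eps$. Since that theorem states that every member of $\MMEROptimalPolicies[\eps']{\RewardFunction}$ lies in $\xargmax[\eps']_{\Policy \in \AllPolicies} \Return{\RewardFunction}{\distrTest}{\Policy}$ for an arbitrary threshold $\eps' \geq 0$, instantiating with $\eps' = \eps+\delta$ yields exactly $\MMERPolicy \in \xargmax[(\eps{+}\delta)]_{\Policy \in \AllPolicies} \Return{\RewardFunction}{\distrTest}{\Policy}$, which is the desired conclusion.

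The entire argument is a two-step composition of existing results, so there is no genuine obstacle. The only point requiring a little care is making sure the approximation thresholds track correctly: the slack $\delta$ from the adversary's approximate maximization in the equilibrium must be absorbed into the threshold passed to \cref{thm:mmer-robustness}, which is why the conclusion carries $\eps+\delta$ rather than $\eps$. I would also note explicitly that \cref{thm:mmer-robustness} is stated for a general threshold, so re-instantiating it at $\eps+\delta$ is legitimate and requires no reproof. An entirely parallel corollary for the MMER(3) set would follow the same template, using $\MMERThreePolicies[\eps,\lambda] \subseteq \MMEROnePolicies[\eps+\eta]$ and instantiating \cref{thm:mmer-robustness} at threshold $\eps+\eta$.
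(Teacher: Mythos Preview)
Your proposal is correct and follows exactly the same approach as the paper: invoke \cref{prop:asymptotic-equivalence} to obtain $\MMERTwoPolicies[\eps,\delta] \subseteq \MMEROnePolicies[\eps+\delta]$, then apply \cref{thm:mmer-robustness} at threshold $\eps+\delta$. The paper's proof is a terse two-line version of precisely this argument, and your closing remark about the parallel corollary for MMER(3) is also exactly what the paper does next.
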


\begin{proof}
    Let $\MMERPolicy \in \MMERTwoPolicies[\eps,\delta]$.
    We have $\MMERPolicy \in \MMEROnePolicies[\eps+\delta]$ by \cref{prop:asymptotic-equivalence}.
    The corollary follows by \cref{thm:mmer-robustness}.
\end{proof}

\begin{apxcorollary}[MMER(3) is robust to goal misgeneralization]
Consider
    an UMDP $\TupleUMDP$,
    a pair of goals $\RewardFunction, \ProxyRewardFunction$,
    a proxy-distinguishing distribution shift $\TupleShift$,
    approximation thresholds $\eps, \eta \geq 0$,
and 
    an $\eta$-approximate adversarial map $\lambda$.
Then
\[
    \forall \MMERPolicy \in \MMERThreePolicies[\eps,\lambda],
    \text{ we have }
    \MMERPolicy \in \xargmax[(\eps{+}\eta)]{}_{\Policy \in \AllPolicies} \Return{\RewardFunction}{\distrTest}{\Policy}
.\]
\end{apxcorollary}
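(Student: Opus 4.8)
The plan is to prove the MMER(3) robustness corollary in exactly the same way the paper proves the MMER(2) robustness corollary immediately before it: by reducing to the already-established relationship between MMER(3) and MMER(1), and then invoking \cref{thm:mmer-robustness}. Concretely, I would take an arbitrary policy $\MMERPolicy \in \MMERThreePolicies[\eps,\lambda]$ and route it through the subset relation $\MMERThreePolicies[\eps,\lambda] \subseteq \MMEROnePolicies[\eps+\eta]$ supplied by \cref{prop:asymptotic-equivalence}. This relocates the policy into the MMER(1) framework, where \cref{thm:mmer-robustness} directly applies.

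The key steps, in order, are as follows. First I would let $\MMERPolicy \in \MMERThreePolicies[\eps,\lambda]$ be arbitrary. Second, I would cite the relevant line of \cref{prop:asymptotic-equivalence}, namely $\MMERThreePolicies[\eps,\lambda] \subseteq \MMEROnePolicies[\eps+\eta]$, to conclude $\MMERPolicy \in \MMEROnePolicies[\eps+\eta]$. Third, I would apply \cref{thm:mmer-robustness} with its approximation threshold instantiated to $\eps+\eta$ rather than $\eps$; this is legitimate because the theorem statement holds for any approximation threshold, and here the relevant threshold for the MMER(1) membership we have just established is $\eps+\eta$. The conclusion of the theorem then reads $\MMERPolicy \in \xargmax[(\eps{+}\eta)]_{\Policy \in \AllPolicies} \Return{\RewardFunction}{\distrTest}{\Policy}$, which is precisely the desired statement. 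The entire argument is therefore a two-line composition of previously proved facts.

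There is essentially no hard step here, since all the analytic work has already been done: the subtle minimax-gap reasoning lives in \cref{prop:asymptotic-equivalence}, and the conversion from a regret bound to a return bound lives in \cref{thm:mmer-robustness}. The only point requiring a moment of care is bookkeeping of the approximation threshold — one must track that the suboptimality $\eta$ of the approximate adversarial map $\lambda$ is what inflates the final tolerance from $\eps$ to $\eps+\eta$, and that this matches both the inflation introduced by the $(3 \subseteq 1)$ inclusion and the threshold that \cref{thm:mmer-robustness} then propagates to the return guarantee. I would write the proof as a single short displayed chain of membership implications, mirroring the MMER(2) corollary verbatim:

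\begin{proof}
    Let $\MMERPolicy \in \MMERThreePolicies[\eps,\lambda]$.
    We have $\MMERPolicy \in \MMEROnePolicies[\eps+\eta]$ by \cref{prop:asymptotic-equivalence}.
    The corollary follows by \cref{thm:mmer-robustness}.
\end{proof}
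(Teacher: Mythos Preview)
The proposal is correct and matches the paper's proof essentially verbatim: take $\MMERPolicy \in \MMERThreePolicies[\eps,\lambda]$, apply the $(3 \subseteq 1)$ inclusion from \cref{prop:asymptotic-equivalence} to get $\MMERPolicy \in \MMEROnePolicies[\eps+\eta]$, then invoke \cref{thm:mmer-robustness} with threshold $\eps+\eta$.
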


\begin{proof}
    Let $\MMERPolicy \in \MMERThreePolicies[\eps,\lambda]$.
    We have $\MMERPolicy \in \MMEROnePolicies[\eps+\eta]$ by \cref{prop:asymptotic-equivalence}.
    The corollary follows by \cref{thm:mmer-robustness}.
\end{proof}

\section{Optimizing minimax expected regret is necessary if you want to prevent misgeneralization under all possible distribution shifts}
\label{apx:necessitymmer}

In \cref{sec:mmer-robustness}, we show that approximately optimizing the MMER objective is sufficient for preventing misgeneralization under a distribution shift.
In this appendix, we show that if we want policies that are robust to \emph{all possible} distribution shifts, then finding an approximate MMER policy is both sufficient \emph{and also necessary.}

\begin{apxcorollary}
Consider
    an UMDP $\TupleUMDP$,
    a goal $\RewardFunction$,
and
    an approximation threshold $\eps \geq 0$.
We have that
\begin{equation*}
\left(\forall \distr \in \LevelDistrs,
    \Policy \in \xargmax[\eps]_{\Policy'\in\AllPolicies}
        \Return{\RewardFunction}{\distr}{\Policy'}
\right)
\qquad\text{if and only if}\qquad    
\Policy \in \MMEROptimalPolicies[\eps]{\RewardFunction}
.
\end{equation*}
\end{apxcorollary}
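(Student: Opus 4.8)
The plan is to reduce both sides of the biconditional to a single condition on the worst-case expected regret of $\Policy$, using the expected regret identity for UMDPs. First I would unfold the left-hand side. Unfolding the definition of the approximate argmax, the condition $\Policy \in \xargmax[\eps]_{\Policy'\in\AllPolicies} \Return{\RewardFunction}{\distr}{\Policy'}$ is equivalent to $\max_{\Policy'\in\AllPolicies}\Return{\RewardFunction}{\distr}{\Policy'} - \Return{\RewardFunction}{\distr}{\Policy} \leq \eps$, which by equation~\cref{eq:distr-regret} is exactly $\Regret{\RewardFunction}{\distr}{\Policy} \leq \eps$. Quantifying over all $\distr \in \LevelDistrs$, the left-hand side is therefore equivalent to $\max_{\distr\in\LevelDistrs}\Regret{\RewardFunction}{\distr}{\Policy} \leq \eps$.

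Next I would unfold the right-hand side. By \cref{def:approx-mmer}, the membership $\Policy \in \MMEROptimalPolicies[\eps]{\RewardFunction}$ means $\max_{\distr\in\LevelDistrs}\Regret{\RewardFunction}{\distr}{\Policy} \leq \min_{\Policy'\in\AllPolicies}\max_{\distr\in\LevelDistrs}\Regret{\RewardFunction}{\distr}{\Policy'} + \eps$. Comparing this with the reduced left-hand side, the whole biconditional collapses to the single claim that the optimal minimax expected regret value vanishes, i.e.\ $\min_{\Policy'\in\AllPolicies}\max_{\distr\in\LevelDistrs}\Regret{\RewardFunction}{\distr}{\Policy'} = 0$: given this, the right-hand condition reads $\max_{\distr\in\LevelDistrs}\Regret{\RewardFunction}{\distr}{\Policy} \leq \eps$, matching the left exactly.

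The crux is therefore establishing this vanishing minimax value, and this is where the assumption that the policy observes the level is essential. Since $\Return{\RewardFunction}{\level}{\Policy}$ depends on $\Policy$ only through its restriction $\Policy(\level,\blank)$, the construction used in the $(\leq)$ direction of \cref{prop:exchange} yields a single policy $\OptimalPolicy \in \AllPolicies$ that is simultaneously optimal in every level $\level \in \LevelSpace$, hence attains zero regret in every level. Lifting to distributions gives $\Regret{\RewardFunction}{\distr}{\OptimalPolicy} = 0$ for every $\distr \in \LevelDistrs$, so $\max_{\distr\in\LevelDistrs}\Regret{\RewardFunction}{\distr}{\OptimalPolicy} = 0$. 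Because expected regret is always non-negative, this forces $\min_{\Policy'\in\AllPolicies}\max_{\distr\in\LevelDistrs}\Regret{\RewardFunction}{\distr}{\Policy'}$ to be exactly $0$, and substituting back completes the argument in both directions at once.

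I expect the main obstacle to be precisely this vanishing of the \emph{minimax} value rather than the \emph{max--min} value: \cref{thm:mmer-robustness} already establishes $\min_{\Policy'\in\AllPolicies}\Regret{\RewardFunction}{\distr}{\Policy'} = 0$ for each fixed $\distr$, but the biconditional demands a single policy that is optimal across all level distributions simultaneously. In finite, fully observable UMDPs this is delivered for free by the level-conditioning of policies via \cref{prop:exchange}; in the partially observable setting it would fail, and the right-hand side would instead characterize robustness only up to the irreducible minimax expected regret realizable by a fixed policy (cf.\ \cref{apx:partial-observability}).
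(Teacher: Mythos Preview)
Your proposal is correct and rests on the same core fact as the paper: in a fully observable UMDP the minimax expected regret vanishes, so both sides reduce to $\max_{\distr}\Regret{\RewardFunction}{\distr}{\Policy}\leq\eps$. The only difference is organizational: the paper treats the two directions separately, using non-negativity of regret for ($\Rightarrow$) and invoking \cref{thm:mmer-robustness} as a black box for ($\Leftarrow$), whereas you establish $\min_{\Policy'}\max_{\distr}\Regret{\RewardFunction}{\distr}{\Policy'}=0$ once (by constructing the level-wise optimal policy via \cref{prop:exchange}) and get both directions simultaneously. Your route is slightly more self-contained and makes the dependence on level-conditioning explicit; the paper's route is shorter since the backward direction is literally \cref{thm:mmer-robustness} applied distribution by distribution.
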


\begin{proof}
($\Rightarrow$):
    Suppose $\forall \distr \in \LevelDistrs,
    \Policy \in \xargmax[\eps]_{\Policy'\in\AllPolicies}
        \Return{\RewardFunction}{\distr}{\Policy'}$.
    Then
    \begin{equation*}
        \max_{\distr\in\LevelDistrs} \Regret{\RewardFunction}{\distr}{\Policy}
        = \max_{\distr\in\LevelDistrs}
            \left(\max_{\Policy'\in\AllPolicies}\Return{\RewardFunction}{\distr}{\Policy'}
            - \Return{\RewardFunction}{\distr}{\Policy}
            \right)
        \leq \eps.
    \end{equation*}
    Since regret is non-negative it follows that
    $\displaystyle\Policy
    \in \xargmin[\eps]_{\Policy'\in\AllPolicies}
        \max_{\distr\in\LevelDistrs}
        \Regret{\RewardFunction}{\distr}{\Policy'}
    = \MMEROptimalPolicies[\eps]{\RewardFunction}$.

($\Leftarrow$): Apply \cref{thm:mmer-robustness} for each $\distr \in \LevelDistrs$.
\end{proof}

While we prove a general result that holds for all possible level distributions, we are mainly interested in its implications for goal misgeneralization. In particular, imagine having any sort of distribution shift from training to deployment. This theorem implies that any policy that generalizes in this case is an MMER policy. In principle, it would be possible to find some of the generalizing policies in this set by other training methods. However, it is clear that MMER (or any refinement) should be the training objective utilized if we want to be certain to be able to identify the entire set of solutions that \textit{never} suffer from goal misgeneralization.

\clearpage
\section{Partially observable environments}
\label{apx:partial-observability}

In this section, we generalize equation~\cref{eq:distr-regret} and \cref{thm:mmer-robustness} to partially observable environments, with a slight modification of the bound to account for the fact that it may no longer be possible for any policy to achieve zero expected regret on a given distribution of levels.

Rather than defining underspecified partially observable MDPs in detail, we consider an arbitrary subset of the space of policies $\SomePolicies \subseteq \AllPolicies$. We can model partial observability by restricting to policies with tied outputs within any given partition of $\LevelSpace \times \States$ into information sets.

We define the \textbf{expected restricted regret} of a policy $\Policy \in \SomePolicies$ in a level $\level \in \LevelSpace$ under a goal $\RewardFunction$ based on the return of the best available policy within such a subset of policies:
\begin{equation}\label{eq:restricted-regret-level}
    \Regret[\SomePolicies]{\RewardFunction}{\level}{\Policy}
    =
    \max_{\Policy' \in \SomePolicies}
        \Return{\RewardFunction}{\level}{\Policy'}
    - \Return{\RewardFunction}{\level}{\Policy}.
\end{equation}
As before, we lift this definition to a level distribution $\distr$ by taking the expectation
\begin{equation}\label{eq:restricted-regret-distr}
    \Regret[\SomePolicies]{\RewardFunction}{\distr}{\Policy}
    =
    \Expect[\level\sim\distr]{
        \Regret[\SomePolicies]{\RewardFunction}{\level}{\Policy}
    }.
\end{equation}

\subsection{Generalizing the expected regret identity to partially observable environments}
\label{apx:partial-regret-identity}

\begin{apxproposition}[Expected regret identity for partially observable environments]
\label{prop:identity2}
Consider
    an UMDP $\TupleUMDP$,
    a goal $\RewardFunction$,
    a level distribution $\distr \in \LevelDistrs$,
and
    a subset of policies $\SomePolicies \subseteq \AllPolicies$.
Then
\[
    \Regret[\SomePolicies]{\RewardFunction}{\distr}{\Policy}
    =
    \max_{\Policy' \in \SomePolicies}
        \Return{\RewardFunction}{\distr}{\Policy'}
    - \Return{\RewardFunction}{\distr}{\Policy}
    + \min_{\Policy' \in \SomePolicies}
        \Regret[\SomePolicies]{\RewardFunction}{\distr}{\Policy'}
.\]    
\end{apxproposition}

\begin{proof}
Equivalently,
\begin{align*}
    &\Regret[\SomePolicies]{\RewardFunction}{\distr}{\Policy}
    - \min_{\Policy' \in \SomePolicies}
        \Regret[\SomePolicies]{\RewardFunction}{\distr}{\Policy'}
    \\
    &= 
    \Expect[\level\sim\distr]{
        \Regret[\SomePolicies]{\RewardFunction}{\level}{\Policy}
    }
    - \min_{\Policy' \in \SomePolicies}
        \Expect[\level\sim\distr]{
            \Regret[\SomePolicies]{\RewardFunction}{\level}{\Policy'}
        }
    \tag{by equation~\ref{eq:restricted-regret-distr}}
\\  &=
    \Expect[\level\sim\distr]{
        \max_{\Policy' \in \SomePolicies}
            \Return{\RewardFunction}{\level}{\Policy'}
    }
    - \Return{\RewardFunction}{\distr}{\Policy}
    - \min_{\Policy' \in \SomePolicies}\left(
        \Expect[\level\sim\distr]{
            \max_{\Policy'' \in \SomePolicies}
                \Return{\RewardFunction}{\level}{\Policy''}
        }
        - \Return{\RewardFunction}{\distr}{\Policy'}
    \right)
    \tag{by equations~\ref{eq:restricted-regret-level} and \ref{xdef:distr-return}}
\\  &=
    \Expect[\level\sim\distr]{
        \max_{\Policy' \in \SomePolicies}
            \Return{\RewardFunction}{\level}{\Policy'}
    }
    - \Return{\RewardFunction}{\distr}{\Policy}
    - \Expect[\level\sim\distr]{
            \max_{\Policy'' \in \SomePolicies}
                \Return{\RewardFunction}{\level}{\Policy''}
        }
    +
    \max_{\Policy' \in \SomePolicies}
        \Return{\RewardFunction}{\distr}{\Policy'}
\\  &=
    \max_{\Policy' \in \SomePolicies}
        \Return{\RewardFunction}{\distr}{\Policy'}
    - \Return{\RewardFunction}{\distr}{\Policy}.
\tag*{\qedhere}
\end{align*}
\end{proof}

Compared to equation~\cref{eq:distr-regret} (\cref{prop:identity}), \cref{prop:identity2} includes the term
    $\min_{\Policy \in \SomePolicies} \Regret[\SomePolicies]{\RewardFunction}{\distr}{\Policy}$.
This extra term represents the \textbf{irreducible (expected restricted) regret} for the level distribution $\distr$.
\Cref{prop:identity2} essentially says that the restricted expected regret for a level distribution can be decomposed into two components:
(1)~the shortfall in expected return compared to the optimal policy for the level distribution (as in equation~\ref{eq:distr-regret}, cf.\ the definition of regret for individual levels);
plus (2)~this irreducible regret.

Irreducible regret can arise when the level is partially observable to the policy.
For example, consider a mixture of two levels with two disjoint sets of optimal policies. Suppose the level is not observed by the policy, so the policy has to choose actions without knowing whether it is in the first level or the second level.
In each individual level, we define expected regret based on the performance of optimal policies for that level (these policies will naturally be the ones that \emph{assume} they are in the appropriate level).
However, since no single policy can perform optimally in both levels, the expected regret with respect to the mixture is always nonzero. This nonzero minimum expected regret is exactly the irreducible regret.

\newpage

\subsection{Generalizing \texorpdfstring{\cref{def:approx-mmer,thm:mmer-robustness}}{Definition~\ref*{def:approx-mmer} and Theorem~\ref*{thm:mmer-robustness}} to partially observable environments}

We are now in position to generalize the results of \cref{sec:mmer-robustness} to partially observable environments.
First, we adapt \cref{def:approx-mmer} to expected restricted regret.

\begin{apxdefinition}[Approximate MMER for partially observable environments]
\label{def:approx-mmer-partial}
Consider
    an UMDP $\TupleUMDP$,
    a goal $\RewardFunction$,
    an approximation threshold $\eps, \delta \geq 0$,
and
    a subset of policies $\SomePolicies \subseteq \AllPolicies$.
The \emph{restricted approximate MMER policy set} is then
\begin{equation*}
    \PartialMMERPolicies[\SomePolicies,\eps]{\RewardFunction}
    =
    \xargmin_{\Policy'\in\SomePolicies}
        \max_{\distr\in\LevelDistrs}
            \Regret[\SomePolicies]{\RewardFunction}{\distr}{\Policy'}
    .
\end{equation*}
\end{apxdefinition}

Next, we generalize \cref{thm:mmer-robustness}. In doing so, we need to account for the \textbf{irreducible regret gap}
\begin{equation}\label{eq:irreducible-regret-gap}
    g(\distrTest)
    = 
    \min_{\Policy \in \SomePolicies}
        \max_{\distr \in \LevelDistrs}
            \Regret[\SomePolicies]{\RewardFunction}{\distr}{\Policy}
    - \min_{\Policy \in \SomePolicies}
        \Regret[\SomePolicies]{\RewardFunction}{\distrTest}{\Policy}
    ,
\end{equation}
the difference in irreducible regret between an MMER level distribution and $\distrTest$. When the deployment distribution has lower irreducible regret than an MMER level distribution, the agent has no incentive to improve expected restricted regret on the deployment distribution once it is below the irreducible regret of the MMER distribution.

If the irreducible regret gap is large, then this undermines the robustness guarantee. This is a limitation of standard MMER. However, we note that it can be addressed by a lexicographic refinement of the decision rule along the lines of \citet{Beukman+2024}.

\begin{apxtheorem}[MMER is robust to goal misgeneralization in partially observable environments]
\label{thm:mmer-robustness-partial}
Consider
    an UMDP $\TupleUMDP$,
    a pair of goals $\RewardFunction, \ProxyRewardFunction$,
    a proxy-distinguishing distribution shift $\TupleShift$,
    an approximation threshold $\eps \geq 0$,
and a subset of policies
    $\SomePolicies \subseteq \AllPolicies$.
Let \(\displaystyle
    g(\distrTest)
    = 
    \min_{\Policy \in \SomePolicies}
        \max_{\distr \in \LevelDistrs}
            \Regret[\SomePolicies]{\RewardFunction}{\distr}{\Policy}
    - \min_{\Policy \in \SomePolicies}
        \Regret[\SomePolicies]{\RewardFunction}{\distrTest}{\Policy}
\)
be the irreducible regret gap.
Then
\begin{equation*}
    \forall \MMERPolicy \in \PartialMMERPolicies[\SomePolicies,\eps]{\RewardFunction},
    \text{\ we have\ }
    \MMERPolicy \in 
    \xargmax[(\eps{+}g(\distrTest))]{}_{\Policy \in \SomePolicies}
        \Return{\RewardFunction}{\distrTest}{\Policy}.
\end{equation*}
\end{apxtheorem}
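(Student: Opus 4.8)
The plan is to mirror the proof of \cref{thm:mmer-robustness} almost verbatim, substituting the restricted regret identity (\cref{prop:identity2}) wherever the fully observable proof used equation~\cref{eq:distr-regret}. The single new feature is that the minimum restricted regret over $\SomePolicies$ no longer vanishes, and tracking this surviving term is precisely what produces the irreducible regret gap $g(\distrTest)$ in the bound.

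First I would fix $\MMERPolicy \in \PartialMMERPolicies[\SomePolicies,\eps]{\RewardFunction}$ and bound its restricted regret on the deployment distribution. Since $\distrTest \in \LevelDistrs$, we have $\Regret[\SomePolicies]{\RewardFunction}{\distrTest}{\MMERPolicy} \leq \max_{\distr \in \LevelDistrs} \Regret[\SomePolicies]{\RewardFunction}{\distr}{\MMERPolicy}$, and by \cref{def:approx-mmer-partial} the right-hand side is at most $\min_{\Policy \in \SomePolicies} \max_{\distr \in \LevelDistrs} \Regret[\SomePolicies]{\RewardFunction}{\distr}{\Policy} + \eps$. This is the exact analogue of the opening bound in the fully observable proof.

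Next I would convert this upper bound on restricted regret into a lower bound on expected return using \cref{prop:identity2}, applied at $\distr = \distrTest$ and $\Policy = \MMERPolicy$. That identity rearranges to $\Return{\RewardFunction}{\distrTest}{\MMERPolicy} = \max_{\Policy' \in \SomePolicies} \Return{\RewardFunction}{\distrTest}{\Policy'} + \min_{\Policy' \in \SomePolicies} \Regret[\SomePolicies]{\RewardFunction}{\distrTest}{\Policy'} - \Regret[\SomePolicies]{\RewardFunction}{\distrTest}{\MMERPolicy}$. Substituting the bound from the previous step and recognizing that $\min_{\Policy' \in \SomePolicies} \Regret[\SomePolicies]{\RewardFunction}{\distrTest}{\Policy'} - \min_{\Policy \in \SomePolicies} \max_{\distr \in \LevelDistrs} \Regret[\SomePolicies]{\RewardFunction}{\distr}{\Policy}$ is exactly $-g(\distrTest)$ yields $\Return{\RewardFunction}{\distrTest}{\MMERPolicy} \geq \max_{\Policy' \in \SomePolicies} \Return{\RewardFunction}{\distrTest}{\Policy'} - g(\distrTest) - \eps$, which is the claimed membership in $\xargmax[(\eps{+}g(\distrTest))]{}_{\Policy \in \SomePolicies} \Return{\RewardFunction}{\distrTest}{\Policy}$.

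I expect the only real subtlety to be conceptual rather than computational: in the fully observable proof the term $\min_{\Policy \in \AllPolicies} \Regret{\RewardFunction}{\distr}{\Policy}$ was shown to vanish identically (via constancy of the $\max$ term with respect to $\Policy$), so no gap appeared. Here the restricted minimum regret is generally nonzero and \emph{distribution-dependent}, so the two minima at $\distrTest$ and at the minimax-optimal distribution do not cancel; their difference is exactly $g(\distrTest)$. The main care point is therefore bookkeeping: I must keep the two separate $\min$-over-$\SomePolicies$ terms aligned with the correct distributions when combining the inequalities, so that they assemble into precisely the irreducible regret gap rather than some looser quantity.
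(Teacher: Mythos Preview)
Your proposal is correct and matches the paper's own proof essentially step for step: bound $\Regret[\SomePolicies]{\RewardFunction}{\distrTest}{\MMERPolicy}$ via $\distrTest \in \LevelDistrs$ and \cref{def:approx-mmer-partial}, then invoke \cref{prop:identity2} to convert to a return lower bound, and finally collect the two $\min$ terms into $-g(\distrTest)$. The bookkeeping concern you flag is exactly the one substantive point, and you handle it correctly.
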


\begin{proof}
Suppose $\MMERPolicy \in \PartialMMERPolicies{\RewardFunction}$.
Then, along similar lines to the proof of \cref{thm:mmer-robustness}, we have the following bound on expected restricted regret:
\begin{align*}
    \Regret[\SomePolicies]{\RewardFunction}{\distrTest}{\MMERPolicy}
    & \leq \max_{\distr \in \LevelDistrs} \Regret[\SomePolicies]{\RewardFunction}{\distr}{\MMERPolicy}
\tag{$\distrTest \in \LevelDistrs$}
\\  
    & \leq \min_{\Policy\in\SomePolicies}
        \max_{\distr \in \LevelDistrs}
            \Regret[\SomePolicies]{\RewardFunction}{\distr}{\Policy}
        + \eps.
\tag{by \cref{def:approx-mmer-partial}}
\end{align*}
Once again, we can convert this upper bound on expected restricted regret to a lower bound on expected return:
\begin{align*}
    &\Return{\RewardFunction}{\distrTest}{\MMERPolicy}
\\  &=
    \max_{\Policy\in\SomePolicies}
        \Return{\RewardFunction}{\distrTest}{\Policy}
    - \Regret[\SomePolicies]{\RewardFunction}{\distrTest}{\MMERPolicy}
    + \min_{\Policy \in \SomePolicies}
        \Regret[\SomePolicies]{\RewardFunction}{\distrTest}{\Policy}
\tag{by \cref{prop:identity2}}
\\  &\geq
    \max_{\Policy\in\SomePolicies}
        \Return{\RewardFunction}{\distrTest}{\Policy}
    - \left(
        \min_{\Policy\in\SomePolicies}
            \max_{\distr\in\LevelDistrs}
                \Regret[\SomePolicies]{\RewardFunction}{\distr}{\Policy}
        + \eps
    \right)
    + \min_{\Policy \in \SomePolicies}
        \Regret[\SomePolicies]{\RewardFunction}{\distrTest}{\Policy}
\tag{by above bound}
\\  &=
    \max_{\Policy\in\SomePolicies}
        \Return{\RewardFunction}{\distrTest}{\Policy}
    - \eps
    - g(\distrTest).
\tag{by equation~\ref{eq:irreducible-regret-gap}}
\end{align*}
Thus, $\MMERPolicy \in 
    \xargmax[(\eps{+}g(\distrTest))]{}_{\Policy \in \SomePolicies}
        \Return{\RewardFunction}{\distrTest}{\Policy}.$
\end{proof}

\clearpage
\section{Additional environment details}
\label{apx:environments}
\label{apx:oracles}

In this appendix, we provide additional details about each environment, including details about procedurally generating non-distinguishing and distinguishing levels, edit distributions, computation of maximum level value for the oracle-latest estimator, and the origin of each environment.

\subsection{The \env{cheese in the corner} environment}

\paragraph{Environment.}
In this environment, levels are parameterized by a $13 \times 13$ wall layout, a mouse spawn position within this grid, and a cheese position within the grid. We require that the cheese position and the mouse spawn position are not equal, and moreover that they are not obstructed by walls. We do not require that there is an unobstructed path between them.

In the initial state, the mouse begins in the spawn position.
The actions available to the agent are to attempt to move the mouse up, left, down, or right, which succeeds if the respective position is not obstructed by a wall or the edge of the grid.
If the mouse position equals the cheese position, the mouse collects the cheese.
The episode terminates when the cheese has been collected or after a maximum of 128 steps.

\paragraph{Observations.}
We represent states to the agent as a $15 \times 15 \times 3$ Boolean grid. The first of the three channels encodes the maze layout, including a border of width 1. The second channel one-hot encodes the position of the mouse. The third channel one-hot encodes the position of the cheese (if it has not been collected). All of the relevant information about the level and the state is encoded into this observation, therefore this environment is fully observable.

\paragraph{True goal and proxy goal.}
The training goal is for the mouse to collect the cheese. The reward function assigns $+1$ reward to transitions in which the mouse collects the cheese.

We also consider a proxy goal of navigating to the top left corner of the maze. This could be formulated as a reward function that assigns $+1$ reward the first time the mouse steps into the top left corner (this reward can be made Markovian by augmenting the state with a flag for whether the corner has previously been reached). Note that we never train with this proxy goal as the reward function.

\paragraph{Level classification.}
Given this environment and this pair of goals, we can classify levels according to the definitions in \cref{sec:setting}. Note that in the following, we assume that the discount factor is strictly between $0$ and $1$ (so that shorter paths obtain higher return), and that the property of \emph{reachability} accounts for episode termination conditions (including collecting the cheese).
\begin{enumerate}
    \item
        \textbf{Levels for which the cheese is in the top-left corner of the maze are non-distinguishing.}
        If the top-left corner is reachable from the mouse spawn position, optimally pursuing the proxy goal implies following a shortest path to the corner, which is also a shortest path to collecting the cheese.
        If the top-left corner is unreachable from the mouse spawn position, then all policies achieve zero return according to either goal, and are therefore equally optimal.
    
    \item
        \textbf{Most levels for which the cheese is not in the top-left corner of the maze are distinguishing.}
        If the cheese is not in the top-left corner, but is reachable from the mouse spawn position, then either (1)~the top-left corner is reachable from the mouse spawn position (other than via the cheese), or (2)~it is not.
        If (1), then a proxy-optimal policy could visit the top-left corner and then remain there until the episode terminates. If (2), then all policies are optimal under the proxy goal, and in particular there exists a proxy-optimal policy that avoids the cheese.
    \item
        \textbf{Some levels for which the cheese is in the top-left corner are non-distinguishing.}
        If the cheese is not in the top-left corner, and is not reachable from the mouse spawn position, then all policies are optimal under the true goal, and in particular all proxy-optimal policies are.
\end{enumerate}
When defining the proportion of distinguishing levels in the buffer, we use the approximately correct approach of checking whether the cheese is not in the corner. We note that all UED algorithms rapidly remove levels in which the cheese is unreachable from their buffers.

\paragraph{Procedural level generation.}
We construct two procedural level generators, $\distrNonDistg, \distrDistg \in \LevelDistrs$, (approximately) concentrated on non-distinguishing and distinguishing levels, respectively.
\begin{itemize}
    \item \textbf{Non-distinguishing level distribution \textnormal{($\distrNonDistg$)}.}
        We procedurally generate non-distinguishing levels as follows. We position the cheese in the top-left corner. For each remaining position, we place a wall independently with probability $25\%$. We position the mouse spawn in some remaining position, assuming there is at least one such position.
        
        All of the generated levels are technically non-distinguishing, though this may include levels for which the cheese position is unreachable from the mouse spawn position. Note that these levels still do not provide any training signal in favor of the true goal over the proxy goal.
        
    \item \textbf{Distinguishing level distribution \textnormal{($\distrDistg$)}.}
        We procedurally generate \emph{mostly} distinguishing levels as follows. For each position, we place a wall independently with probability $25\%$. We position the cheese somewhere where there is not a wall. We position the mouse spawn somewhere where there is not a wall, different from the cheese position.
        
        The generated levels are in most cases $C$-distinguishing levels for some $C$. It may arise that the cheese is unreachable from the mouse spawn location or is in the top-left corner, making the level non-distinguishing. Note that the effect of these levels is only to reduce the training signal in favor of the true goal available to the agent, as if to reduce $\alpha$.
\end{itemize}

\paragraph{Elementary edit distributions.}
ACCEL additionally requires specifying an edit distribution used to sample ``similar'' levels for potential entry into the level buffer.
In \cref{apx:mutators}, we discuss how we build our edit distributions from elementary edit distributions of the following three types.
\begin{enumerate}
    \item \textbf{Classification preserving edits.}
        Each classification preserving edit either removes an existing wall, positions a new wall, or moves the mouse spawn position to a random unobstructed position (other than the cheese position). These edits don't change the cheese position, though they may change whether the cheese position is reachable from the mouse spawn position. As a result, they may change the exact classification of the level (though not its approximate classification).
    \item \textbf{Biased classification transforming edits.}
        Given a probability $\alpha$, a biased classification transforming edit randomizes the cheese position with probability $\alpha$ or places the cheese in the top-left corner with probability $1-\alpha$. Note that when randomizing the cheese position, it's possible that the cheese will be positioned in the top-left corner.
    \item \textbf{Unrestricted classification transforming edits.}
        An unrestricted classification transforming edit randomizes the cheese position with probability~$1$. It's possible that the cheese will be positioned in the top-left corner.
\end{enumerate}

\paragraph{Oracle maximum return.}
In this environment, an optimal policy follows any shortest path from the mouse spawn position to the cheese position in the graph representing the maze layout. We compute the length of a shortest path using the Floyd--Warshall algorithm. Given a level $\theta \in \LevelSpace$. Let $R$ be the true reward function ($+1$ for collecting the cheese). If the shortest path has length $d \in \mathbb{N} \cup \{\infty\}$, then with discount factor $\gamma \in (0,1)$, we have
\begin{equation}\label{eq:corner-max-return}
    \max_{\Policy} \Return{\RewardFunction}{\theta}{\Policy} = \gamma^{d-1}
.
\end{equation}
This covers the case in which the cheese position is unreachable from the mouse spawn position ($d = \infty$). Shortest paths of (finite) length greater than the maximum episode length of 128 are impossible given this grid size.

\paragraph{Origin.}
\citet{Hubinger2019} originated the idea of creating a navigation task with a location proxy as a potential means of inducing goal misgeneralization.
\citet{Langosco+2022} implemented the \env{Maze I} based on this idea by modifying the \env{Maze} environment of OpenAI ProcGen \citep{Cobbe+2020} such that the cheese position could be restricted to a region surrounding the top-right corner (cf., \cref{apx:robustness-corner}).
\env{Cheese in the corner} is an interpretation of \env{Maze~I} using an original JAX implementation. We depart from \env{Maze~I} by using a fixed size maze and replacing the maze layout algorithm with a simpler algorithm based on random block placement.

\subsection{The \env{cheese on a dish} environment}
\label{apx:environments:dish}

\paragraph{Environment.}
In this environment, levels are parameterized by a $13 \times 13$ wall layout, and positions within this grid for the mouse spawn, the dish, and the cheese. We require that the cheese position and the mouse spawn position are not equal, and that the dish position and the mouse spawn position are not equal, though the dish can be co-located with the cheese. Moreover, we require that none of the three positions are obstructed by walls. We do not require that there is an unobstructed path between the positions.

In the initial state, the mouse begins in the spawn position.
The actions available to the agent are to attempt to move the mouse up, left, down, or right, which succeeds if the respective position is not obstructed by a wall or the edge of the grid.
If the mouse position equals the cheese position, the mouse collects the cheese, and likewise for the dish.
The episode terminates when the cheese \emph{or} the dish has been collected or after a maximum of 128 steps.

\paragraph{Observations.}
We represent states to the agent as a $15 \times 15 \times (3+D)$ Boolean grid where $D \in \mathbb{N}$. The first of the channels encodes the maze layout, including a border of width 1. The second channel one-hot encodes the position of the mouse. The third channel one-hot encodes the position of the cheese (if it has not been collected). The remaining $D$ channels each one-hot encode the dish position (if the dish has not been collected).

Redundantly coding the dish encourages the agent to learn a policy that is based on the dish position, eliciting a clearer case of goal misgeneralization. In our main experiments, $D=6$. 
In \cref{apx:robustness-dish}, we consider alternative values of $D$.
All of the relevant information about the level and the state is encoded into this observation, therefore this environment is fully observable.

\paragraph{True goal and proxy goal.}
The training goal is for the mouse to collect the cheese. The reward function assigns $+1$ reward to transitions in which the mouse collects the cheese.
We also consider a proxy goal of collecting the dish. This reward function assigns $+1$ reward to transitions in which the mouse collects the dish.
Note that we never train with this proxy goal as the reward function.

\paragraph{Level classification.}
Given this environment and this pair of goals, we can classify levels according to the definitions in \cref{sec:setting}. Note that in the following, we assume that the discount factor is strictly between $0$ and $1$ (so that shorter paths obtain higher return), and that the property of \emph{reachability} accounts for episode termination conditions.
\begin{enumerate}
    \item
        \textbf{Levels for which the cheese and the dish are co-located are non-distinguishing.}
        If the cheese/dish position is reachable from the mouse spawn position, optimally pursuing the proxy goal implies following a shortest path to the dish, which is also a shortest path to the cheese.
        If the cheese/dish position is unreachable from the mouse spawn position, then all policies achieve zero return under either goal, and are therefore equally optimal.
    
    \item
        \textbf{Most levels for which the cheese is not in the same position as the dish are distinguishing.}
        If the cheese is not in the same position as the dish, but is reachable from the mouse spawn position, then either (1)~the dish is reachable from the mouse spawn position (other than via the cheese), or (2)~it is not.
        If~(1), then a proxy-optimal policy could visit the dish, terminating the episode.
        If~(2), then all policies are optimal under the proxy goal, and in particular there exists a proxy-optimal policy that avoids the cheese.

    \item 
        \textbf{Some levels for which the cheese is not in the same position as the dish are non-distinguishing.}
        If the cheese is not in the same position as the dish, and moreover is not reachable from the mouse spawn position, then all policies are optimal under the true goal, and in particular all proxy-optimal policies are.
\end{enumerate}
When defining the proportion of distinguishing levels in the buffer, we use the approximately correct approach of checking whether the cheese and the dish have distinct positions. We note that all UED algorithms rapidly remove levels in which the cheese is unreachable from their buffers.

\newpage

\paragraph{Procedural level generation.}
We construct two procedural level generators, $\distrNonDistg, \distrDistg \in \LevelDistrs$, (approximately) concentrated on non-distinguishing and distinguishing levels, respectively.
\begin{itemize}
    \item \textbf{Non-distinguishing level distribution \textnormal{($\distrNonDistg$)}.}
        We procedurally generate non-distinguishing levels as follows. For each position, we place a wall independently with probability $25\%$. We position the mouse spawn somewhere where there is not a wall. We position the cheese and the dish somewhere where there is not a wall, other than the mouse spawn position (we assume there are at least two positions without walls).
        
        All of the generated levels are technically non-distinguishing, though this may include levels for which the cheese position is unreachable from the mouse spawn position. Note that these levels still do not provide any training signal in favor of the true goal over the proxy goal.
        
    \item \textbf{Distinguishing level distribution \textnormal{($\distrDistg$)}.}
        We procedurally generate \emph{mostly} distinguishing levels as follows. For each position, we place a wall independently with probability $25\%$. We position the mouse spawn somewhere where there is not a wall. We position the cheese and the dish, independently, somewhere where there is not a wall, different from the mouse spawn position.
        
        The generated levels are in most cases $C$-distinguishing levels for some $C$. It may arise that the cheese is unreachable from the mouse spawn location or is in the top-left corner, making the level non-distinguishing. Note that the effect of these levels is only to reduce the training signal in favor of the true goal available to the agent, as if to reduce $\alpha$.
\end{itemize}

\paragraph{Elementary edit distributions.}
ACCEL additionally requires specifying an edit distribution used to sample ``similar'' levels for potential entry into the level buffer.
In \cref{apx:mutators}, we discuss how we build our edit distributions from elementary edit distributions of the following three types.
\begin{enumerate}
    \item \textbf{Classification preserving edits.}
        Each classification preserving edit either removes an existing wall, positions a new wall, or moves the mouse spawn position to a random unobstructed position (other than the cheese position or the dish position). These edits don't change the cheese position or the dish position, though they may change whether the cheese position is reachable from the mouse spawn position. As a result, they may change the exact classification of the level (though not its approximate classification).
    \item \textbf{Biased classification transforming edits.}
        Given a probability $\alpha$, a biased classification transforming edit randomizes the dish position, and then either independently randomizes the cheese position (with probability $\alpha$) or positions the cheese at the new dish position (with probability $1-\alpha$). Note that when independently randomizing the cheese position, it's possible that the cheese will be co-located with the dish.
    \item \textbf{Unrestricted classification transforming edits.}
        An unrestricted classification transforming edit independently randomizes the cheese position and the dish position with probability~$1$. It's possible that the cheese and the dish will be co-located.
\end{enumerate}

\paragraph{Oracle maximum return.}
In this environment, an optimal policy follows any shortest path from the mouse spawn position to the cheese position in the graph representing the maze layout (in which the dish counts as an obstruction if it is not co-located with the cheese, since collecting the dish terminates the episode). Given this graph we compute the maximum expected return as in \cref{eq:corner-max-return}.

\paragraph{Origin.}
\citet{Langosco+2022} originally implemented the \env{Maze~II} environment by modifying the \env{Maze} environment from OpenAI ProcGen \citep{Cobbe+2020} such that the cheese is replaced by a yellow diamond during training, and subsequently by a pair of objects (a red diamond and a yellow diagonal line) during evaluation.
In their setup, the yellow diamond is rewarding because of its shape (diamond), rather than its color (yellow), so the intended extrapolation of the goal is for the policy to pursue the red diamond.
However, \citet{Langosco+2022} found that learned policies tend to pursue the yellow line instead.
\env{Cheese on a dish} is an interpretation of \env{Maze II} using an original JAX implementation. In addition to the differences discussed for \env{cheese in the corner}, we break the symmetry between the dish and the cheese by redundantly coding the dish (since we use Boolean grid observations rather than colored images).

\subsection{The \env{keys and chests} environment}
\label{apx:environments:keys}

\paragraph{Environment.}
In this environment, levels are parameterized by a $13 \times 13$ wall layout, and positions within this grid for the mouse spawn, $k \leq 10$ keys, and $c \leq 10$ chests. We require that the mouse spawn position and the key and chest positions are distinct and are not obstructed by walls. We do not require that there are unobstructed paths between the positions.

In the initial state, the mouse begins in the spawn position.
The actions available to the agent are to attempt to move the mouse up, left, down, or right, which succeeds if the respective position is not obstructed by a wall or the edge of the grid.
If the mouse position equals a key position, it collects the key, removing it from the maze and adding it to the mouse's inventory. If the mouse position equals a chest position, assuming it has at least one key in its inventory, it collects the chest, removing it from the maze and removing the key from its inventory.
The mouse can occupy the same position as a chest if it has an empty inventory.
In a level with $k$~keys and $c$~chests, the episode terminates when $\min(k, c)$ chests have been collected, or after a maximum of 128 steps.

\paragraph{Observations.}
We represent states to the agent as a $15 \times 15 \times 5$ Boolean grid.
The first channel encodes the maze layout, including a border of width~1. The second one-hot encodes the position of the mouse. The third encodes the positions of all the as-yet-uncollected chests. The fourth encodes the positions of all as-yet-uncollected keys. The fifth encodes the mouse's inventory, with one cell along the top row of the channel for each key.
All of the relevant information about the level and the state is encoded into this observation, therefore this environment is fully observable.

\paragraph{True goal and proxy goal.}
The training goal is to collect chests. The reward function assigns $+1$ reward to transitions in which the mouse collects a chest. This reward function is not normalized like those we consider in the theory sections, but it is bounded and could easily be normalized.
Under this goal, collecting keys has no intrinsic value, but since collecting a chest requires collecting a key, keys have instrumental value. We also consider a proxy goal that assigns reward for collecting keys as well as chests. This goal could be modeled as a reward function that assigns, for example, $1-\eta$ reward for collecting each key and $\eta$ reward for collecting each chest, where $\eta \in (0,1)$. Note that we never train with such a proxy goal as the reward function.

\paragraph{Level classification.}
Compared to the other environments, describing this environment in terms of the definitions in \cref{sec:setting} is not as straight forward.
Optimal behavior involves collecting keys and chests in an order that depends on subtle tradeoffs driven by the exponential discounting. For the true goal, the optimal behavior is to collect as many chests as fast as possible. However, it may make sense to make a brief detour to collect multiple keys if it slightly delays the collection of the next chest, as long as this sufficiently accelerates the collection of subsequent chests.
The proxy goal rewards key collection for its own sake, so as to increase the incentive for the policy to take larger and larger detours to collect keys and even collect more keys than there are reachable chests.
In our experiments, we mainly consider the following kinds of levels.
\begin{enumerate}
    \item
        \textbf{Most levels with 3 keys and 10 chests are approximately non-distinguishing.}
        Consider a level in which there are $k \approx 3$ reachable keys, and $c \approx 10$ reachable chests. These levels are approximately non-distinguishing, in the sense that while the proxy goal incentivizes collecting keys earlier than optimal, it still incentivizes eventually collecting chests. For many key/chest layouts, pursuing the proxy goal entails similar behavior to pursing the true goal.
    
    \item
        \textbf{Most levels with 10 keys and 3 chests are approximately distinguishing.}
        Consider a level in which there are $k \approx 10$ reachable keys, and $c \approx 3$ reachable chests.
        These levels are mostly distinguishing because optimizing the proxy goal will usually involve long detours to collect all reachable keys, delaying the collection of chests compared to optimizing the true goal.
\end{enumerate}
The exact classifications depend on the key/chest layout.
Keys and chests that are unreachable from the mouse spawn position have no influence on optimal behavior.
For the purpose of measuring the proportion of distinguishing levels, we approximately classify the level by the total number of keys and chests in the level without accounting for reachability or the exact key/chest layout (non-distinguishing levels have 3 keys and 10 chests, distinguishing levels have 10 keys and 3 chests).

\paragraph{Procedural level generation.}
We construct two procedural level generators, $\distrNonDistg, \distrDistg \in \LevelDistrs$, (approximately) concentrated on non-distinguishing and distinguishing levels, respectively.
\begin{itemize}
    \item \textbf{Non-distinguishing level distribution \textnormal{($\distrNonDistg$)}.}
        We procedurally generate \emph{mostly} non-distinguishing levels as follows. For each position, we place a wall independently with probability $25\%$. We then position the mouse spawn, 3 keys, and 10 chests in distinct, unobstructed positions (assuming there are enough positions).
        The generated levels are usually approximately non-distinguishing in the sense of the classification system described above. It is possible that fewer than 3 keys and 10 chests will be reachable from the mouse spawn position, and it is even possible that the exact layout will lead to a significant disadvantage for a policy that over-prioritizes key collection.
        
    \item \textbf{Distinguishing level distribution \textnormal{($\distrDistg$)}.}
        We procedurally generate \emph{mostly} distinguishing levels in the same fashion, but with 10 keys and 3 chests instead.
        The generated levels are usually distinguishing, along the lines of the classification system described above. It is possible that fewer than 10 keys and 3 chests will be reachable from the mouse spawn position, and it is possible that the exact layout will not substantially disincentivize key collection (for example, all keys may be positioned on a shortest path through the set of chests).
\end{itemize}

\paragraph{Elementary edit distributions.}
ACCEL additionally requires specifying an edit distribution used to sample ``similar'' levels for potential entry into the level buffer.
In \cref{apx:mutators}, we discuss how we build our edit distributions from elementary edit distributions of the following three types.
\begin{enumerate}
    \item \textbf{Classification preserving edits.}
        Each classification preserving edit either removes an existing wall, positions a new wall, or moves the mouse spawn position or the position of one key or chest to a random unobstructed, unoccupied position. These edits don't change the number of keys or chests, though they may change whether these positions are reachable from the mouse spawn position.
        As a result, they may change the exact classification of the level (though not its approximate classification).
    \item \textbf{Biased classification transforming edits.}
        Given a probability $\alpha$, a biased classification transforming edit sets the number of keys and chests to that of the distinguishing level generator (10 and 3) with probability~$\alpha$, or that of the non-distinguishing level generator (3 and 10) with probability $1-\alpha$.
    \item \textbf{Unrestricted classification transforming edits.}
        An unrestricted classification transforming edit is a biased classification transforming edit with $\alpha$ set to $50\%$.
\end{enumerate}

\paragraph{Oracle maximum return.}
As described above, an optimal policy in this environment collects keys and chests in the some order so as so collect as many chests as fast as possible. In particular, noting that at most $m = \min(k,c)$ chests can be collected in a given level (due to the requirement that a key must be expended to collect a chest), and supposing that they are collected after steps $s_1, s_2, \ldots, s_m \in \mathbb{N} \cup \{\infty\}$, the return is given by
\begin{equation}\label{eq:keys-chests-return}
    R(s_1, s_2, \ldots, s_m) = \sum_{i=1}^m \gamma^{s_i-1}
\end{equation}
where $\gamma \in (0,1)$ is the discount factor.

Our approach to computing the optimal value is a to enumerate a subset of paths through the network of key/chest/mouse spawn positions that must contain an optimal path, to brute-force evaluate a lower bound on the return of these paths, and to identify the greatest return lower bound as the optimal return for the level.
We explain the procedure in detail as follows.
\begin{enumerate}
    \item 
        We begin by enumerating a set of so-called \emph{viable} collection sequences. Each collection sequences comprising an $m$-permutation of the $k$ keys, an $m$-permutation of the $c$ chests, and a Dyck path of order $m$ (a permutation of $m$ keys and $m$ chests such that each chest is preceded by a corresponding key, cf.~balanced parentheses). These three combinatorial objects jointly identify a sequence in which particular keys and chests could be collected.
        
        For example, suppose $k=3$ and $c=10$ and number the keys
            $\mathrm{k}_1, \ldots, \mathrm{k}_3$ and the chests
            $\mathrm{c}_1, \ldots, \mathrm{c}_{10}$.
        Suppose the key $3$-permutation is $(3~1~2)$, the chest $3$-permutation is $(1~6~4)$, and the Dyck path is $(\mathrm{k}~\mathrm{c}~\mathrm{k}~\mathrm{k}~\mathrm{c}~\mathrm{c})$. Then the corresponding collection sequence is $\mathrm{k}_3, \mathrm{c}_1, \mathrm{k}_1, \mathrm{k}_2, \mathrm{c}_6, \mathrm{c}_4$.
        
        The total number of viable collection sequences is
        \begin{equation}\label{eq:keys-chests-combinatorics}
            \underbrace{\binom{k}{m} \cdot m!}_{\text{key $m$-permutations}}
            \times
            \underbrace{\binom{c}{m} \cdot m!}_{\text{chest $m$-permutations}}
            \times
            \quad
            \underbrace{\frac{1}{m+1}\binom{2m}{m}}_{\text{order $m$ Dyck paths}}.
        \end{equation}
        Since, in our setup, either $k=3$ and $c=10$ or vice versa, $m=3$ and \cref{eq:keys-chests-combinatorics} evaluates to $21,\!600$.

    \item
        We evaluate a lower bound on the return of each viable collection sequence as follows. First we compute all-pairs shortest path distances for the mouse spawn position, each key position, and each chest position, using the Floyd--Warshall algorithm. We represent unreachable pairs with a distance of $\infty$. We then simulate each sequence, computing the step counts required for each collection as the cumulative sum of pairwise shortest path distances for each transition along the sequences, starting at the mouse spawn position. We round any distances above $128$ up to $\infty$. We then use the step counts of each of the $m$ chest collections in the expression~\cref{eq:keys-chests-return}.

        For example, consider the collection sequence described earlier, $\mathrm{k}_3, \mathrm{c}_1, \mathrm{k}_1, \mathrm{k}_2, \mathrm{c}_6, \mathrm{c}_4$. Let $D(p, q)$ represent the shortest path distance between the positions of objects $p$ and $q$. Then we set
            $s_{\mathrm{c}_1} = D(\mathrm{spawn}, \mathrm{c}_1)$,
            $s_{\mathrm{k}_3} = s_{\mathrm{c}_1} + D(\mathrm{c}_1, \mathrm{k}_3)$,
        and so on until 
            $s_{\mathrm{c}_4} = s_{\mathrm{c}_6} + D(\mathrm{c}_6, \mathrm{c}_4)$.
        (If any of these step counts pass the timeout of $128$, we round them up to $\infty$ to account for termination.)
        Finally, we compute the lower bound on the return of this sequence as 
            $
                \gamma^{s_{\mathrm{c}_1} - 1}
                + \gamma^{s_{\mathrm{c}_6} - 1}
                + \gamma^{s_{\mathrm{c}_4} - 1}
            $.

        We note that if a viable collection sequence ever involves collecting a key or chest that is unreachable from the mouse spawn position, then the step count for this collection and all subsequent collections will be infinite and thus this neither this collection nor subsequent collections will contribute to the return lower bound.

    \item 
        The greatest return lower bound across all viable collection sequences equals the maximum return achievable, as follows.
        Following a shortest path between a given pair of positions may involve crossing over other keys and chests. This can have any of several effects that invalidate the collection sequence, including (1)~collecting keys or chests that appear later in the sequence earlier than planned, (2)~expending keys before they are intended to be spent to collect a chest later in the sequence, or (3)~terminating the episode early due to collecting the maximum available number of chests before finishing the sequence.
        However, such disruptions only ever \emph{increase} the return.
        Moreover, for each disrupted sequence, there is a viable collection sequence that represents the actual order of keys and chests, except for accounting for the case where more than $3$ keys are collected (which has no affect on the return as long as they are on the shortest paths to the necessary chests).
        It follows that for the viable collection sequence with the highest return, there are no such disruptions, and the return lower bound is tight.
\end{enumerate}
We note that the set of viable collection sequences could be further filtered by eliminating (or never enumerating) sequences involving unreachable keys or chests. 
However, we use the above approach for simplicity and uniformity. In particular, since the above approach yields a fixed computational graph given values of $k$, $c$, we can enumerate the $21,\!600$ viable sequences once at compile time and accelerate the brute-force evaluation step for a particular level (or batch of levels) using JAX. To handle a mixture of levels with $(k,c)=(3,10)$ and levels with $(k,c)=(10,3)$, we simply evaluate both ways and then dynamically keep the appropriate result for each level.

\paragraph{Origin.}
\citet{Barnett2019} originated the idea of creating a distribution shift from a navigation environment in which keys are scarce to one in which keys are not scarce.
\citet{Langosco+2022} originally implemented a \env{Keys and Chests} environment by modifying the \env{Heist} environment from OpenAI ProcGen \citep{Cobbe+2020}.
Our \env{keys and chests} environment is an interpretation of the environment from \citet{Langosco+2022}, implemented in JAX, with several changes similar to those for the other environments.

\clearpage

\section{Additional training details}
\label{apx:training-details}

\subsection{Hyperparameters}

\begin{table}[h!]
\caption{\label{table:hyperparams} Hyperparameters used for all methods and environments.}
\begin{center}
\begin{tabular}{lrl}
\toprule
\textbf{Parameter}          & Value     & Notes/exceptions \\
\midrule
\emph{Rollouts}                         \\
\# parallel environments    & 256       \\
Rollout length              & 128       \\
\# environment steps per cycle
                            & 32,768    & (\# parallel environments $\times$ rollout length) \\
Discount factor, $\gamma$   & 0.999     \\

\midrule
\emph{GAE} \\
$\lambda_{\text{GAE}}$      & 0.95      \\

\midrule
\emph{PPO loss function} \\
Clip range                  & 0.1       \\
Value clipping              & yes       \\
Critic coefficient          & 0.5       \\
Entropy coefficient         & 1e-3      & Or, 1e-2 for \env{keys and chests} \\

\midrule
\emph{PPO updates} \\
Epochs per cycle            & 5         \\
Minibatches per epoch       & 4         \\
Max gradient norm           & 0.5       \\
Adam learning rate          & 5e-5      \\
Learning rate schedule      & constant  \\

\midrule
\emph{UED configuration}    &           & N/A for DR \\
Replay rate                 \\
    ~~\RobustPLR{}          &  0.33     & On average, 1 replay cycle per
                                          2 generate cycles \\
    ~~ACCEL                 &  0.5      & On average, 1 replay cycle per
                                          1 generate cycle \\
                            &           & (1 edit cycle immediately follows each 
                                          replay cycle) \\
Buffer size                 &  4096     \\
Prioritization method       &  rank     \\
Temperature                 &   0.1     \\
Staleness coefficient       &   0.1     \\
\# elementary edits per level, $n$
                            &    12     & N/A for \RobustPLR{} \\

\bottomrule
\end{tabular}
\end{center}
\end{table}

\subsection{Compute}

We perform each training run on a single NVIDIA A100 80GB GPU.
For \env{cheese in the corner} and \env{cheese on a dish}, each training run takes around
    40~minutes (DR)
or
    80~minutes (UED methods).
For \env{keys and chests}, each training run takes around
    60~minutes (DR)
or 
    110~minutes (UED methods).
UED methods take longer than DR because UED methods require sampling additional rollouts for refining the buffer (the number of environment steps used for PPO updates is held constant across all methods).
\env{Keys and chests} runs are longer than others because we trained each method for 400 million steps instead of 200~million steps in this environment.

The experiments discussed in \cref{sec:results} took a total of
    1.2k~GPU hours.
The additional experiments discussed in Appendices
    \ref{apx:mutators},
    \ref{apx:maximin},
    \ref{apx:moresteps},
    \ref{apx:robustness-corner},
and
    \ref{apx:robustness-dish}
took, respectively, totals of 
    1.2k~GPU hours;
    500~GPU hours; 
    400~GPU hours (not counting the first 200~million environment steps used for training, which we counted with \cref{sec:results});
    210~GPU hours;
and
    350~GPU hours.

\clearpage

\section{Additional evaluation results (non-distinguishing levels and proxy goal)}
\label{apx:non-distinguishing}

In \cref{sec:results}, we investigate which training distributions and methods led to good performance on distinguishing levels. We claim that when the performance is low, this is an instance of goal misgeneralization, and therefore when the performance is high, goal misgeneralization has been prevented.

In order to justify this claim, we also check that the poor performance is explained primarily by the policy pursuing a proxy goal on distinguishing levels, rather than the policy behaving incapably in non-distinguishing or distinguishing levels. \Cref{fig:main-extra} shows (1)~return on non-distinguishing levels and (2)~proxy return on distinguishing levels for each training configuration.

\Cref{fig:main-extra}(top row) shows that the learned policies perform capably in non-distinguishing levels. For \env{cheese in the corner} and \env{cheese on a dish}, all training methods achieve high return on non-distinguishing levels for all training distributions. For \env{keys and chests}, this is the case for all training distributions except the $\alpha=1$ baseline (in this edge case, non-distinguishing levels, with few keys and many chests, are never seen during training).

\Cref{fig:main-extra}(bottom row) shows the proxy return achieved by learned policies on distinguishing levels. Note that we never use the proxy goal for training. Here we simply evaluate the policies according to each environment's respective proxy reward function.
In particular, for \env{cheese in the corner}, we use a reward function that assigns $+1$ reward the first time the mouse reaches the corner. For \env{cheese on a dish}, we use a reward function that assigns $+1$ reward if the mouse collects the dish. For \env{keys and chests}, it's difficult to define a proxy goal (see \cref{apx:environments:keys}), here we report the average number of keys in the mouse's inventory throughout the rollouts (note that distinguishing levels have at most 3 reachable chests, and so carrying more than three keys suggests the agent is over-prioritizing key collection).
The trends in proxy return mirror the trends in true return displayed in \cref{fig:main-distinguishing-return}, suggesting that our learned policies retain enough capabilities in distinguishing levels to pursue the proxy goal---a case of goal misgeneralization.

\begin{figure}[h!]
    \centering
    \begingroup
    \setlength{\tabcolsep}{1pt}
    \begin{tabular}{cccc}
    \toprule
    & \bf \env{Cheese in the corner}
    & \bf \env{Cheese on a dish}
    & \bf \env{Keys and chests}
    \\
    & \small Seeds $N{=}8$, steps $T{=}200$M
    & \small Seeds $N{=}3$, steps $T{=}200$M
    & \small Seeds $N{=}5$, steps $T{=}400$M
    \\
    \midrule
    \rotatebox[origin=l]{90}{Avg.\ ret.\ (ambig.)}
    & \includegraphics{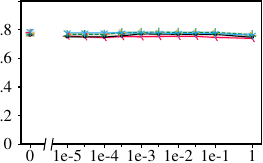}
    & \includegraphics{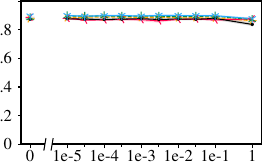}
    & \includegraphics{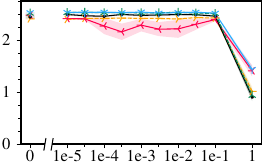}
    \\
    \rotatebox[origin=l]{90}{Proxy ret.\ (distg.)}
    & \includegraphics{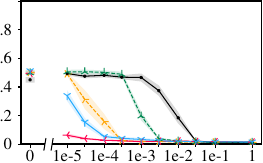}
    & \includegraphics{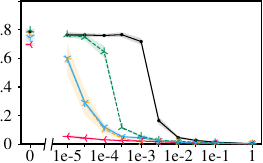}
    & \includegraphics{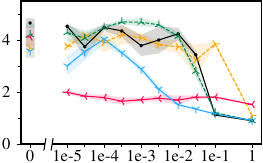}
    \\
    & \multicolumn{3}{c}{Proportion of distinguishing levels in underlying training distribution, $\alpha$ (augmented log scale)}
    \\
    \midrule
    & \multicolumn{3}{c}{\includegraphics{figures/plots/legends/main-both.pdf}}
    \\[-0.5ex]
    \bottomrule
    \end{tabular}
    \endgroup
    \caption{\label{fig:main-extra}%
        \textbf{Performance on non-distinguishing levels and with respect to the proxy goal.}
        Each policy is trained on $T$ environment steps using the indicated training method with underlying training distribution $\distrTrain[\alpha]$.
        \textit{(1\textsuperscript{st} row):}
            Average return over 512 steps for an evaluation batch of 256 non-distinguishing levels.
        \textit{(2\textsuperscript{nd} row):}
            Average \emph{proxy} return over 512 steps for an evaluation batch of 256 distinguishing levels.
            Note that the proxy goal is never used for training.
        \textit{(Both):}
            Mean over $N$ seeds, shaded region is one standard error.
            Note the split axes used to show zero on the log scale.
    }
\end{figure}

\clearpage

\section{Visualizing performance on levels with different cheese positions}
\label{apx:heatmaps}

For each training configuration (training distribution, training method) studied in the main text for \env{cheese in the corner}, we save the policy from the end of training for the first of 8 training seeds. In order to visualize the robustness of these policies to varying changes in the cheese position, we create a batch of 122 levels with a shared wall layout and a fixed mouse position, but where each level in the batch has a different cheese position. For each level, we sample 512 environment steps from each policy and compute the average per-episode return as a measure of the policy's performance in that level. This gives us a vector of 122 average return values for each policy (one for each level), which we visualize in a policy-specific heatmap such that the average return of the policy in a level with the cheese in position $(i,j)$ is indicated by the color of the cell $(i,j)$ in the heatmap. For context we overlay the wall layout and the mouse spawn position (note that we do not consider levels with cheese positions that would coincide with a wall or with the mouse spawn position).

Heatmaps for each method and training distribution follow in \cref{fig:heatmaps-part1,fig:heatmaps-part2}.
We see a rough progression whereby for more advanced algorithms or higher $\alpha$, the agent is robust to a greater proportion of cheese positions. 
There are also instances of ``blind spots'' indicating cheese positions for which certain methods are not robust, indicating that these training methods do not produce perfectly robust policies.

\begin{figure}[h!]
    \begingroup
    \centering
    \setlength{\tabcolsep}{0.25em}
    \renewcommand\arraystretch{1.12}
    \begin{tabular}{cccccc}
    \toprule
    \null & \bf \multirow{2}{*}{DR}
    & \multicolumn{2}{c}{\bf \RobustPLR{}}
    & \multicolumn{2}{c}{\bf ACCEL}
    \\
    $\alpha$
    & 
    & \small max-latest
    & \small oracle-latest
    & \small max-latest
    & \small oracle-latest
    \\
    \midrule
    \raisebox{\halfheatmapwidth}{$0$}
    & \includegraphics[width=\heatmapwidth]{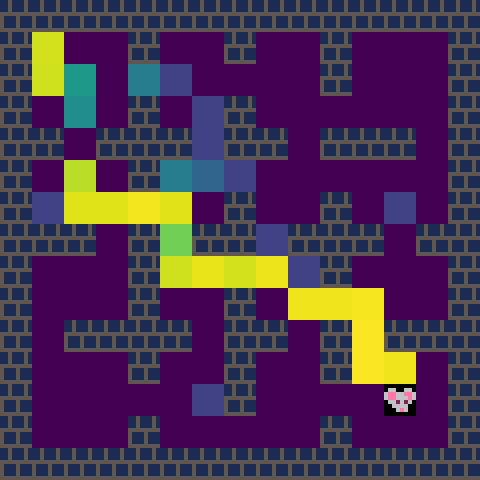}
    & \includegraphics[width=\heatmapwidth]{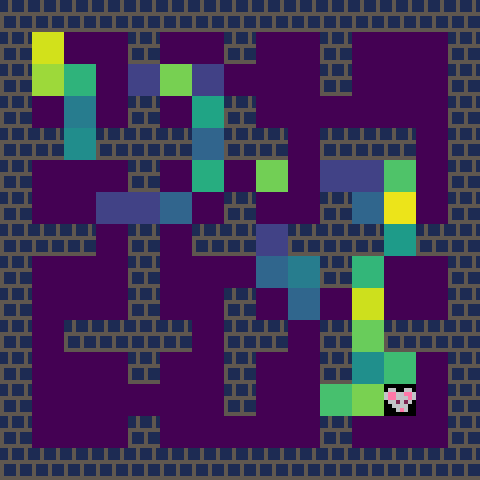}
    & \includegraphics[width=\heatmapwidth]{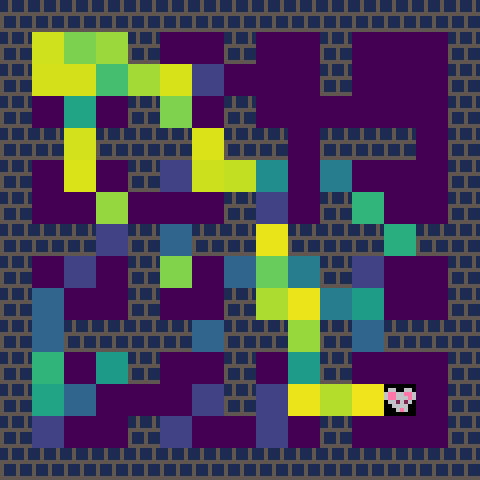}
    & \includegraphics[width=\heatmapwidth]{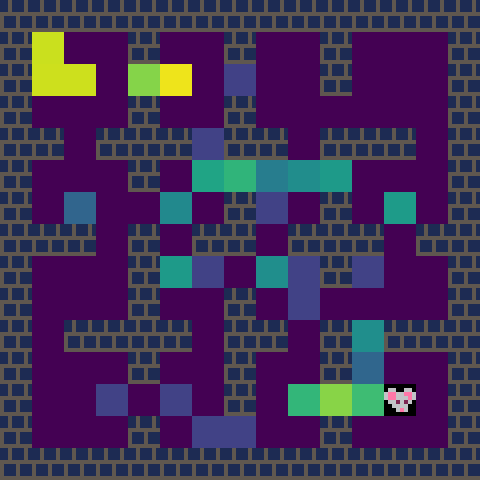}
    & \includegraphics[width=\heatmapwidth]{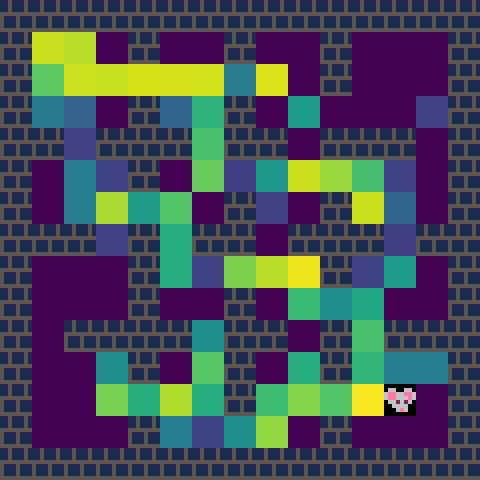}
    \\
    \raisebox{\halfheatmapwidth}{$1\text{e-}5$}
    & \includegraphics[width=\heatmapwidth]{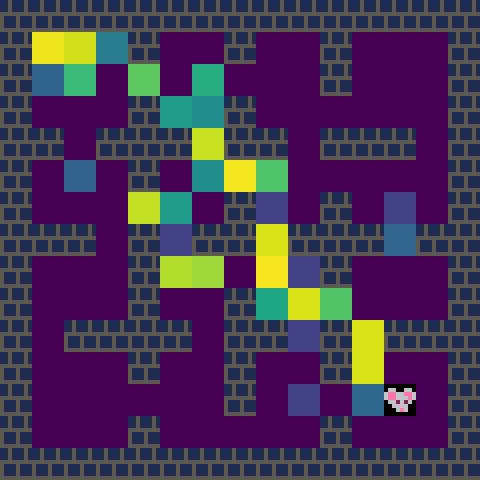}
    & \includegraphics[width=\heatmapwidth]{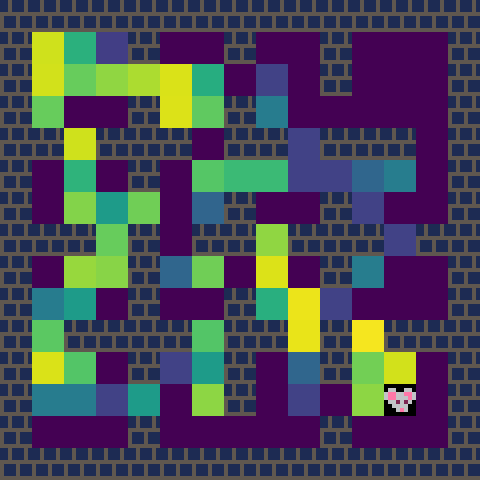}
    & \includegraphics[width=\heatmapwidth]{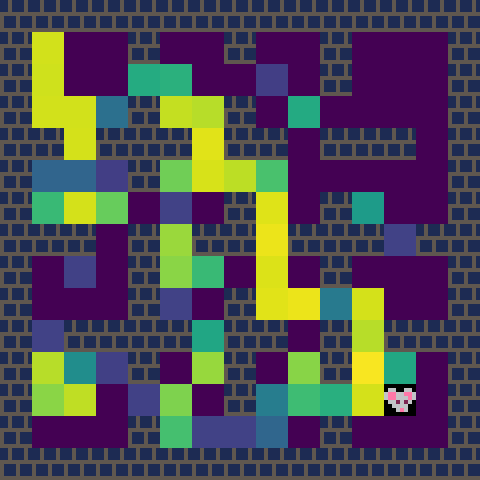}
    & \includegraphics[width=\heatmapwidth]{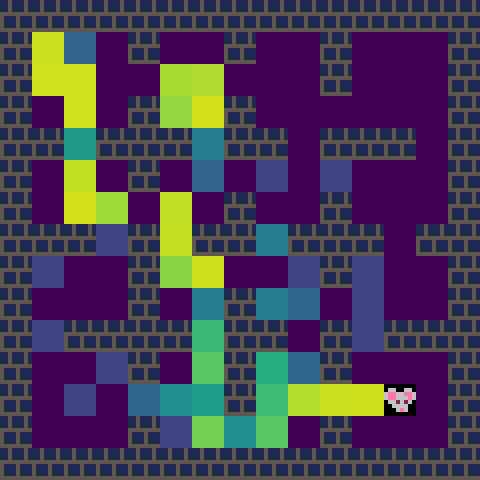}
    & \includegraphics[width=\heatmapwidth]{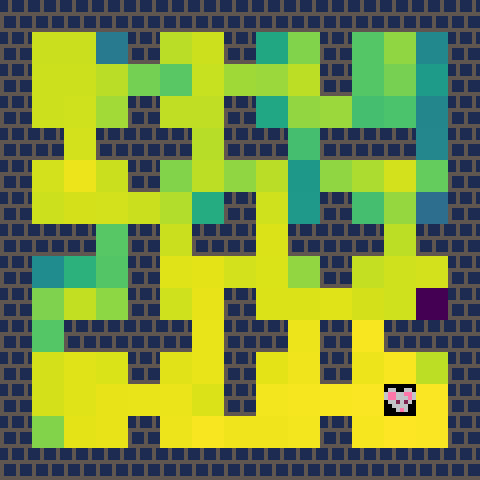}
    \\
    \raisebox{\halfheatmapwidth}{$3\text{e-}5$}
    & \includegraphics[width=\heatmapwidth]{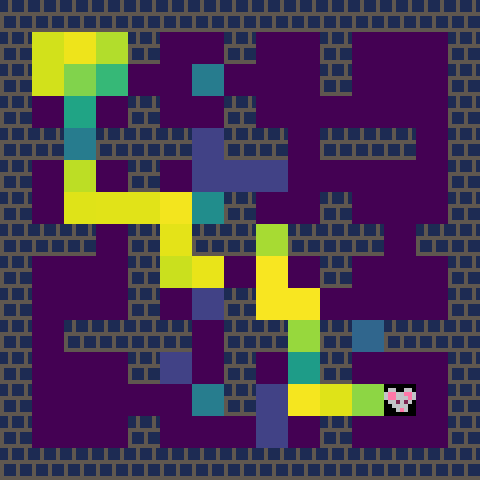}
    & \includegraphics[width=\heatmapwidth]{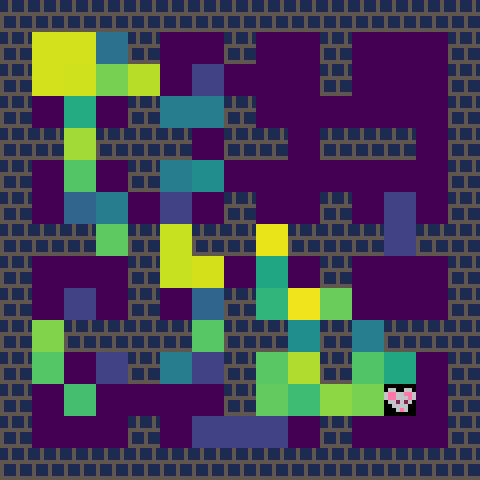}
    & \includegraphics[width=\heatmapwidth]{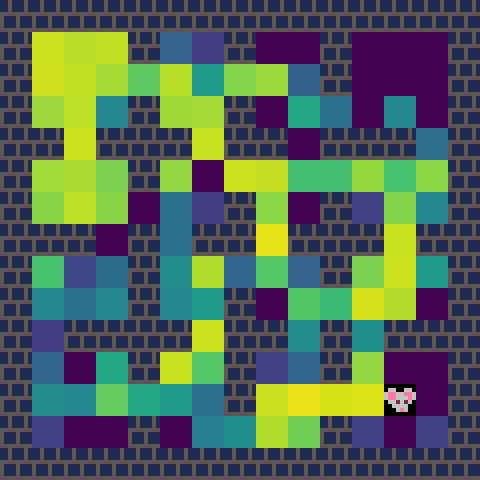}
    & \includegraphics[width=\heatmapwidth]{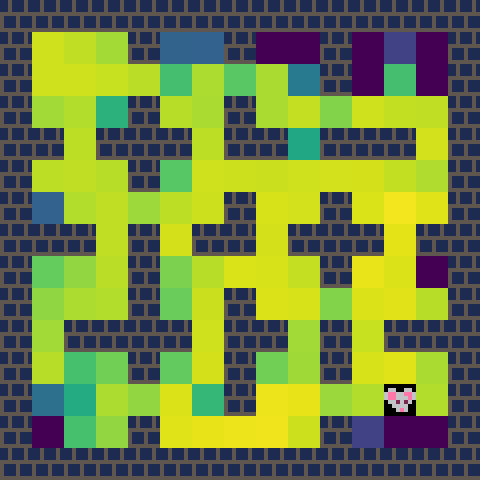}
    & \includegraphics[width=\heatmapwidth]{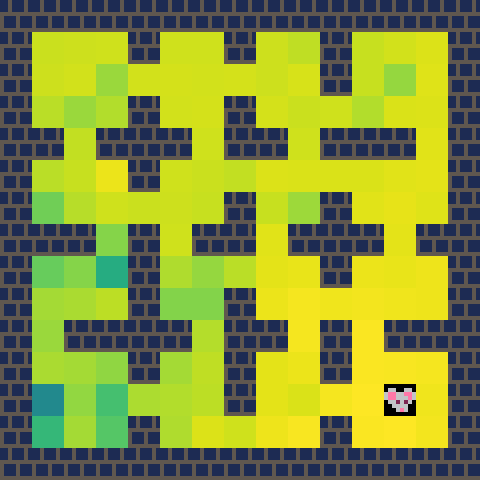}
    \\
    \raisebox{\halfheatmapwidth}{$1\text{e-}4$}
    & \includegraphics[width=\heatmapwidth]{figures/heatmaps/dr-1eneg4.png}
    & \includegraphics[width=\heatmapwidth]{figures/heatmaps/plr-est-1eneg4.png}
    & \includegraphics[width=\heatmapwidth]{figures/heatmaps/plr-ora-1eneg4.png}
    & \includegraphics[width=\heatmapwidth]{figures/heatmaps/accelid-est-1eneg4.png}
    & \includegraphics[width=\heatmapwidth]{figures/heatmaps/accelid-ora-1eneg4.png}
    \\
    \midrule
    & \multicolumn{5}{c}{\includegraphics[trim={0pt 5pt 0pt 0pt}]{figures/plots/legends/colorbar-viridis.pdf}}
    \\
    & \multicolumn{5}{c}{Average return}
    \\
    \bottomrule
    \end{tabular}
    \endgroup
    \caption{\label{fig:heatmaps-part1}%
        Heatmap visualizations (part 1 of 2). See \cref{fig:heatmaps,apx:heatmaps} for details.
    }
\end{figure}

\begin{figure}
    \begingroup
    \centering
    \setlength{\tabcolsep}{0.25em}
    \renewcommand\arraystretch{1.12}
    \begin{tabular}{cccccc}
    \toprule
    \null & \bf \multirow{2}{*}{DR}
    & \multicolumn{2}{c}{\bf \RobustPLR{}}
    & \multicolumn{2}{c}{\bf ACCEL}
    \\
    $\alpha$
    & 
    & \small max-latest
    & \small oracle-latest
    & \small max-latest
    & \small oracle-latest
    \\
    \midrule
    \raisebox{\halfheatmapwidth}{$3\text{e-}4$}
    & \includegraphics[width=\heatmapwidth]{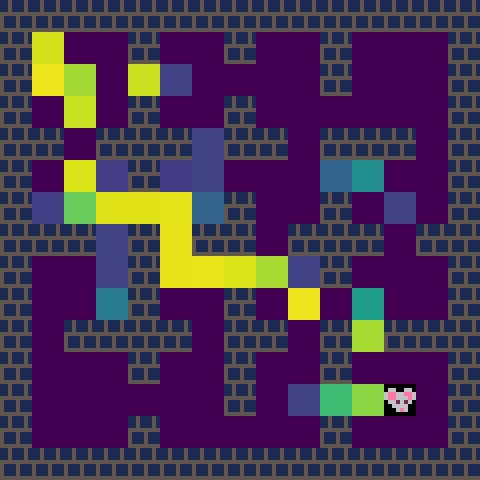}
    & \includegraphics[width=\heatmapwidth]{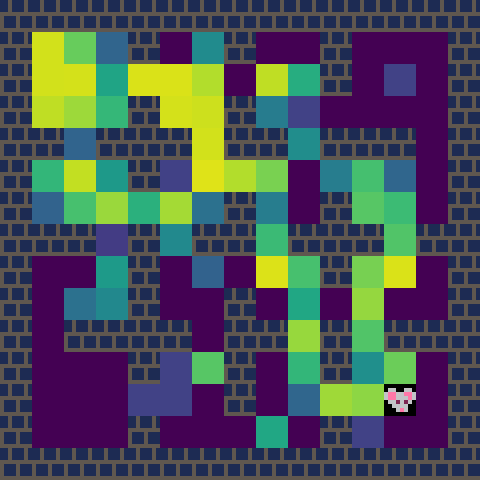}
    & \includegraphics[width=\heatmapwidth]{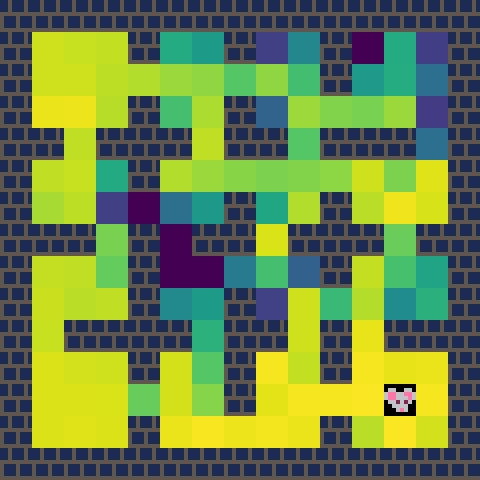}
    & \includegraphics[width=\heatmapwidth]{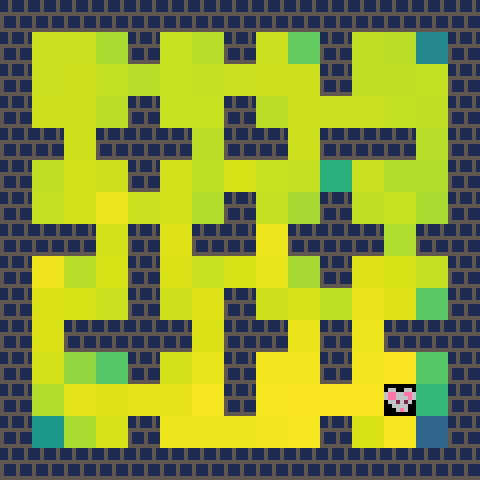}
    & \includegraphics[width=\heatmapwidth]{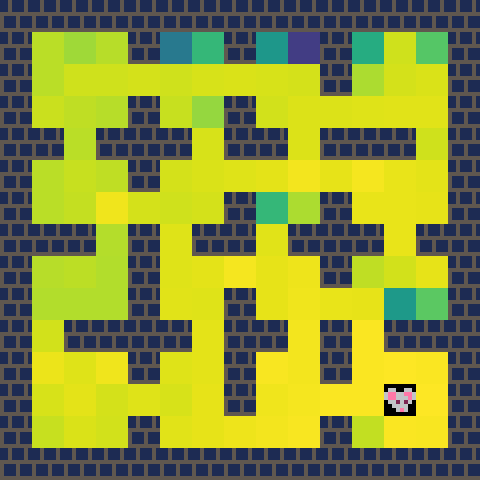}
    \\
    \raisebox{\halfheatmapwidth}{$1\text{e-}3$}
    & \includegraphics[width=\heatmapwidth]{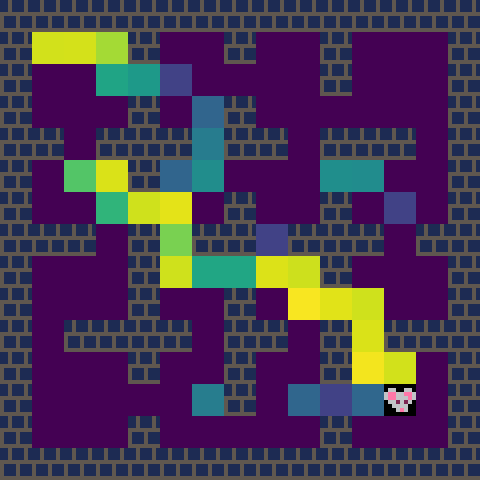}
    & \includegraphics[width=\heatmapwidth]{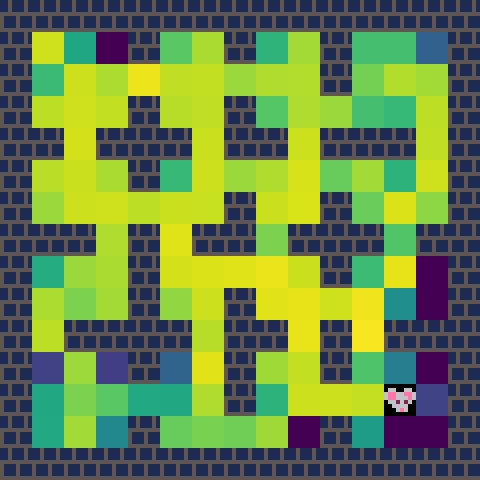}
    & \includegraphics[width=\heatmapwidth]{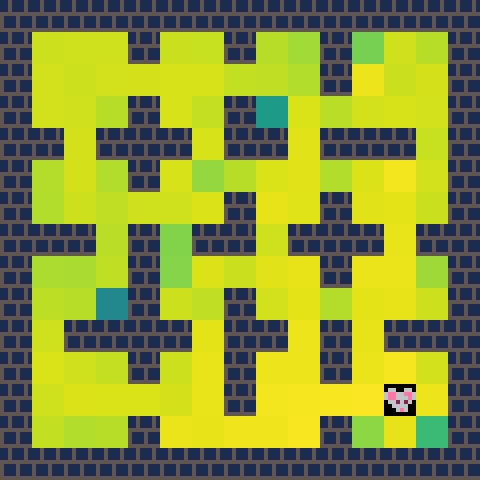}
    & \includegraphics[width=\heatmapwidth]{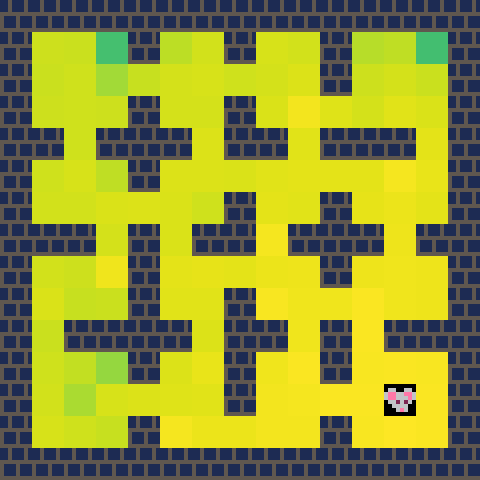}
    & \includegraphics[width=\heatmapwidth]{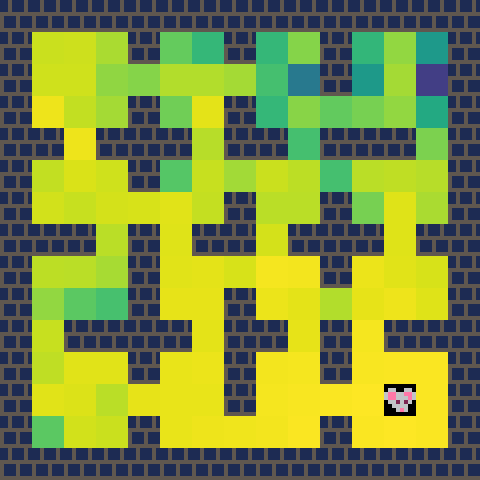}
    \\
    \raisebox{\halfheatmapwidth}{$3\text{e-}3$}
    & \includegraphics[width=\heatmapwidth]{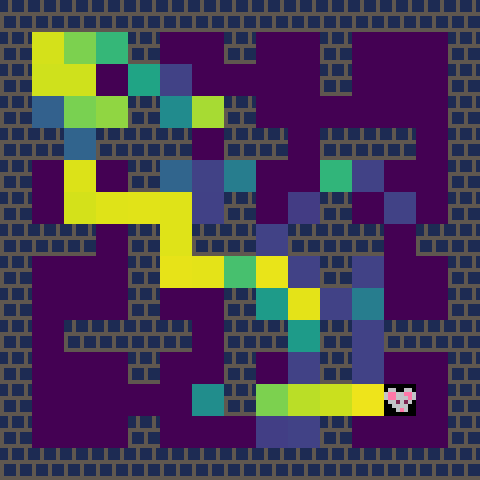}
    & \includegraphics[width=\heatmapwidth]{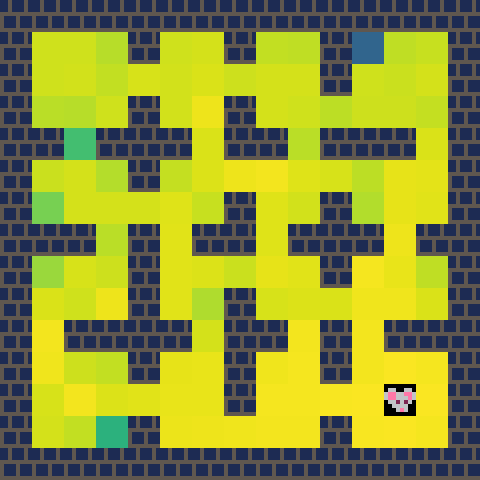}
    & \includegraphics[width=\heatmapwidth]{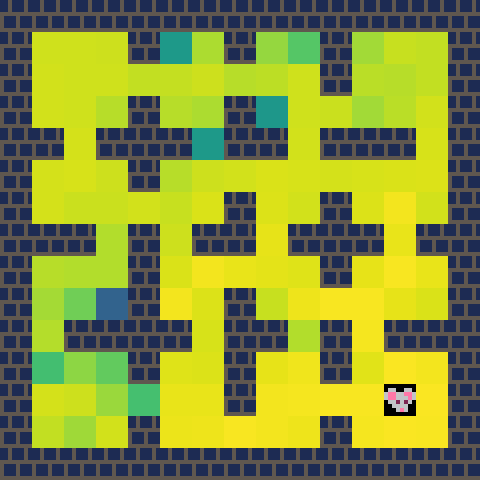}
    & \includegraphics[width=\heatmapwidth]{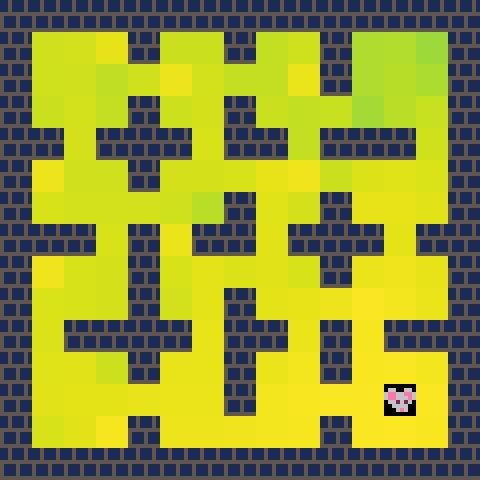}
    & \includegraphics[width=\heatmapwidth]{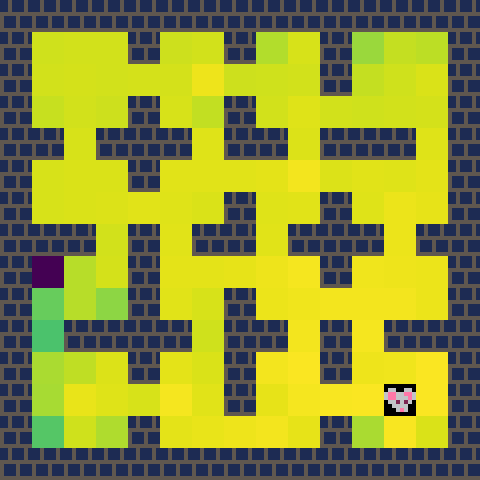}
    \\
    \raisebox{\halfheatmapwidth}{$1\text{e-}2$}
    & \includegraphics[width=\heatmapwidth]{figures/heatmaps/dr-1eneg2.png}
    & \includegraphics[width=\heatmapwidth]{figures/heatmaps/plr-est-1eneg2.png}
    & \includegraphics[width=\heatmapwidth]{figures/heatmaps/plr-ora-1eneg2.png}
    & \includegraphics[width=\heatmapwidth]{figures/heatmaps/accelid-est-1eneg2.png}
    & \includegraphics[width=\heatmapwidth]{figures/heatmaps/accelid-ora-1eneg2.png}
    \\
    \raisebox{\halfheatmapwidth}{$3\text{e-}2$}
    & \includegraphics[width=\heatmapwidth]{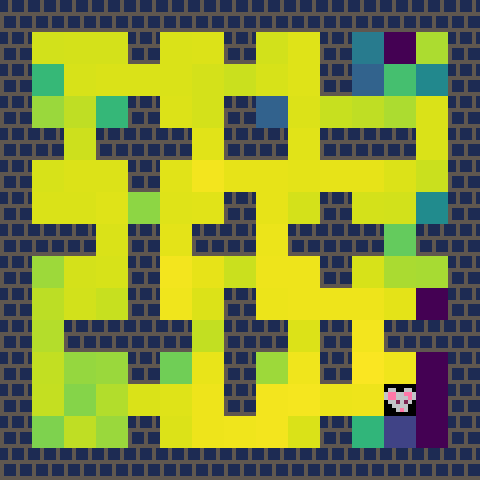}
    & \includegraphics[width=\heatmapwidth]{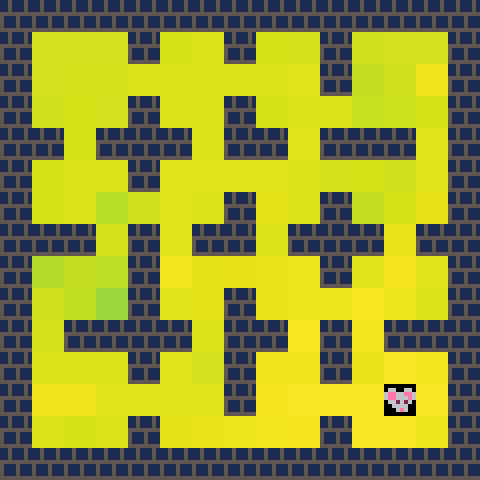}
    & \includegraphics[width=\heatmapwidth]{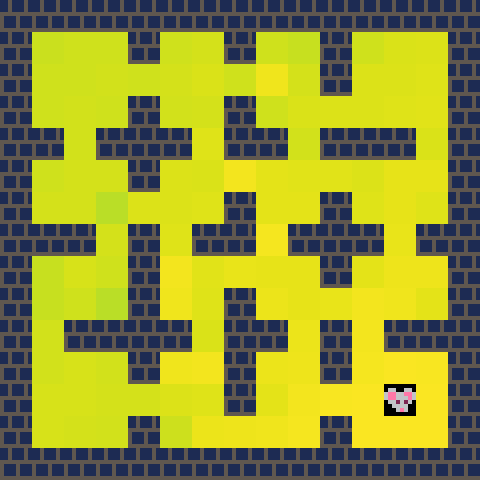}
    & \includegraphics[width=\heatmapwidth]{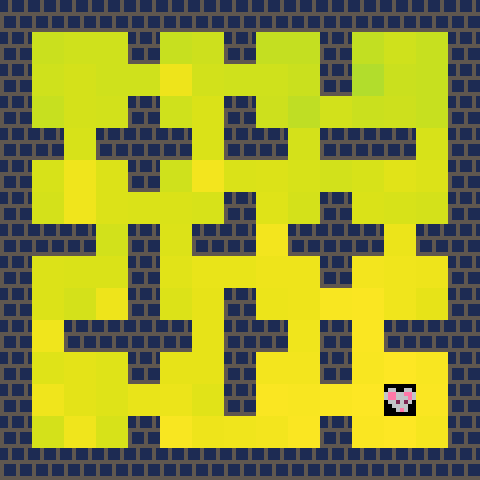}
    & \includegraphics[width=\heatmapwidth]{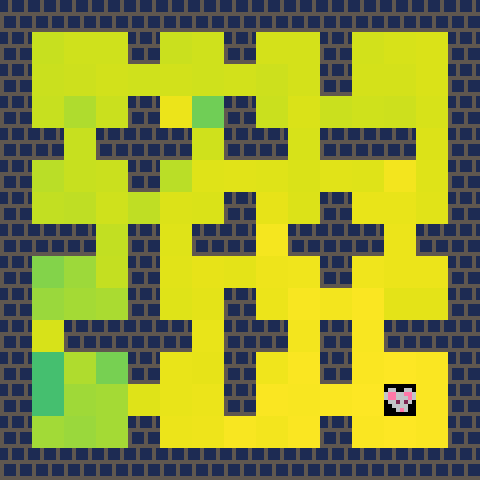}
    \\
    \raisebox{\halfheatmapwidth}{$1\text{e-}1$}
    & \includegraphics[width=\heatmapwidth]{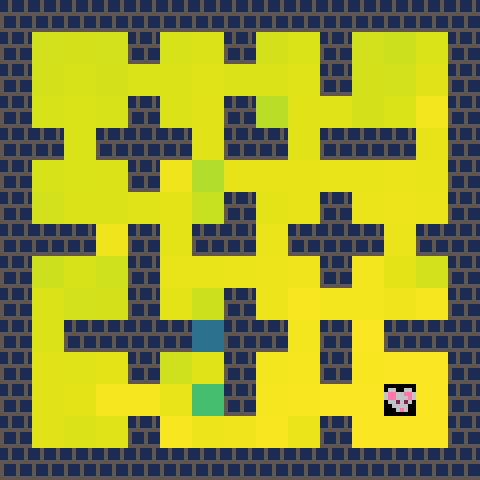}
    & \includegraphics[width=\heatmapwidth]{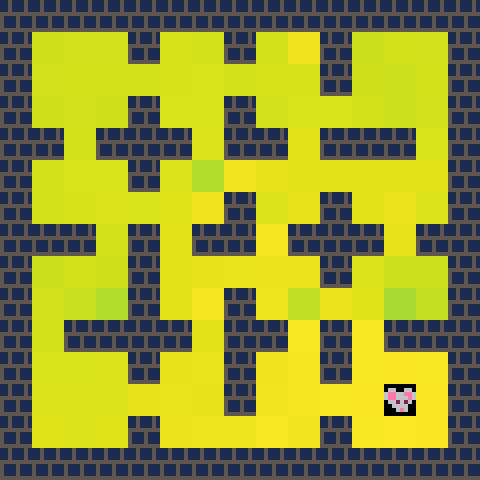}
    & \includegraphics[width=\heatmapwidth]{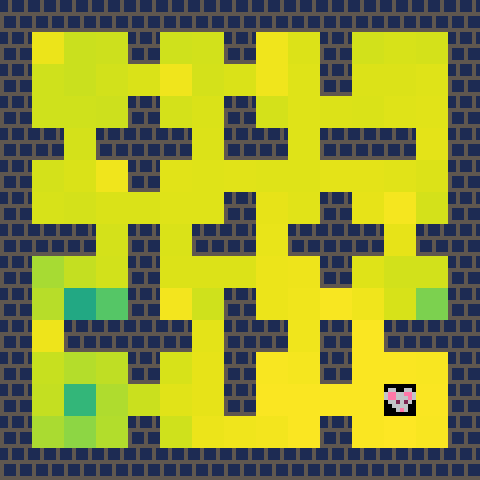}
    & \includegraphics[width=\heatmapwidth]{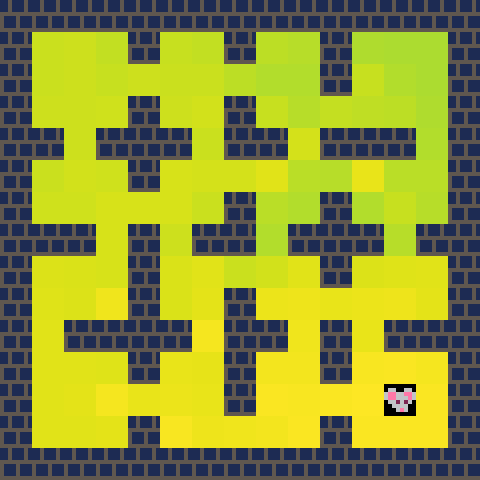}
    & \includegraphics[width=\heatmapwidth]{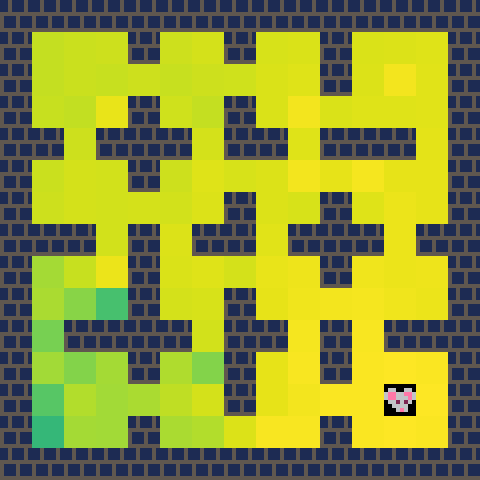}
    \\
    \raisebox{\halfheatmapwidth}{$1$}
    & \includegraphics[width=\heatmapwidth]{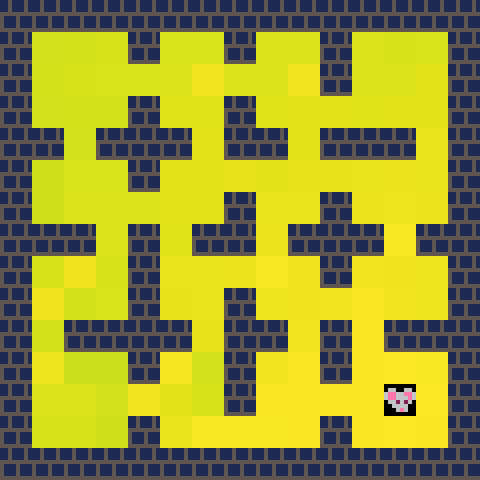}
    & \includegraphics[width=\heatmapwidth]{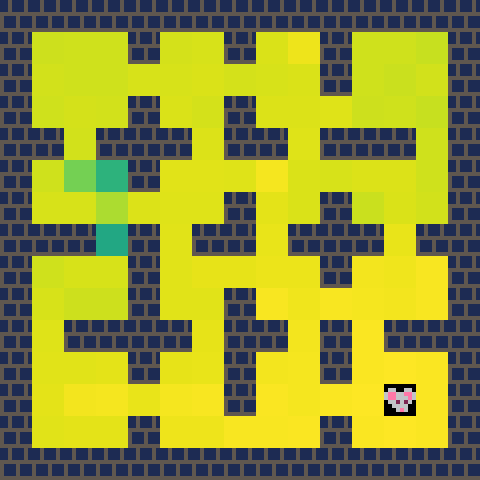}
    & \includegraphics[width=\heatmapwidth]{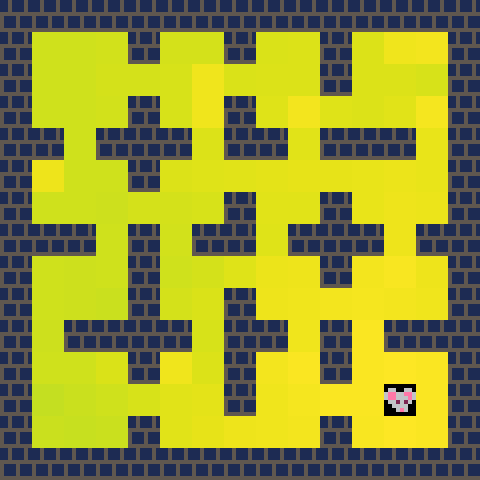}
    & \includegraphics[width=\heatmapwidth]{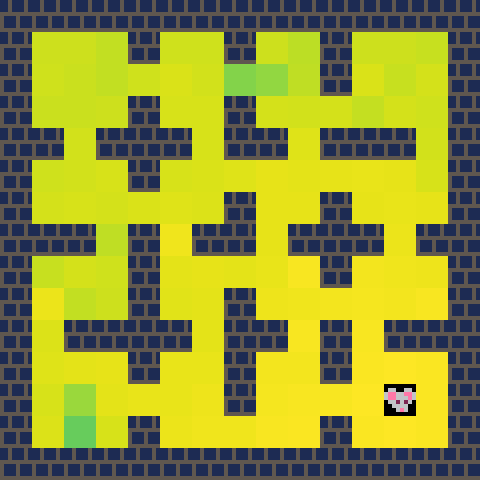}
    & \includegraphics[width=\heatmapwidth]{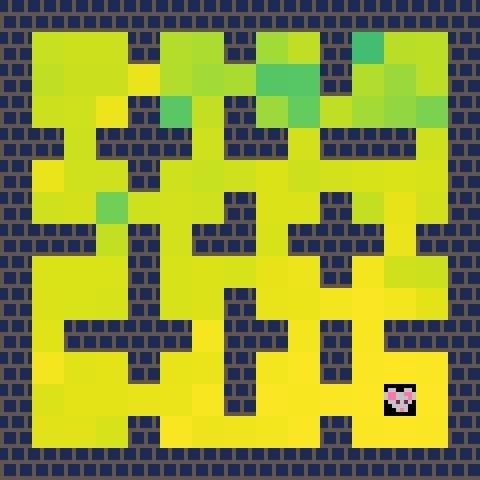}
    \\
    \midrule
    & \multicolumn{5}{c}{\includegraphics[trim={0pt 5pt 0pt 0pt}]{figures/plots/legends/colorbar-viridis.pdf}}
    \\
    & \multicolumn{5}{c}{Average return}
    \\
    \bottomrule
    \end{tabular}
    \endgroup
    \caption{\label{fig:heatmaps-part2}%
        Heatmap visualizations (part 2 of 2). See \cref{fig:heatmaps,apx:heatmaps} for details.
    }
\end{figure}

\clearpage

\section{Experiments with different edit distributions}
\label{apx:mutators}

As discussed in \cref{sec:algorithms}, ACCEL requires additionally specifying an \textbf{edit distribution} for mutating levels in the buffer. In this appendix, we explore the effect of different edit ditributions on the ability for ACCEL to mitigate goal misgeneralization.

\subsection{Elementary edits}

We assemble our edit distributions by sampling a sequence of \textbf{elementary edits} of the following three kinds.
\begin{itemize}
    \item \textbf{Classification preserving edits.}
        These edits change the level without changing whether the level is non-distinguishing or distinguishing.
        For example, in \env{cheese in the corner}, such an edit may randomly toggle a wall or move the mouse's starting position, but would not change the location of the cheese.
    \item 
        \textbf{Biased classification transforming edits.}
        These edits transform the level into a distinguishing level with probability $\alpha$ or an non-distinguishing level with probability $1-\alpha$,
        where $\alpha$ is the proportion of distinguishing levels in the underlying training distribution.
        For example, in \env{cheese in the corner}, a biased classification transforming edit may randomize the cheese position with probability $\alpha$ or move it to the corner with probability $1-\alpha$.
    \item \textbf{Unrestricted classification transforming edits.}
        These edits transformers the position of the cheese or the number of keys and chests uniformly at random given the level parameterization.
        For \env{cheese in the corner} and \env{cheese on a dish}, the cheese moves to a random position in the maze, which usually results in a distinguishing level.
        For \env{keys and chests}, we flip a coin to make either keys or chests sparse, and the other type of object dense.
\end{itemize}
We document the elementary edit distributions for each environment in full detail in \cref{apx:environments}.

\subsection{ACCEL variants}

In these terms, we list the ACCEL variant considered in the main text along with three additional variants of ACCEL with different edit distributions. We fix a hyperparameter $n$, the number of elementary edits to apply to each level (we use $n=12$).
\begin{enumerate}
    \item
        \textbf{Identity ACCEL \textnormal{(\ACCELid{}, simply ``ACCEL'' in main text)}.} 
        We make a sequence of $n$ random edits, all of which are classification preserving, resulting in the sequence of edits itself being classification preserving.
    \item
        \textbf{Constant ACCEL \textnormal{(\ACCELc{})}.}
        We make $n-1$ random classification preserving edits, followed by one biased classification transforming edit.
        Applying this operation to any distribution of levels results in a distribution with the same proportion of non-distinguishing and distinguishing levels as the underlying training distribution.
    \item
        \textbf{Binomial ACCEL \textnormal{(\ACCELbin{})}.}
        We make a sequence of $n$ random edits, each independently chosen to be either classification preserving (with probability $1-1/n$) or else biased classification transforming.
        If the sequence has only classification preserving edits (probability $(1-1/n)^n$) then it is classification preserving (like \ACCELid{}), otherwise the output is non-distinguishing with the same probability as a level sampled from the underlying training distribution (like \ACCELc{}).
    \item
        \textbf{Unrestricted ACCEL \textnormal{(\ACCELunr{})}.}
        We make a sequence of $n-1$ random classification preserving edits, followed by one unrestricted classification transforming edit.
        Applying this operation to any distribution of levels results in a distribution with a proportion of distinguishing levels that is independent of the parameter $\alpha$ that restricts access to distinguishing levels in the underlying training distribution.
\end{enumerate}

\ACCELc{} simulates restricted access to distinguishing levels. This variant is able to introduce new distinguishing levels through edit operations, however, its ability to replicate existing distinguishing levels is limited, since every time it edits a level, the mutated level reverts to an non-distinguishing level with probability $1-\alpha$.

\ACCELid{} (the variant studied in the main text) simulates a different kind of restriction, whereby we don't allow edits to turn non-distinguishing levels into distinguishing levels. However, we do allow edits to create new distinguishing levels by making similar copies of existing distinguishing levels in the buffer. Through this mechanism, \ACCELid{} can rapidly populate the buffer with distinguishing levels, providing this variant with additional capacity to amplify the small training signal from distinguishing levels (beyond simply curating levels, like in \RobustPLR{} or \ACCELc{}).

\ACCELbin{} samples elementary edits in a different way. The number of biased classification transforming edits included in the sequence follows a binomial distribution.
This means that with around $35\%$ probability, no biased classification transforming edits will be applied, and the edit will resemble an edit from \ACCELid{}.
Otherwise, with around $65\%$ probability, at least one biased classification transforming edit will be applied, and the overall edit will resemble one from \textnormal{\ACCELc{}}.
We thus expect the performance of this variant to be somewhere between that of \ACCELc{} and \ACCELid{}.

\ACCELunr{} is a baseline that simulates a situation where the edit distribution can be designed to explore the space of levels completely independently of the training distribution.
We expect this variant to be able to obtain much stronger performance comparable to using $\alpha=1$ in the training distribution, even when training with $\alpha=0$.

\subsection{Experimental results}

We train with the three new variants and compare performance to DR and \ACCELid{} (from the main text).
We report the results in
    \cref{fig:results-mutators-oracle} (oracle-latest regret estimator)
and
    \cref{fig:results-mutators-estimated} (max-latest regret estimator).
Note that we did not run
    \ACCELc{} with the oracle-latest regret estimator for \env{cheese on a dish},
or
    \ACCELc{} with the max-latest regret estimator for \env{keys and chests}.

Our results are in line with our central claim, that more advanced UED methods are more capable of mitigating goal misgeneralization.
\begin{itemize}
    \item
        As predicted, \ACCELc{} achieves lower performance than \ACCELid{}. This could be explained by the greater flexibility with which \ACCELid{} can amplify distinguishing levels through edits.
    \item 
        Moreover, \ACCELbin{} achieves performance somewhere between that of  \ACCELc{} and \ACCELid{}.
    \item 
        As predicted, \ACCELunr{} is able to populate the buffer with distinguishing levels regardless of the training distribution, even when $\alpha=0$, both with the oracle-latest regret estimator and with the max-latest regret estimator.
\end{itemize}
\ACCELbin{} with max-latest estimation achieves the same low performance as \ACCELid{} in \env{keys and chests} (as observed for \ACCELid{} in \cref{sec:results}).

\begin{figure}
    \centering
    \begingroup
    \setlength{\tabcolsep}{1pt}
    \begin{tabular}{cccc}
    \toprule
    & \bf \env{Cheese in the corner}
    & \bf \env{Cheese on a dish}
    & \bf \env{Keys and chests}
    \\
    & \small Seeds $N{=}8$, steps $T{=}200$M
    & \small Seeds $N{=}3$, steps $T{=}200$M
    & \small Seeds $N{=}5$, steps $T{=}400$M
    \\
    \midrule
    \rotatebox[origin=l]{90}{Avg. return (distg.)}
    & \includegraphics{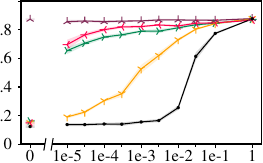}
    & \includegraphics{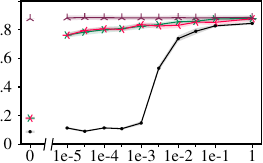}
    & \includegraphics{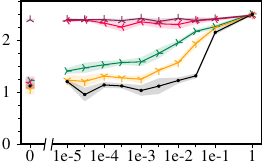} \\
    \midrule
    \multirow{2}{*}[7em]{\rotatebox[origin=l]{90}{Adv.\ prop.\ distg.\ (log)}}
    & \includegraphics{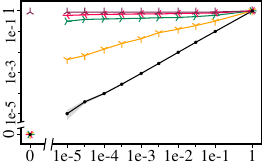}
    & \includegraphics{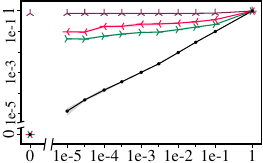}
    & \includegraphics{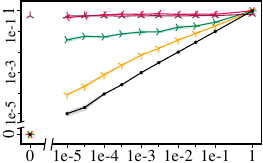} \\
    & \multicolumn{3}{c}{Proportion of distinguishing levels in underlying training distribution, $\alpha$ (augmented log scale)} \\
    \midrule
    & \multicolumn{3}{c}{\includegraphics{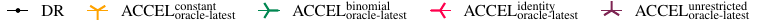}} \\[-0.5ex]
    \bottomrule
    \end{tabular}
    \endgroup
    \caption{\label{fig:results-mutators-oracle}%
        ACCEL variants with oracle-latest estimator.
        Each policy is trained on $T$ environment steps, using the indicated algorithm, with underlying training distribution $\distrTrain[\alpha]$.
        \textit{(1st row):}
            Average return over 512 steps for an evaluation batch of 256 distinguishing levels (cf.~\cref{fig:main-distinguishing-return}).
        \textit{(2nd row):}
            The proportion of distinguishing levels sampled from the adversary's buffer across training (cf.~\cref{fig:main-effective-alpha}).
        \textit{(Both):} 
            Mean over $N$ seeds, shaded region is one standard error.
            Note the split axes used to show zero on the log scale.
    }
\end{figure}

\begin{figure}
    \centering
    \begingroup
    \setlength{\tabcolsep}{1pt}
    \begin{tabular}{cccc}
    \toprule
    & \bf \env{Cheese in the corner}
    & \bf \env{Cheese on a dish}
    & \bf \env{Keys and chests}
    \\
    & \small Seeds $N{=}8$, steps $T{=}200$M
    & \small Seeds $N{=}3$, steps $T{=}200$M
    & \small Seeds $N{=}5$, steps $T{=}400$M
    \\
    \midrule
    \rotatebox[origin=l]{90}{Avg. return (distg.)}
    & \includegraphics{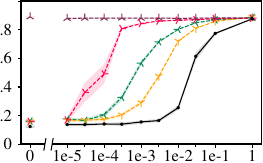}
    & \includegraphics{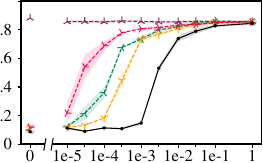}
    & \includegraphics{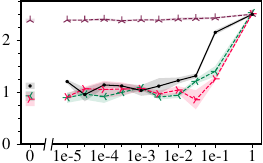} \\
    \midrule
    \multirow{2}{*}[7em]{\rotatebox[origin=l]{90}{Adv.\ prop.\ distg.\ (log)}}
    & \includegraphics{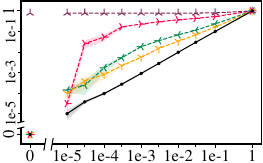}
    & \includegraphics{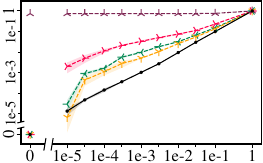}
    & \includegraphics{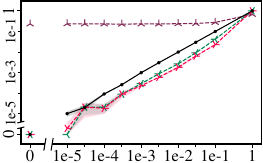} \\
    & \multicolumn{3}{c}{Proportion of distinguishing levels in underlying training distribution, $\alpha$ (augmented log scale)} \\
    \midrule
    & \multicolumn{3}{c}{\includegraphics{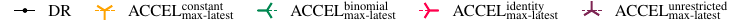}} \\[-0.5ex]
    \bottomrule
    \end{tabular}
    \endgroup
    \caption{\label{fig:results-mutators-estimated}%
        ACCEL variants with max-latest estimator.
        Each policy is trained on $T$ environment steps, using the indicated algorithm, with underlying training distribution $\distrTrain[\alpha]$.
        \textit{(1\textsuperscript{st} row):}
            Average return over 512 steps for an evaluation batch of 256 distinguishing levels (cf.~\cref{fig:main-distinguishing-return}).
        \textit{(2\textsuperscript{nd} row):}
            The proportion of distinguishing levels sampled from the adversary's buffer across training (cf.~\cref{fig:main-effective-alpha}).
        \textit{(Both):} 
            Mean over $N$ seeds, shaded region is one standard error.
            Note the split axes used to show zero on the log scale.
    }
\end{figure}

\clearpage

\section{The maximin expected value objective is susceptible to goal misgeneralization}
\label{apx:maximin}

In this section, we discuss an alternative strategy to minimax expected regret for selecting robust policies.
Namely, we consider the \textbf{maximin expected value} (MMEV) training objective, whereby one seeks a policy that achieves the highest possible expected value (return) on a worst-case level distribution
    $\MMEVDistr
    \in
    \argmin_{\distr'\in\LevelDistrs}
        \Return{\RewardFunction}{\distr'}{\Policy}
    $.

\citet{Dennis+2020} argues that the MMEV objective fails to induce robustness in an environment where the optimal value of each level varies within the level space. This is because the agent has no incentive to improve performance in any level above the maximum performance in worst-case levels. This same obstacle prevents MMEV from inducing robustness to goal misgeneralization, even though a policy that pursues a proxy goal in distinguishing levels will achieve low return in these levels.

In this appendix, we show theoretically that MMEV-based methods allow for goal misgeneralization in environments with levels with low maximum value. Moreover, we show empirically that MMEV-based training methods fail to mitigate goal misgeneralization in our environments. Indeed, they fail to produce policies that perform capably in any levels.

\subsection{Theoretical results}
We show our results for a perfect MMEV agent. It is easy to extend the result in the case where the agent is only $\eps$-optimal---the performance on deployment levels could be suboptimal by a factor $\eps$ when such is allowed by the levels seen in deployment. The same argument holds for the adversary.
\begin{apxdefinition}[MMEV policy]\label{def:approx-mmev}
    Consider
        an UMDP $\TupleUMDP$,
        a goal $\RewardFunction$.
    The \emph{MMEV policy set} is then
    \begin{equation*}
        \MMEVOptimalPolicies[]{\RewardFunction}
        =
         \argmax_{\pi\in\AllPolicies}
            \min_{\distr \in \LevelDistrs}
                \Return{\RewardFunction}{\distr}{\Policy}.
    \end{equation*}
\end{apxdefinition}

\begin{apxdefinition}[MMEV adversary]\label{def:approx-mmev-adv}
    Consider
        an UMDP $\TupleUMDP$,
        a goal $\RewardFunction$, 
        an optimal policy $\pi^*$ in every level $\level \in \LevelSpace $.
    The \emph{MMEV adversary strategy} is then
    \begin{equation*}
    \MMEVDistr \in \argmin_{\distr'\in\LevelDistrs}
             \Return{\RewardFunction}{\distr'}{\OptimalPolicy}
    \end{equation*}
\end{apxdefinition}

Crucially, the MMEV adversary only cares about minimizing the return of the agent. Given this, the adversary will play levels in which \emph{any} agent would achieve the minimum return possible. These are, for example, impossible levels if such exists in the level space.

Let's now define some additional machinery we will need for the proof:

\begin{apxdefinition}[$\alpha$-minimum achievable return]\label{def:minalphareturn}
Given a level $\theta \in \Theta$ and a threshold $\alpha$, define 
\[
c(\level, \alpha) = \min \ReturnSymbol^{\RewardFunction}_{\alpha}(\level)
\]
where $\ReturnSymbol^{\RewardFunction}_{\alpha}(\level) = \{ x \in \Reals \mid \exists \pi \in \AllPolicies, \text{such that}~ \Return{\RewardFunction}{\level}{\Policy} \geq \alpha ~\text{and}~ \Return{\RewardFunction}{\level}{\Policy} = x \}$.
\end{apxdefinition}

As mentioned before, the intuition is that an MMEV adversary will play those levels where the minimum return is achieved by any policy. Thus, any agent trained on those levels will mostly be able to reach that performance on any level in deployment. Thus, this results in suboptimal performance. This is clear to see in the case where impossible levels exist in the level space of the adversary. An agent trained on those would constitute an MMEV policy, although it basically consists of a policy not able to pursue the correct goal.

We note that $c$ as just introduced, can be thought as a \textit{return floor}.
Readers familiar with the decision theory and UED literature can think of this as an (inverse) analog of the \textit{regret floor} or \textit{regret stagnation}~\cite{Beukman+2024}. While the latter quantifies the minimum amount of regret any policy will suffer against a level distribution, we quantify the minimum amount of return a policy will incur.

\begin{restatable}[MMEV is susceptible to goal misgeneralization]{theorem}{MMEVTheorem}
\label{thm:mmev-susceptibility}
Consider
    an UMDP $\TupleUMDP$,
    a pair of goals $\RewardFunction, \ProxyRewardFunction$,
    a proxy-distinguishing distribution shift $\TupleShift$,
and
    approximation threshold $\eps\geq 0$.
    Then, 
\begin{equation*}
    \exists \MMEVPolicy \in \MMEVOptimalPolicies[]{\RewardFunction}
    \text{\ such that\ }
    \Return{\RewardFunction}{\distrTest}{\MMEVPolicy}
    =
     c(\distrTest, \alpha)
\end{equation*}

where $\alpha = \Return{\RewardFunction}{\MMEVDistr}{\Policy^*} $ and  $c(\distrTest, \alpha) = \Expect[\level\sim\distrTest]{c( \level, \alpha)} $ 
\end{restatable}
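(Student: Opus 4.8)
The plan is to identify the MMEV value with a minimax quantity, exhibit a level-conditioned MMEV policy that behaves as pessimistically as the objective permits in every level, and then read off its deployment return. First I would pin down $\alpha$. A policy $\OptimalPolicy$ that is optimal in every level exists by \cref{prop:exchange} and satisfies $\Return{\RewardFunction}{\level}{\OptimalPolicy}=\max_{\Policy\in\AllPolicies}\Return{\RewardFunction}{\level}{\Policy}$, so the adversary of \cref{def:approx-mmev-adv} minimizes the linear functional $\distr\mapsto\Expect[\level\sim\distr]{\max_{\Policy}\Return{\RewardFunction}{\level}{\Policy}}$ over the simplex $\LevelDistrs$, attaining its minimum at a Dirac on a lowest-ceiling level $\level_0$. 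Hence $\alpha=\min_{\level\in\LevelSpace}\max_{\Policy\in\AllPolicies}\Return{\RewardFunction}{\level}{\Policy}$, and in particular $\alpha\le\max_{\Policy}\Return{\RewardFunction}{\level}{\Policy}$ for every $\level$.

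Next I would establish that, in each fixed finite level with $\DiscountRate\in(0,1)$, the set of achievable returns is the full interval $[\min_{\Policy}\Return{\RewardFunction}{\level}{\Policy},\ \max_{\Policy}\Return{\RewardFunction}{\level}{\Policy}]$: the map $\Policy\mapsto\Return{\RewardFunction}{\level}{\Policy}$ is continuous on the convex, compact policy set, so restricting it to a convex path from a minimizing to a maximizing policy and invoking the intermediate value theorem fills in every intermediate value. Together with $\alpha\le\max_{\Policy}\Return{\RewardFunction}{\level}{\Policy}$, this identifies the return floor of \cref{def:minalphareturn} as $c(\level,\alpha)=\max(\alpha,\ \min_{\Policy}\Return{\RewardFunction}{\level}{\Policy})$, i.e.\ the smallest achievable return that is at least $\alpha$.

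I would then construct $\MMEVPolicy$ by exploiting that the policy observes the level, so its per-level behavior may be chosen independently: in each $\level$ pick a behavior realizing return exactly $c(\level,\alpha)$ (possible by the interval property). Then $\min_{\level}\Return{\RewardFunction}{\level}{\MMEVPolicy}=\min_{\level}c(\level,\alpha)=\alpha$, the last equality because $\level_0$ has ceiling $\alpha$, hence floor at most $\alpha$ and $c(\level_0,\alpha)=\alpha$. To certify that $\MMEVPolicy$ is MMEV-optimal, I bound the value from above: for any $\Policy$, $\min_{\level}\Return{\RewardFunction}{\level}{\Policy}\le\Return{\RewardFunction}{\level_0}{\Policy}\le\max_{\Policy'}\Return{\RewardFunction}{\level_0}{\Policy'}=\alpha$, so $\max_{\Policy}\min_{\level}\Return{\RewardFunction}{\level}{\Policy}\le\alpha$; since $\min_{\distr}\Return{\RewardFunction}{\distr}{\Policy}=\min_{\level}\Return{\RewardFunction}{\level}{\Policy}$ by linearity and $\MMEVPolicy$ attains the bound, we get $\MMEVPolicy\in\MMEVOptimalPolicies[]{\RewardFunction}$. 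Linearity of expected return then yields $\Return{\RewardFunction}{\distrTest}{\MMEVPolicy}=\Expect[\level\sim\distrTest]{c(\level,\alpha)}=c(\distrTest,\alpha)$, as claimed; relaxing to an $\eps$-optimal agent and adversary only perturbs each of these (in)equalities by $\eps$.

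The main obstacle will be the interval-structure step: converting ``$\alpha$ lies between a level's floor and ceiling'' into ``$\alpha$ is exactly realizable by some behavior in that level,'' which relies on continuity of the discounted return in the policy together with an intermediate-value argument along a convex path in policy space, and on checking this can be carried out simultaneously across levels so that the single level-conditioned policy $\MMEVPolicy$ realizes $c(\level,\alpha)$ everywhere at once. A minor secondary check is confirming the minimizing adversary distribution is a vertex of the simplex, which is where \cref{prop:exchange} and linearity are used.
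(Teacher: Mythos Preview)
Your approach matches the paper's: both construct a level-conditioned policy achieving exactly $c(\level,\alpha)$ in every level, verify it is MMEV-optimal, and compute the deployment return by linearity. You add rigor the paper omits---namely the continuity/IVT argument showing the intermediate value $c(\level,\alpha)$ is actually realizable in each level, and an explicit upper bound $\max_\Policy\min_\level\Return{\RewardFunction}{\level}{\Policy}\le\alpha$ certifying MMEV-optimality---whereas the paper simply asserts such a policy exists and is ``clearly'' MMEV (its cosmetic split into $\supp\MMEVDistr$ versus its complement coincides with your uniform prescription, since every level in $\supp\MMEVDistr$ has ceiling exactly $\alpha$).
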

\begin{proof}
Consider $\OptimalPolicy$ as the optimal policy across all levels $\level \in \Theta$. The adversary will play a strategy 
\[
\MMEVDistr \in \argmin_{\distr'\in\LevelDistrs}
             \Return{\RewardFunction}{\distr'}{\OptimalPolicy}
\]

In words, the adversary will play levels in which the lowest possible score is achieved by the optimal policy.

We now need to construct our policy $\MMEVPolicy$. Take the policy such that
\[
\Return{\RewardFunction}{\level}{\MMEVPolicy} =
\begin{cases}
\Return{\RewardFunction}{\level}{\pi^*}, & \text{if } \theta \in \supp\MMEVDistr, \\
c(\level, \alpha) , & \text{if }\theta \notin \supp\MMEVDistr.
\end{cases}
\]

We note that we always have $c(\level, \alpha) \geq \Return{\RewardFunction}{\level'}{\pi^*} $, where $\theta \in \LevelSpace, \theta' \in \supp\MMEVDistr$ by our definitions.
In words, take the MMEV policy such that it achieves the maximum return possible on levels played by the adversary, and the smallest available return on all other levels. Note that the smallest available return on levels not played by the adversary is necessarily greater than the highest one possibly achievable on levels played by the adversary.
A policy that achieves this return is clearly in an MMEV policy. So, the following holds

\[
\Return{\RewardFunction}{\distr'}{\MMEVPolicy}
    =
    c(\distr', \alpha) 
.\]
Taking $\distr'=\distrTest$ we get 
\[
\Return{\RewardFunction}{\distrTest}{\MMEVPolicy}
    =
    c(\distrTest, \alpha) 
.\]
as desired.
\end{proof}

In our theorem, we proved that an MMEV agent will possibly achieve returns that are at most the maximum ones achievable in levels played by the adversary. If very low return levels exists, this policy would then possibly goal misgeneralize at test time if evaluated on levels where a high return is possible.

\subsection{Training methods and experimental results}

In this section, we outline our empirical evaluation of MMEV-based training methods for mitigating goal misgeneralization. We adapt three UED methods: \RobustPLR{} along with two ACCEL variants (\ACCELid{} and \ACCELbin{}, see \cref{apx:mutators}). These UED methods were originally designed for MMER training, but we can convert their regret-maximizing adversaries into return-minimizing adversaries simply by replacing the regret estimator used to refine the buffer with a negative expected return estimator,
\begin{equation}\label{eq:maximin-estimator}
    \MMActor{\RewardFunction}{\level}{\Policy}
    =
    - \hat{V}^{\RewardFunction}_{\textnormal{latest}}(\Policy; \level)
    ,
\end{equation}
where $\hat{V}^{\RewardFunction}_{\textnormal{latest}}(\Policy;\level)$ is the empirical average return achieved on $\level$ in the latest batch of rollouts with the current policy $\Policy$.

\Cref{fig:results-maximin} shows the performance of trained policies in \env{cheese in the corner} and \env{keys and chests} for non-distinguishing and distinguishing levels.
As expected, these training methods fail to mitigate goal misgeneralization, and even fail to induce robust performance in non-distinguishing levels.

We observe that these adversaries rapidly fill their buffers with levels with zero estimated value, indicating that the adversaries are working as expected.
We hypothesize that the training failure is primarily due to the adversary finding levels in which the policy not only receives low expected value, but never receives any nonzero reward and thus obtains no training signal.

\begin{figure}
    \centering
    \begingroup
    \setlength{\tabcolsep}{1pt}
    \begin{tabular}{cccc}
    \toprule
    &
    & \bf \env{Cheese in the corner}
    & \bf \env{Keys and chests}
    \\
    & 
    & \small Seeds $N{=}5$, steps $T{=}200$M
    & \small Seeds $N{=}5$, steps $T{=}400$M \\
    \midrule
    \multirow{2}{*}[2em]{\rotatebox[origin=l]{90}{Average return}}
    & \multirow{1}{*}[6em]{\rotatebox[origin=l]{90}{Distinguishing}}
    & \includegraphics{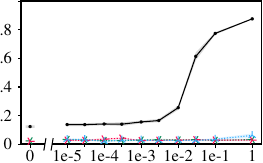}
    & \includegraphics{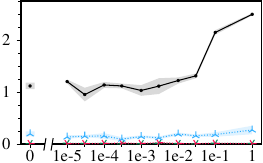}
    \\
    & \multirow{1}{*}[6em]{\rotatebox[origin=l]{90}{Non-distinguishing}}
    & \includegraphics{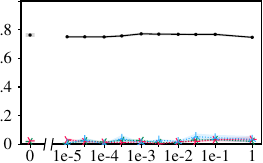}
    & \includegraphics{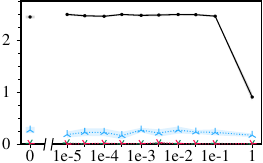}
    \\
    & & \multicolumn{2}{c}{Prop.\ distg.\ levels in underlying training distr., $\alpha$ (aug.\ log)}
    \\
    \midrule
    & & \multicolumn{2}{c}{\includegraphics{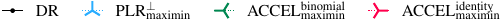}}
    \\[-0.5ex]
    \bottomrule
    \end{tabular}%
    \endgroup
    \caption{\label{fig:results-maximin}%
        Maximin training methods.
        Each policy is trained on $T$ environment steps, using the indicated algorithm, with underlying training distribution $\distrTrain[\alpha]$.
        Average return over 512 steps for an evaluation batch of 256 levels (top row: distinguishing levels, bottom row: non-distinguishing levels).
        Mean over $N$ seeds, shaded region is one standard error. Note the split axes used to show zero on the log scale.
        Note the drop in performance for DR in non-distinguishing \env{keys and chests} levels at $\alpha=1$ can be explained by noting that non-distinguishing levels are now out of distribution given this training distribution.
    }
\end{figure}

\begin{figure}
    \centering
    \begingroup
    \setlength{\tabcolsep}{1pt}
    \begin{tabular}{rc}
    \toprule
    & \bf \env{Cheese in the corner}
    \\
    & \small Seeds $N{=}5$, steps $T{=}200$M
    \\
    \midrule
    \multirow{2}{*}[6.5em]{\begin{minipage}{6em}\RaggedRight
        Average proportion of unsolvable levels in adversary's buffer
    \end{minipage}}
    & \includegraphics{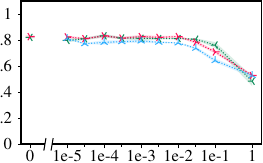}
    \\
    & \begin{minipage}{12.5em}\RaggedRight
        Proportion of distinguishing levels in underlying training distribution, $\alpha$ (augmented log scale)
    \end{minipage}
    
    \\
    \midrule
    \multicolumn{2}{c}{\includegraphics{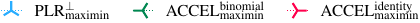}}
    \\[-0.5ex]
    \bottomrule
    \end{tabular}%
    \endgroup
    \caption{\label{fig:maximin-solvability}%
        Maximin training methods.
        Each policy is trained on $T$ environment steps, using the indicated algorithm, with underlying training distribution $\distrTrain[\alpha]$.
        Average proportion of unsolvable levels in the adversary's buffer over training.
        Mean over $N$ seeds, shaded region is one standard error. Note the split axes used to show zero on the log scale.
    }
\end{figure}

In the extreme case, the adversary could populate the buffer with levels that have zero \emph{maximum} value, preventing any policy from obtaining nonzero reward.
In \env{cheese in the corner}, levels in which the cheese position is unreachable from the mouse spawn position have zero maximum value.
In \cref{fig:maximin-solvability}, we plot the average proportion of such ``unsolvable'' levels in the adversary's buffer over training for \env{cheese in the corner}.
We find that this proportion is around $80\%$ for most training distributions, which is much higher than the baseline value of around $18\%$ of levels sampled from $\distrNonDistg$ that are unsolvable. As the proportion of distinguishing levels increases, the average proportion of unsolvable levels decreases slightly, to around $50\%$ for $\alpha=0$. Note that the baseline value for $\distrDistg$ is also lower (around $7\%$) since it's easier for walls to obstruct the cheese when it's in the corner than when it is in an arbitrary position in the interior of the grid. Therefore we hypothesize that the adversaries have a slightly harder time finding unsolvable levels as $\alpha$ increases. However, unsolvable levels still occupy a majority of the buffer.
 
\clearpage

\section{Experiments with increased training length}
\label{apx:moresteps}

In most of our experiments, we compare the performance of algorithms after a fixed amount of training time, and find that MMER-based training methods typically outperform MEV-based training methods for a fixed training budget. We have shown that this is because a regret-maximizing adversary can amplify the proportion of distinguishing levels compared to sampling from a fixed underlying training distribution.

Another method of increasing the agent's experience in distinguishing levels is to train for longer. In this appendix, we extend training times in the \env{cheese on a dish} training environment (where DR was most robust to goal misgeneralization).

\Cref{fig:results-moresteps} shows the results. We find that training for more than 200 million environment steps with a fixed training method slightly mitigates goal misgeneralization. In particular, for $\alpha = 1\text{e-}3$, DR gradually stops misgeneralizing after 200 million steps. 
The result is qualitatively consistent with \cref{thm:mev-susceptibility}, since increasing training time should have the effect of decreasing the optimization threshold.

However, for lower $\alpha$ values, further training shows diminishing returns, and even 1,500 million steps of DR training is insufficient to mitigate goal misgeneralization to the extent achieved by most UED methods within 200 million steps.
This suggests that current UED methods are both more efficient and also qualitatively more effective at mitigating goal misgeneralization in this environment.

\begin{figure}[h!]
    \centering
    \begingroup
    \setlength{\tabcolsep}{1pt}
    \begin{tabular}{cccc}
    \toprule
    \multicolumn{4}{c}{\bf \env{Cheese on a dish}}
    \\
    \midrule
    & \multirow{2}{*}{\bf DR}
    & \bf \RobustPLR{}
    & \bf \ACCELid{}
    \\
    & 
    & \multicolumn{2}{c}{Regret estimator: oracle-latest (1\textsuperscript{st} row), max-latest (2\textsuperscript{nd} row)}
    \\
    \midrule
    \multirow{2}{*}[6.5em]{\rotatebox[origin=l]{90}{Average return (ditsinguishing levels)}}
    & \multirow{2}{*}[6.98em]{\includegraphics{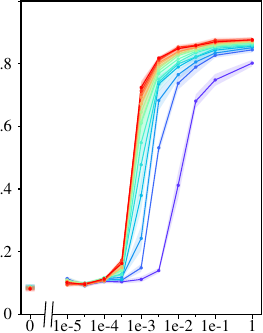}}
    & \includegraphics{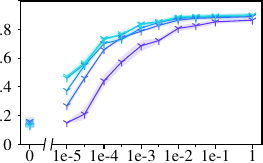}
    & \includegraphics{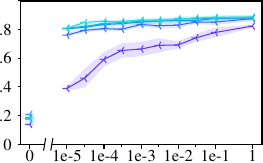}
    \\
    & 
    & \includegraphics{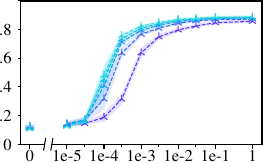}
    & \includegraphics{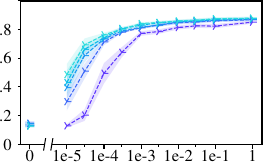}
    \\
    & \multicolumn{3}{c}{Proportion of distinguishing levels in underlying training distribution, $\alpha$ (augmented log scale)}
    \\
    \midrule
    & \multicolumn{3}{c}{\includegraphics[trim={0pt 3pt 0pt 0pt}]{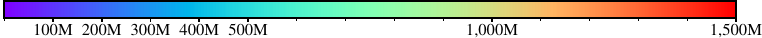}}
    \\
    & \multicolumn{3}{c}{Environment steps used for training}
    \\
    \bottomrule
    \end{tabular}%
    \endgroup
    \caption{\label{fig:results-moresteps}%
        Training for more environment steps in \env{cheese on a dish}.
        Each policy is trained for 500M environment steps (1,500M for DR), using the indicated algorithm, with underlying training distribution $\distrTrain[\alpha]$.
        Every 100M steps, we evaluate the average return over 512 steps for an evaluation batch of 256 distinguishing levels (cf.~\cref{fig:main-distinguishing-return}).
        Mean over $3$ seeds, shaded region is one standard error.
        Note the split axes used to show zero on the log scale.
    }
\end{figure}

\clearpage

\section{Experiments with a different distinguishing level generator}
\label{apx:robustness-corner}

In this section, we consider an alternative procedural level generator for distinguishing levels in the \env{cheese in the corner} environment. Our main experiments use a distinguishing level generator that places the cheese anywhere in the maze.

We consider a restricted distinguishing level generator that positions the cheese only within a region of size $c \times c$ surrounding the top-left corner, for varying $c$ (the non-distinguishing generator would be recovered with $c=1$, and the original, unrestricted distinguishing level generator would be recovered with $c=13$).

\Cref{fig:robustness_corner} shows the return is evaluated on unrestricted distinguishing levels, where the cheese is positioned anywhere in the maze.
We find that the UED methods are able to amplify the proportion of restricted distinguishing levels and in some cases mitigate goal misgeneralization.
DR achieves low return across all corner sizes $c$.

\begin{figure}[ht]
  \centering
  \begingroup
  \setlength{\tabcolsep}{1pt}
  \begin{tabular}{ccccc}
    \toprule
    \multicolumn{5}{c}{\textbf{\env{Cheese in the corner},} restricted training distribution}
    \\
    \midrule
    & $\alpha = 3\text{e-}4$
    & $\alpha = 1\text{e-}3$
    & $\alpha = 3\text{e-}3$
    & $\alpha = 1\text{e-}2$
    \\
    & \small Seeds $N{=}1$
    & \small Seeds $N{=}1$
    & \small Seeds $N{=}3$
    & \small Seeds $N{=}1$
    \\
    \midrule
    \rotatebox[origin=l]{90}{Avg.\ return (ditsg.)}
    & \includegraphics{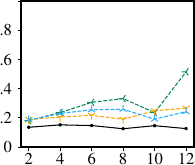}
    & \includegraphics{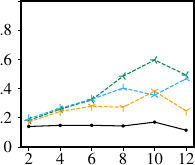}
    & \includegraphics{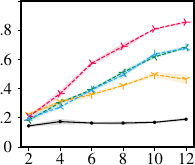}
    & \includegraphics{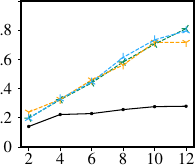}
    \\
    \midrule
    \multirow{2}{*}[7em]{\rotatebox[origin=l]{90}{Adv.\ prop.\ ditsg. (log)}}
    & \includegraphics{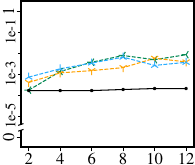}
    & \includegraphics{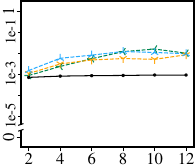}
    & \includegraphics{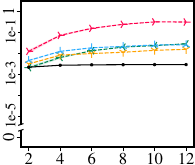}
    & \includegraphics{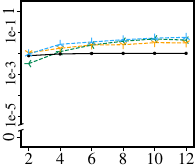}
    \\
    & \multicolumn{4}{c}{Distinguishing corner region size, $c$}
    \\
    \midrule
    & \multicolumn{4}{c}{\includegraphics{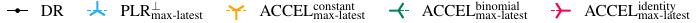}}
    \\[-0.5ex]
    \bottomrule
  \end{tabular}
  \endgroup
  \caption{\label{fig:robustness_corner}%
    \textbf{Training with varying distinguishing level generators.}
    Each policy is trained on $200$ million environment steps, using the indicated algorithm, with underlying training distribution
        $\distrTrain[\alpha,c] = (1-\alpha) \distrNonDistg + \alpha \distrDistg^{c}$,
    where $\distrDistg^{c}$ is a procedural level generator that positions the cheese in the top-left $c \times c$ region of the maze.
    \textit{(1st row):}
        Average return over 512 steps for an evaluation batch of 256 distinguishing levels (cf.~\cref{fig:main-distinguishing-return}).
    \textit{(2nd row):}
        The proportion of distinguishing levels sampled from the adversary's buffer across training (cf.~\cref{fig:main-effective-alpha}).
    \textit{(Both):} 
        Mean over $N$ seeds, shaded region is one standard error.
        Note the split vertical axes used to show zero on the log scale.
}
\end{figure}

\clearpage

\section{Experiments with different observations}
\label{apx:robustness-dish}

In this section, we vary the way we encode observations for the policy in \env{cheese on a dish} and explore its effect on goal misgeneralization.

As discussed in \cref{apx:environments:dish}, the proxy goal and the true goal are symmetric in this environment, other than the fact that the position of the dish is redundantly represented across multiple channels in the Boolean grid observation.
In the main text, we use $D=6$ channels to encode the dish position (compared to $1$ channel for the cheese position).
The additional channels break a symmetry and create a slight inductive bias in favor of a policy that pursues the proxy goal.

We conduct an experiment where we vary the number of channels and see what affect it has on goal misgeneralization. \Cref{fig:robustness_pile} shows the results.
We see that with one channel coding the dish position, all methods (including DR) are somewhat robust to goal misgeneralization, even at small $\alpha$ values.
Additional channels significantly increase DR's susceptibility to goal misgeneralization. On the other hand, UED methods remain able to identify and amplify the training signal from rare distinguishing levels, while UED methods retain comparably similar performance.

The extent of amplification remains essentially constant with $D$. We hypothesize that the number of channels does not stop the adversary from noticing high-regret distinguishing levels, though it may affect how the policy responds.

\begin{figure}[ht]
  \centering
  \begingroup
  \setlength{\tabcolsep}{1pt}
  \begin{tabular}{ccccc}
    \toprule
    \multicolumn{5}{c}{\textbf{\env{Cheese on a dish},} varying observations}
    \\
    \midrule
    & $\alpha = 3\text{e-}4$
    & $\alpha = 1\text{e-}3$
    & $\alpha = 3\text{e-}3$
    & $\alpha = 1\text{e-}2$
    \\
    & \small Seeds $N{=}3$
    & \small Seeds $N{=}3$
    & \small Seeds $N{=}3$
    & \small Seeds $N{=}3$
    \\
    \midrule
    \rotatebox[origin=l]{90}{Avg.\ return (ditsg.)}
    & \includegraphics{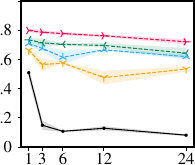}
    & \includegraphics{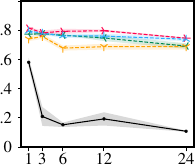}
    & \includegraphics{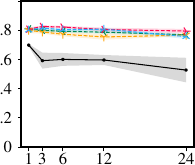}
    & \includegraphics{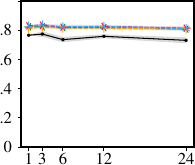}
    \\
    \midrule
    \multirow{2}{*}[7em]{\rotatebox[origin=l]{90}{Adv.\ prop.\ ditsg. (log)}}
    & \includegraphics{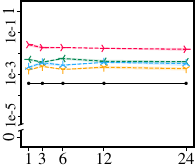}
    & \includegraphics{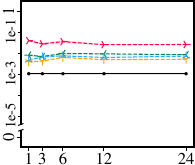}
    & \includegraphics{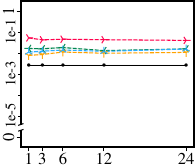}
    & \includegraphics{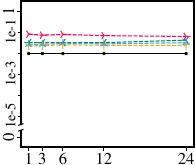}
    \\
    & \multicolumn{4}{c}{Number of channels coding the dish position, $D$}
    \\
    \midrule
    & \multicolumn{4}{c}{\includegraphics{figures/plots/legends/robustness.pdf}}
    \\[-0.5ex]
    \bottomrule
  \end{tabular}
  \endgroup
  \caption{\label{fig:robustness_pile}%
    \textbf{Training with observations with varying emphasis on the dish.}
    Each policy is trained on $200$ million environment steps, using the indicated algorithm, with underlying training distribution
        $\distrTrain[\alpha]$.
    We vary the number of channels (features) encoding the dish position, $D$.
    \textit{(1st row):}
        Average return over 512 steps for an evaluation batch of 256 distinguishing levels (cf.~\cref{fig:main-distinguishing-return}).
    \textit{(2nd row):}
        The proportion of distinguishing levels sampled from the adversary's buffer across training (cf.~\cref{fig:main-effective-alpha}).
    \textit{(Both):} 
        Mean over $N$ seeds, shaded region is one standard error.
        Note the split vertical axes used to show zero on the log scale.
}
\end{figure}

\stopcontents[appendix]

\end{document}